\def\eqref#1{equation~(\ref{#1})}
\def\Appxref#1{Appx.~\ref{#1}}
\def\Algref#1{Algorithm~\ref{#1}}
\def\1{\bm{1}}
\DeclareMathAlphabet{\mathsfit}{\encodingdefault}{\sfdefault}{m}{sl}
\SetMathAlphabet{\mathsfit}{bold}{\encodingdefault}{\sfdefault}{bx}{n}
\DeclareMathOperator*{\argmin}{arg\,min}
\newtheorem*{rep@theorem}{\rep@title}
\newcommand{\newreptheorem}[2]{%
\newenvironment{rep#1}[1]{%
 \def\rep@title{#2 \ref{##1}}%
 \begin{rep@theorem}}%
 {\end{rep@theorem}}}
\newtheorem{theorem}{Theorem}
\newtheorem{proposition}{Proposition}
\newtheorem{lemma}{Lemma}
\newtheorem{definition}{Definition}
\newcommand{\trace}{tr}
\definecolor{green}{rgb}{0.0, 0.42, 0.24} %
\definecolor{orange}{rgb}{0.8, 0.33, 0.} %
\definecolor{blue}{rgb}{0.16, 0.32, 0.75} %
\definecolor{cobalt}{rgb}{0.0, 0.28, 0.67} %
\definecolor{egred}{rgb}{1.0, 0.25, 0.25}
\definecolor{codegreen}{rgb}{0,0.6,0}
\definecolor{codegray}{rgb}{0.5,0.5,0.5}
\definecolor{codepurple}{rgb}{0.58,0,0.82}
\definecolor{backcolour}{rgb}{0.95,0.95,0.92}
\definecolor{highlight}{rgb}{0.0, 0.0, 0.0} %
\lstdefinestyle{mystyle}{
    backgroundcolor=\color{backcolour},   
    commentstyle=\color{codegreen},
    keywordstyle=\color{magenta},
    numberstyle=\tiny\color{codegray},
    stringstyle=\color{codepurple},
    basicstyle=\ttfamily\footnotesize,
    breakatwhitespace=false,         
    breaklines=true,                 
    captionpos=b,
    escapeinside={\%*}{*)},
    keepspaces=true,                 
    numbers=left,                    
    numbersep=5pt,                  
    showspaces=false,                
    showstringspaces=false,
    showtabs=false,                  
    tabsize=2
}
\newcommand{\sgeg}{$\gamma$-EigenGame}
\newcommand{\geg}{$\Gamma$-EigenGame}
\newcommand{\gep}{SGEP}
\title{The \textcolor{highlight}{Symmetric} Generalized Eigenvalue\\Problem as a Nash Equilibrium}
\author{%
  Ian Gemp\thanks{Asterisk denotes equal contribution.} \\
  DeepMind\\
  London, UK\\
  \texttt{imgemp@deepmind.com} \\
   \And
   Charlie Chen\footnotemark[1] \\
   DeepMind\\
   London, UK\\
   \texttt{ccharlie@deepmind.com} \\
   \And
   Brian McWilliams\thanks{Work done while at DeepMind.} \\
   Google Research\\
   Z\"{u}rich, Switzerland\\
   \texttt{bmcw@google.com} \\
}
\begin{document}

\maketitle

\begin{abstract}
The symmetric generalized eigenvalue problem (\gep{}) is a fundamental concept in numerical linear algebra. It captures the solution of many classical machine learning problems such as canonical correlation analysis, independent components analysis, partial least squares, linear discriminant analysis, principal components and others. Despite this, most general solvers are prohibitively expensive when dealing with \emph{streaming data sets} (i.e., minibatches) and research has instead concentrated on finding efficient solutions to specific problem instances. In this work, we develop a game-theoretic formulation of the top-$k$ \gep{} whose Nash equilibrium is the set of generalized eigenvectors. We also present a parallelizable algorithm with guaranteed asymptotic convergence to the Nash. Current state-of-the-art methods require $\mathcal{O}(d^2k)$ runtime complexity per iteration which is prohibitively expensive when the number of dimensions ($d$) is large. We show how to modify this parallel approach to achieve $\mathcal{O}(dk)$ runtime complexity. Empirically we demonstrate that this resulting algorithm is able to solve a variety of \gep{} problem instances including a large-scale analysis of neural network activations.
\end{abstract}

\section{Introduction}\label{sec:intro}

This work considers the symmetric generalized eigenvalue problem (\gep{}),
\begin{align}
    Av &= \lambda Bv
\end{align}
where $A$ is symmetric and $B$ is symmetric, positive definite.
While the \gep{} is not a common sight in modern machine learning literature, remarkably, it underlies several fundamental problems. Most obviously, when $A=X^\top X$, $B=I$, and $X$ is a data matrix, we recover the ubiquitous SVD/PCA. However, by considering other forms of $A$ and $B$ we recover other well known problems. In general, we assume $A$ and $B$ consist of sums or expectations over outerproducts (e.g., $X^\top Y$ or $\mathbb{E}[xy^\top]$) to enable efficient matrix-vector products.
These include, but are not limited to:

\paragraph{Canonical Correlation Analysis (CCA):} Given a dataset of \emph{paired} observations (or views) $x \in \mathbb{R}^{d_x}$ and $y \in \mathbb{R}^{d_y}$ (e.g., gene expressions $x$ and medical imaging $y$ corresponding to the same patient), CCA returns the linear projections of $x$ and $y$ that are maximally correlated. CCA is particularly useful for learning multi-modal representations of data and in semi-supervised learning~\citep{mcwilliams2013correlated}; it is effectively the multi-view generalization of PCA~\citep{guo2019canonical} where $A$ and $B$ contain the cross- and auto-covariances of the two views respectively:
\nocite{tan2022canonical}
\begin{align}
    A &= \begin{bmatrix} \mathbf{0} & \mathbb{E}[x y^\top] \\ \mathbb{E}[y x^\top] & \mathbf{0} \end{bmatrix} & B &= \begin{bmatrix} \mathbb{E}[x x^\top] & \mathbf{0} \\ \mathbf{0} & \mathbb{E}[y y^\top] \end{bmatrix}.
\end{align}

\paragraph{Independent Component Analysis (ICA):}
ICA seeks the directions in the data which are most structured, or alternatively, appear least Gaussian~\citep{hyvarinen2000independent}.
A common \gep{} formulation of ICA uncovers latent variables which maximize the non-Gaussianity of the data as defined by its excess kurtosis.
ICA has famously been proposed as a solution to the so-called cocktail party source-separation problem in audio processing and has been used for denoising and more generally, the discovery of explanatory latent factors in data. Here $A$ and $B$ are the excess kurtosis and the covariance of the data respectively~\citep{parra2003blind}:
\begin{align}
    A& = \mathbb{E}[\langle x, x \rangle x x^\top] - \trace(B) B - 2 B^2 & B &= \mathbb{E}[x x^\top].
\end{align}

\paragraph{Normalized Graph Laplacians:} The graph Laplacian matrix ($L$) is central to tasks such as spectral clustering ($A=L$, $B=I$) where its eigenvectors are known to solve a relaxation of min-cut~\citep{von2007tutorial}. Alternatives, such as the random walk normalized Laplacian ($A=L$, $B$ is the diagonal node-degree matrix), approximate other min-cut objectives. These normalized variants, in particular, are important to computing representations for learning value functions in reinforcement learning such as successor features~\citep{machado2017laplacian,stachenfeld2014design,machado2017eigenoption}, an extension of proto-value functions~\citep{mahadevan2005proto} which uses the un-normalized graph Laplacian ($A=L$, $B=I$).

Partial least squares (PLS) can be formualted similarly to CCA and finds extensive use in chemometrics~\citep{boucher2015study}, medical domains~\citep{altmannpartial} and beyond~\citep{mcwilliams2010sparse}. Likewise, linear discriminant analysis (LDA) can be formulated as a \gep{} and learns a label-aware projection of the data that separates classes well~\citep{rao1948utilization}. More examples and uses of the \gep{} can be found in~\citep{bie2005eigenproblems,borga1997unified}. We now shift focus to the mathematical properties and challenges of the corresponding \gep{}.

In this work, we assume the matrices $A$ and $B$ above can either be defined using expectations under a data distribution (e.g., $\mathbb{E}_{x \sim p(x)} [ x x^\top ]$) or means over a finite sample dataset (e.g., $\frac{1}{n} X^\top X$ where $X \in \mathbb{R}^{n \times d_x}$). In either case, we typically assume the data has mean zero unless specified otherwise.

Note that the \gep{}, $Av = \lambda Bv$, is similar to the eigenvalue problem $B^{-1}A v = \lambda v$.
There are two reasons for working with the \gep{} instead: 1) inverting $B$ is prohibitively expensive for a large matrix and 2) while $A$ and $B \succ 0$ are symmetric, $B^{-1}A$ is not, which hides useful information about the eigenvalues and eigenvectors (they are necessarily real and $B$-orthogonal). This also highlights that the \gep{} is a fundamentally more challenging problem than SVD and why a direct application of previous game-theoretic approaches such as~\citep{gemp2021eigengame,gemp2022eigengame} is not possible.

The complexity of solving the \gep{} is $\mathcal{O}(d^3)$ where $d$ is the dimension of the square matrix $A$ (equiv. $B$). Several libraries exist for solving the \gep{} in-memory~\citep{tzounas2020comparison}.
There is also a vast numerics literature we cannot do justice that considers large matrices~\citep{sorensen2002numerical}.

We specifically focus on the stochastic, streaming data setting which is of particular interest to machine learning methods which learn by iterating over small minibatches of data (e.g., stochastic gradient descent). Under this setting, machine learning research has developed simple approximate solvers for singular value decomposition (SVD) that scale to very large datasets~\citep{allen2017first}. Similarly, in this work, we contribute a simple, elegant solution to the \gep{}, including
\begin{itemize}
    \item A game whose Nash equilibrium is the top-$k$ \gep{} solution,
    \item An easily parallelizable algorithm with $\mathcal{O}(dk)$ per-iteration complexity relying only on matrix-vector products,
    \item An empirical analysis of neural similarity on activations $1000\times$ larger than prior work.
\end{itemize}
The game and accompanying algorithm are developed synergistically to achieve a formulation that is amenable to analysis and naturally leads to an elegant and efficient algorithm.

\section{Generalized EigenGame: Players, Strategies, and Utilities}\label{sec:game}

In this work, we take the approach of defining the top-$k$ \gep{} as a $k$-player game. It is an open question how to define a $k$-player game appropriately such that key properties of the \gep{} are captured\footnote{The Courant-Fischer min-max principle poses the $i$th generalized eigenvalue as the solution to a two-player, zero-sum game~\citep{parlett1998symmetric}\textemdash see \Appxref{cf_minmax} for further discussion.}.
As argued in previous work~\citep{gemp2021eigengame,gemp2022eigengame}, game formulations make obvious how computation can be distributed over players, leading to high parallelization, which is critical for processing large datasets. They have also clarified geometrical properties of the problem.

Specifically, we are interested in solving the top-$k$ \gep{} which means we are interested in finding the (unit-norm) generalized eigenvectors $v_i$ associated with the top-$k$ largest generalized eigenvalues $\lambda_i$. Therefore, let there be $k$ \textbf{players} denoted $i \in \{1, \ldots, k\}$, and let each select a vector $\hat{v}_i$ (\textbf{strategy}) from the unit-sphere $\mathcal{S}^{d-1}$ (\emph{strategy space}). We define player $i$'s \textbf{utility} function conditioned on its parents (players $j < i$) as follows:
\begin{align}
    u_i(\hat{v}_i \vert \hat{v}_{j<i}) &= \overbrace{\frac{\langle \hat{v}_i, A \hat{v}_i \rangle}{\langle \hat{v}_i, B \hat{v}_i\rangle}}^{\stackrel{\text{\emph{generalized}}}{\text{Rayleigh Quotient}}} - \sum_{j < i} \frac{\langle \hat{v}_j, A \hat{v}_j \rangle \langle \hat{v}_i, B \hat{v}_j \rangle^2}{\langle \hat{v}_j, B \hat{v}_j \rangle^2 \langle \hat{v}_i, B \hat{v}_i \rangle} %
    \\ &= 
    \underbrace{\hat{\lambda}_i}_{\text{\textcolor{blue}{reward}}} - \sum_{j < i} \underbrace{\hat{\lambda}_j \langle \textcolor{green}{\hat{y}_i}, B \textcolor{green}{\hat{y}_j} \rangle^2}_{\text{\textcolor{red}{penalty}}} \quad \text{ where $\textcolor{green}{\hat{y}_i} = \frac{\hat{v}_i}{||\hat{v}_i||_B}$,} \label{eqn:util_new}
\end{align}
$\hat{\lambda}_i = \frac{\langle \hat{v}_i, A \hat{v}_i \rangle}{\langle \hat{v}_i, B \hat{v}_i \rangle}$, and $||z||_B = \sqrt{\langle z, B z \rangle}$.

Player $i$'s utility has an intuitive explanation. The first term is recognized as the generalized Rayleigh quotient which can be derived by left multiplying both sides of the \gep{} ($v^\top A v = \lambda v^\top B v$) and solving for $\lambda$. Note that the generalized eigenvectors are guaranteed to be $B$-orthogonal, i.e., $v_i^\top B v_j = 0$ for all $i \ne j$ (\Appxref{app:prelims} Lemma~\ref{b_orth}). Therefore, the \emph{\textcolor{blue}{reward}} term incentivizes players to find directions that result in large eigenvalues, but are simultaneously \emph{\textcolor{red}{penalized}} for choosing directions that align with their parents (players with index less than $i$, higher in the hierarchy). Finally, the penalty coefficient $\hat{\lambda}_j$ serves to balance the magnitude of the penalty terms with the reward term such that players have no incentive to ``overlap'' with parents. \textcolor{highlight}{In \Appxref{app:well_posed} Proposition~\ref{prop:deflation}, we derive these same utilities via a \emph{deflation} perspective.} Next, we formally prove these utilities are well-posed in the sense that, given exact parents, their optima coincide with the top-$k$ \gep{} solution.

\begin{lemma}[Well-posed Utilities]
\label{well_posed_utils}
Given exact parents and assuming the top-$k$ eigenvalues of $B^{-1}A$ are distinct and positive, the maximizer of player $i$'s utility is the unique generalized eigenvector $v_i$ (up to sign, i.e., $-v_i$ is also valid).
\end{lemma}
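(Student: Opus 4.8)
The plan is to fix player $i$, assume the parents $\hat v_{j<i}$ are exactly the generalized eigenvectors $v_1,\dots,v_{i-1}$, and analyze the maximization of $u_i(\hat v_i\mid v_{j<i})$ over $\hat v_i\in\mathcal S^{d-1}$. The first observation is that the utility depends on $\hat v_i$ only through the $B$-normalized direction $\hat y_i = \hat v_i/\|\hat v_i\|_B$, so without loss of generality we may optimize over vectors $y$ with $\|y\|_B = 1$, writing the objective as $\langle y, Ay\rangle - \sum_{j<i}\lambda_j\langle y, Bv_j\rangle^2$ where I have used $\|v_j\|_B=1$ (the generalized eigenvectors can be taken $B$-orthonormal by Lemma~\ref{b_orth}). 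The natural move is then to change coordinates: since $B\succ 0$, write $B = B^{1/2}B^{1/2}$ and substitute $z = B^{1/2}y$, so $\|z\|_2 = 1$. In these coordinates $\langle y, Ay\rangle = \langle z, \tilde A z\rangle$ with $\tilde A = B^{-1/2} A B^{-1/2}$ symmetric, and $\langle y, Bv_j\rangle = \langle z, B^{1/2}v_j\rangle = \langle z, w_j\rangle$ where $w_j = B^{1/2}v_j$ are exactly the (orthonormal) eigenvectors of $\tilde A$ with eigenvalues $\lambda_j$ (this is the standard reduction of the GEP to a symmetric eigenproblem, cf. Proposition~\ref{sim_mat}). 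So the problem becomes: maximize $f(z) = \langle z,\tilde A z\rangle - \sum_{j<i}\lambda_j\langle z,w_j\rangle^2$ over the unit sphere.

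Next I would expand $z$ in the orthonormal eigenbasis $\{w_1,\dots,w_d\}$ of $\tilde A$, writing $z = \sum_\ell c_\ell w_\ell$ with $\sum_\ell c_\ell^2 = 1$. Then $\langle z,\tilde A z\rangle = \sum_\ell \lambda_\ell c_\ell^2$ and $\langle z,w_j\rangle^2 = c_j^2$, so $f(z) = \sum_{\ell\ge i}\lambda_\ell c_\ell^2 + \sum_{j<i}(\lambda_j - \lambda_j)c_j^2 = \sum_{\ell\ge i}\lambda_\ell c_\ell^2$. Wait — more carefully, $f(z) = \sum_{j<i}\lambda_j c_j^2 + \sum_{\ell\ge i}\lambda_\ell c_\ell^2 - \sum_{j<i}\lambda_j c_j^2 = \sum_{\ell \ge i}\lambda_\ell c_\ell^2$, i.e., the penalty exactly cancels the contribution of the parent directions. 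Now subject to $\sum_{\ell\ge i} c_\ell^2 \le 1$ (the remaining mass $\sum_{j<i}c_j^2$ is "wasted"), and since $\lambda_i > \lambda_{i+1} \ge \dots$ are the largest remaining eigenvalues and $\lambda_i>0$, the maximum is achieved uniquely by putting all mass on $w_i$: $c_i = \pm 1$, all other $c_\ell = 0$. Here the hypothesis that the top-$k$ eigenvalues are distinct and positive is what forces uniqueness (up to sign): distinctness rules out mass being split between $w_i$ and another $w_\ell$ with $\ell > i$ having equal eigenvalue, and positivity ensures it is strictly better to use the full unit of mass on $w_i$ rather than leave some on a parent direction or a zero/negative eigenvalue direction. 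Translating back, $z = \pm w_i$ gives $y = \pm B^{-1/2}w_i = \pm v_i$, and hence $\hat v_i$ is any positive multiple of $\pm v_i$; normalizing, the maximizer on $\mathcal S^{d-1}$ is $\pm v_i/\|v_i\|_2$.

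The main obstacle, and the step deserving the most care, is the uniqueness claim and the precise accounting of where each hypothesis is used. The cancellation identity $f(z) = \sum_{\ell\ge i}\lambda_\ell c_\ell^2$ is clean, but one must argue rigorously that (a) the sphere constraint $\sum_\ell c_\ell^2 = 1$ combined with $\lambda_i$ being strictly larger than all $\lambda_\ell$ for $\ell > i$ and strictly positive forces $c_i^2 = 1$ — this needs the observation that any "leakage" of mass either into parent coordinates (which contribute $0$ to $f$) or into smaller eigendirections strictly decreases $f$; and (b) the degeneracy among eigenvectors sharing eigenvalue $\lambda_i$ is excluded by the distinctness assumption, so the maximizing eigenline is unique. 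I would also note explicitly that the reduction to $\|\hat v_i\|_B$-normalized vectors is legitimate because $u_i$ is invariant to the scaling of $\hat v_i$ (both the Rayleigh quotient and the $\hat y_i$ term are scale-invariant), so passing to the sphere $\mathcal S^{d-1}$ at the end only reintroduces the two sign choices and no other ambiguity. A minor point to handle: one should confirm $v_i$ itself is well-defined up to sign under the stated assumptions, which is immediate since distinct eigenvalues yield one-dimensional eigenspaces.
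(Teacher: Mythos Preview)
Your argument is correct and runs parallel to the paper's, with one organizing difference worth noting. The paper expands $\hat v_i$ directly in the generalized eigenbasis $\{v_p\}$ (keeping the original inner product), simplifies the utility to $\sum_{j\ge i}\lambda_j z_j$ with $z_j = w_j^2\langle v_j,Bv_j\rangle/\sum_p w_p^2\langle v_p,Bv_p\rangle \in \Delta^{d-1}$, observes the linear program over the simplex is maximized at $z=e_i$, and then invokes a separate auxiliary result (Lemma~\ref{simplex_sphere_bijection}) to pull uniqueness of $z$ back to uniqueness of $\hat v_i$ up to sign. You instead pass first to the symmetric problem via $z=B^{1/2}y$, so the eigenbasis $\{w_j\}$ is Euclidean-orthonormal and the coefficients $c_\ell$ live on the standard unit sphere; this makes the uniqueness step a one-line observation rather than a separate lemma. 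Both routes produce the identical cancellation $f = \sum_{\ell\ge i}\lambda_\ell c_\ell^2$; your coordinate change front-loads the $B$-normalization and yields a somewhat more self-contained argument, while the paper's version stays in native coordinates and isolates the sphere-to-simplex correspondence as a reusable fact (which it later leverages for Proposition~\ref{util_shape}).
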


Note that $\hat{\lambda}_i = \frac{\langle \hat{v}_i, A \hat{v}_i \rangle}{\langle \hat{v}_i, B \hat{v}_i \rangle} = \frac{\langle \hat{v}_i / ||\hat{v}_i||_B, A \hat{v}_i / ||\hat{v}_i||_B \rangle}{\langle \hat{v}_i / ||\hat{v}_i||_B, B \hat{v}_i / ||\hat{v}_i||_B \rangle} = \frac{\langle \hat{y}_i, A \hat{y}_i \rangle}{\langle \hat{y}_i, B \hat{y}_i \rangle}$, therefore, the results above still hold for utilities defined using vectors constrained to the unit ellipsoid, $||\hat{y}_i||_B = 1$, rather than the unit-sphere, $||\hat{v}_i||_{I} = 1$.
However, in our setting, $B$ is a massive matrix which can never be explicitly constructed and instead only observed via minibatches. It is then not clear how to handle the constraint $||\hat{y}_i||_B = 1$. We therefore only consider an approach that assumes $||\hat{v}_i||_{I} = 1$. %

Next, we provide intuition for the shape of these utilities. Surprisingly, while non-concave, we prove analytically in \Appxref{app:well_posed} that they have a simple sinusoidal shape. A numerical illustration is given in Figure~\ref{fig:warped_sinusoid} to help the reader visualize this property.

\begin{proposition}[Utility Shape]
\label{util_shape}
Each player's utility is periodic in the angular deviation ($\theta$) along the sphere. Its shape is sinusoidal, but with its angular axis ($\theta$) smoothly deformed as a function of $B$. Most importantly, every local maximum is a global maximum.
\end{proposition}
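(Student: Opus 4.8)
The plan is to reduce the statement to a one–dimensional claim along great circles of the sphere, where the role of $B$ as an ``angular warp'' and the sinusoidal form both become transparent. Fix a player $i$ and treat its parents — hence the scalars $\hat\lambda_j$ and the vectors $\hat y_j$ for $j<i$ — as fixed. First I would record the observation already made after \eqref{util_new}: $u_i$ depends on $\hat v_i$ only through $\hat y_i=\hat v_i/\|\hat v_i\|_B$, so it is really a function on the unit $B$-ellipsoid $\{y:\|y\|_B=1\}$, onto which the map $\hat v_i\mapsto\hat y_i$ sends the unit sphere.

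Next I would fix an arbitrary great circle of $\mathcal S^{d-1}$, i.e.\ a two-dimensional subspace $W=\mathrm{span}(p,q)$ with $p\perp q$ unit vectors, and parametrize it by $\hat v_i(\theta)=\cos\theta\,p+\sin\theta\,q$. Choose a $B$-orthonormal basis $\{e_1,e_2\}$ of $W$ ($\langle e_a,Be_b\rangle=\delta_{ab}$), write $\hat v_i(\theta)=\alpha(\theta)e_1+\beta(\theta)e_2=M(\cos\theta,\sin\theta)^{\top}$ for an invertible $2\times2$ matrix $M$ determined by $B|_W$, and set $\psi=\mathrm{atan2}(\beta(\theta),\alpha(\theta))$. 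Then $\|\hat v_i(\theta)\|_B^2=\alpha^2+\beta^2$, so the image curve on the ellipsoid is exactly $\hat y_i(\psi)=\cos\psi\,e_1+\sin\psi\,e_2$ — a ``$B$-great circle'' — and a short computation gives $d\psi/d\theta=\det(M)/(\alpha^2+\beta^2)\neq0$, so $\theta\mapsto\psi$ is a smooth, strictly monotone diffeomorphism of the circle whose form is dictated entirely by $B$ (reducing to the identity precisely when $B|_W$ is a scalar matrix). This is the advertised smooth deformation of the angular axis.

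In the $\psi$-coordinate each term of $u_i$ is a trigonometric polynomial of degree $\le2$. The reward $\langle\hat y_i(\psi),A\hat y_i(\psi)\rangle$ is a quadratic form in $(\cos\psi,\sin\psi)$, hence by double-angle identities equals $a_0+a_1\cos2\psi+a_2\sin2\psi$; for each parent $j$, $\langle\hat y_i(\psi),B\hat y_j\rangle=\langle e_1,B\hat y_j\rangle\cos\psi+\langle e_2,B\hat y_j\rangle\sin\psi=\rho_j\cos(\psi-\phi_j)$, so the penalty $\hat\lambda_j\langle\hat y_i(\psi),B\hat y_j\rangle^2=\tfrac{\hat\lambda_j\rho_j^2}{2}\bigl(1+\cos(2\psi-2\phi_j)\bigr)$ is again a constant plus a frequency-$2\psi$ term. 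Summing over the reward and all parents, $u_i=\gamma_0+\gamma_1\cos2\psi+\gamma_2\sin2\psi=\gamma_0+R\cos(2\psi-\Phi)$ with $R=\sqrt{\gamma_1^2+\gamma_2^2}$: a single sinusoid of period $\pi$ in the warped angle $\psi$ (a constant when $R=0$). In particular every local maximum of the restriction of $u_i$ to this great circle is a global maximum of that restriction.

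Finally I would globalize. Let $\hat v^\star$ be any local maximum of $u_i$ on $\mathcal S^{d-1}$ and let $\hat v$ be any other unit vector. If $\hat v\neq\pm\hat v^\star$ it lies on a unique great circle through $\hat v^\star$; the restriction of $u_i$ to that circle has a local maximum at $\hat v^\star$, hence by the previous step a global maximum there, so $u_i(\hat v^\star)\ge u_i(\hat v)$; and $\hat v=\pm\hat v^\star$ is immediate since $u_i(\hat v)=u_i(-\hat v)$. Hence $\hat v^\star$ is a global maximum, as claimed. I expect the main obstacle to be Step two: checking carefully that $\theta\mapsto\psi$ is globally smooth, globally monotone, and genuinely governed by $B$ (so that it preserves the ordering of critical values), and formulating ``sinusoidal with a $B$-deformed angular axis'' precisely enough to constitute a theorem rather than a picture. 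The remaining pieces — the double-angle expansions, collapsing same-frequency sinusoids, and the degenerate cases $R=0$ and $\hat v=\pm\hat v^\star$ — are routine.
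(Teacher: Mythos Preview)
Your argument is correct and takes a genuinely different route from the paper. The paper proceeds via the eigenvector decomposition: invoking Lemma~\ref{well_posed_utils} (which \emph{assumes exact parents}) to write $u_i$ as a linear function $\sum_{j\ge i}\lambda_j z_j$ over the simplex, then invoking Lemma~\ref{simplex_sphere_bijection} to identify the simplex coordinate $z$ with a variable $q$ on an ellipsoid with curvature $D=\mathrm{diag}(\langle v_p,Bv_p\rangle)$, and finally appealing to the prior EigenGame analysis (where $D=I$) to conclude the utility is a cosine with its angular axis warped by $D$. Your approach instead restricts $u_i$ directly to an arbitrary great circle, reparametrizes via a $B$-orthonormal basis of the two-plane to obtain the warped angle $\psi$, and uses double-angle identities to collapse both reward and penalty terms into a single frequency-$2\psi$ sinusoid; the global statement then follows by sweeping great circles through a putative local maximum.

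The trade-offs are as follows. Your argument is self-contained (no appeal to \cite{gemp2021eigengame}) and strictly more general: it holds for \emph{arbitrary} fixed parents $\hat v_{j<i}$, whereas the paper's route through Lemma~\ref{well_posed_utils} formally requires exact parents $\hat v_{j<i}=v_{j<i}$. It also makes the ``$B$-warp'' completely explicit via the computable diffeomorphism $d\psi/d\theta=\det(M)/\|\hat v_i(\theta)\|_B^2$. The paper's route, on the other hand, reuses the simplex/ellipsoid machinery already needed for the Nash property and ties the warp directly to the eigenbasis quantities $\langle v_p,Bv_p\rangle$, which is the form one wants for the downstream convergence analysis. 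Your anticipated obstacle in Step~2 is not really one: the monotonicity and smoothness of $\theta\mapsto\psi$ follow immediately from $\det(M)\neq 0$ and $\|\hat v_i(\theta)\|_B>0$, exactly as you computed.
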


Figure~\ref{fig:warped_sinusoid} illustrates a primary difficulty of solving the \gep{} over SVD. Due to the extreme differences in curvature caused by the $B$ matrix, the \gep{} should benefit from optimizers employing adaptive per-dimension learning rates. To our knowledge, this 1-d visualization of the difficulty of the \gep{} is novel and we exploit this insight in experiments.

\begin{figure}[ht!]
    \centering
    \includegraphics[width=0.75\textwidth]{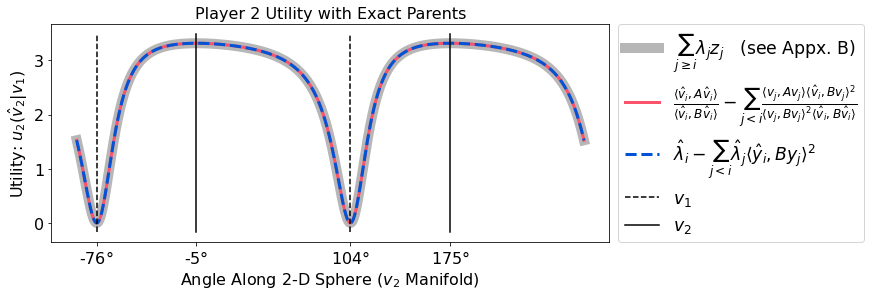}
    \caption{Each player's utility is a sinusoid on the sphere warped tangentially along the axis of angular deviation according to $B$; values for $A$ and $B$ used in this example are given in \Appxref{app:well_posed}. Three mathematical representations of the utility are plotted; their equivalence is supported by the overlapping curves. If player $2$ aligns with the top eigenvector (dashed vertical), they receive zero utility. If they align with the second eigenvector (solid vertical), they receive $\lambda_2$ (optimal) as reward. If $B=I$ as in SVD/PCA, the vertical lines indicating the minima and maxima would be separated by exactly $90\degree$. In this case, the matrix $B$ redefines what it means for two vectors to be orthogonal ($\langle \hat{v}_i, B \hat{v}_j \rangle = 0$), so that the vectors are $71\degree$ (equivalently, $180\degree-71\degree=109\degree$) from each other.}
    \label{fig:warped_sinusoid}
    \vspace{-0.4cm}
\end{figure}

Finally, we formally define our proposed game and prove its equilibrium constitutes the top-$k$ \gep{} solution. We use the Greek letter \emph{\textbf{g}amma} to denote \emph{\textbf{g}eneralized}, and we differentiate between the game and the algorithm with upper $\Gamma$ and lower case $\gamma$ respectively.

\begin{definition}[\geg{}]
\label{def:geg}
Let \geg{} be the game with players $i \in \{1, \ldots, k\}$, their strategy spaces $\hat{v}_i \in \mathcal{S}^{d-1}$, and their utilities $u_i$ as defined in~\eqref{eqn:util_new}.
\end{definition}

\begin{theorem}[Nash Property]
Assuming the top-$k$ generalized eigenvalues of the generalized eigenvalue problem $Av = \lambda Bv$ are positive and distinct,  their corresponding generalized eigenvectors form the unique, strict Nash equilibrium of \geg{}.
\end{theorem}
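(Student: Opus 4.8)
The plan is to exploit the hierarchical structure of \geg{}: player $i$'s utility $u_i$ in \eqref{util_new} depends only on its own strategy $\hat{v}_i$ and on the parent strategies $\hat{v}_{j<i}$, never on players $j>i$. Consequently a strategy profile is a Nash equilibrium precisely when each $\hat{v}_i$ is a best response to its fixed parents, so the players can be analyzed one at a time from the top of the hierarchy downward. I would first confirm that the profile of exact generalized eigenvectors is a strict Nash equilibrium, and then argue it is the only one by induction on the player index.

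For existence, note that on substituting the exact parents $v_1,\dots,v_{i-1}$ into $u_i$, Lemma~\ref{well_posed_utils} identifies $v_i$ (up to sign) as the unique maximizer; hence each $v_i$ best-responds to $v_{j<i}$, the profile $(v_1,\dots,v_k)$ is Nash, and uniqueness of the maximizer makes it strict. For uniqueness, let $(\hat{v}_1,\dots,\hat{v}_k)$ be an arbitrary Nash equilibrium. Player $1$ has no parents, so $u_1$ is just the generalized Rayleigh quotient, whose global maximizer over $\mathcal{S}^{d-1}$ is the top generalized eigenvector, unique up to sign since $\lambda_1>\lambda_2$; being a best response, $\hat{v}_1=\pm v_1$. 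Inductively, suppose $\hat{v}_j=\pm v_j$ for all $j<i$. Each parent $\hat{v}_j$ enters the penalty terms of $u_i$ only through $\langle \hat{v}_j, A\hat{v}_j\rangle$, $\langle \hat{v}_j, B\hat{v}_j\rangle$, and $\langle \hat{v}_i, B\hat{v}_j\rangle^2$, all invariant under $\hat{v}_j\mapsto-\hat{v}_j$; therefore $u_i(\cdot\mid\hat{v}_{j<i})=u_i(\cdot\mid v_{j<i})$, and Lemma~\ref{well_posed_utils} forces the best response $\hat{v}_i$ to be $\pm v_i$, closing the induction. The same evenness shows conversely that every sign pattern $(\sigma_1 v_1,\dots,\sigma_k v_k)$ with $\sigma_i\in\{-1,+1\}$ is itself a strict Nash, so — modulo the unavoidable per-eigenvector sign — the equilibrium is unique and strict.

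The step carrying the real weight has already been granted: Lemma~\ref{well_posed_utils}, together with the shape analysis of Proposition~\ref{util_shape}, is where one shows that each parent-conditioned utility, though non-concave, has every local maximum equal to the global one and located exactly at the next generalized eigenvector. Given that, the only genuine subtlety left in the theorem is the sign symmetry: on the sphere $v_i$ and $-v_i$ are distinct points of equal utility, so ``unique, strict Nash'' must be read up to the $2^k$ sign flips (equivalently, on projective space), which is exactly why the induction needs the penalty terms to be even in the parents — so that this symmetry does not leak into additional, spurious equilibria.
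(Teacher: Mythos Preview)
Your proposal is correct and follows essentially the same approach as the paper: both reduce the Nash property to Lemma~\ref{well_posed_utils} by exploiting the hierarchical (parent-only) dependence of the utilities, so that each player's best response can be determined in sequence. The paper's own proof is a one-sentence appeal to that lemma, whereas you spell out the induction and the sign-invariance of the penalty terms explicitly --- additional care the paper omits but which your reading of ``unique up to sign'' makes precise.
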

\begin{proof}
Lemma~\ref{well_posed_utils} proves that each generalized eigenvector $v_i$ ($i \in \{1, \ldots, k\}$) is the unique best response to $v_{-i}$, which implies the entire set constitutes the unique Nash equilibrium.
\end{proof}
\section{Algorithm: Unbiased Player Updates and Auxiliary Variables}\label{sec:update}

Given that \geg{} suitably captures the top-$k$ \gep{}, we now develop an iterative algorithm to approximate its solution. The basic approach we take is to perform parallel gradient ascent on all player utilities simultaneously. We focus on this approach in particular because it aligns with the predominant machine learning paradigm and hardware. We will first write down the gradient of each player's utility and then introduce several simplifications for the purpose of enabling unbiased estimates in the stochastic setting.

Up to scaling factors, the gradient of player $i$'s utility function with respect to $\hat{v}_i$ is
\begin{align}
    \frac{(\hat{v}_i^\top B \hat{v}_i) A \hat{v}_i - (\hat{v}_i^\top A \hat{v}_i) B \hat{v}_i}{\langle \hat{v}_i, B \hat{v}_i \rangle^2} - \sum_{j < i} \frac{\hat{\lambda}_j}{\langle \hat{v}_j, B \hat{v}_j \rangle} (\hat{v}_i^\top B \hat{v}_j) \frac{\big[ \langle \hat{v}_i, B \hat{v}_i \rangle B \hat{v}_j - \langle \hat{v}_i, B \hat{v}_j \rangle B \hat{v}_i \big]}{\langle \hat{v}_i, B \hat{v}_i \rangle^2}. \label{eq:util_grad}
\end{align}

See Lemma~\ref{lemma:grad_derivation} in \Appxref{app:well_posed} for a derivation of the gradient. Recall that $B$ is a matrix that we intend to estimate with samples, i.e., it is a random variable, and it appears several times in the denominator of the gradient. Obtaining unbiased estimates of inverses of random variables is difficult (e.g., the naive approach gives an overestimate; $\mathbb{E}[1/x] \ge 1/\mathbb{E}[x]$ by Jensen's inequality).
We can remove the scalar $\langle \hat{v}_i, B \hat{v}_i \rangle^2$ in the denominator because it is common to all terms and will not change the direction of the gradient nor the location of fixed points; this step is critical to the design of our stochastic algorithm which we will explain later. We also use the following two additional relations:
\begin{enumerate}[(i)]
    \item $\hat{\lambda}_j \langle \hat{v}_i, B \hat{v}_j \rangle = \langle \hat{v}_i, A \hat{v}_j \rangle$ if player $i$'s parents match their true solutions, i.e., $\hat{v}_{j<i} = v_{j<i}$, \label{rel_1}
    \item $\sqrt{\langle \hat{v}_j, B \hat{v}_j \rangle} = ||\hat{v}_j||_B$ is strictly positive and real-valued because $B \succ 0$, \label{rel_2}
\end{enumerate}
to arrive at the simplified update direction
\begin{align}
    \tilde{\nabla}_i &= \overbrace{(\hat{v}_i^\top B \hat{v}_i) A \hat{v}_i - (\hat{v}_i^\top A \hat{v}_i) B \hat{v}_i}^{\text{reward}} - \sum_{j < i} \overbrace{(\hat{v}_i^\top A \textcolor{green}{\hat{y}_j}) \big[ \langle \hat{v}_i, B \hat{v}_i \rangle B \textcolor{green}{\hat{y}_j} - \langle \hat{v}_i, B \textcolor{green}{\hat{y}_j} \rangle B \hat{v}_i}^{\text{penalty}} \big]. \label{eq:update}
\end{align}
Simplifying the gradient using~(\ref{rel_1}) is sound because the hierarchy of players ensures the parents will be learned exactly asymptotically. For instance, player $1$'s update has no penalty terms and so will converge asymptotically. The argument then proceeds by induction.

Note that $B$ still appears in the denominator via the $\textcolor{green}{\hat{y}_j}$ terms (recall \eqref{eqn:util_new}). We will revisit this issue later, but for now we will show this update converges to the desired solution given exact estimates of expectations (full-batch setting). Lemma~\ref{steepest_ascent} is a stepping stone to proving convergence with arbitrary parents in Theorem~\ref{thm:global_conv}.

\begin{lemma}\label{steepest_ascent}
The direction $\tilde{\nabla}_i$ defined in~\eqref{eq:update} is a steepest ascent direction on utility $u_i(\hat{v}_i \vert \hat{v}_{j < i})$ given exact parents $\hat{v}_{j < i} = v_{j < i}$.
\end{lemma}
\begin{proof}
This fact follows from the above argument that removing a positive scalar multiplier does not change the direction of the gradient of $u_i$ w.r.t. $\hat{v}_i$ and applying relation~(\ref{rel_1}).
\end{proof}

We present the deterministic version of \sgeg{} in \Algref{alg:det_geg} where $k$ players use $\tilde{\nabla}_i$ in~(\ref{eq:update}) to maximize their utilities in parallel (see \textbf{parfor}-loop below). While simultaneous gradient ascent fails to converge to Nash equilibria in games in general, it succeeds in this case because the hierarchy we impose ensures each player has a unique best response (Lemma~\ref{well_posed_utils}); this type of procedure is known as \emph{iterative strict dominance} in the game theory literature. Theorem~\ref{thm:global_conv}, proven in~\Appxref{app:conv}, guarantees it converges asymptotically to the true solution.

\begin{algorithm}[t]
\caption{Deterministic / Full-batch \sgeg{}}
\label{alg:det_geg}
\begin{algorithmic}[1]
    \STATE Given: $A \in \mathbb{R}^{d \times d}$ and $B \in \mathbb{R}^{d \times d}$, step size sequence $\eta_t$, and number of iterations $T$.
    \STATE $\hat{v}_i \sim \mathcal{S}^{d-1}$, i.e., $\hat{v}_i \sim \mathcal{N}(\mathbf{0}_d, \mathbf{I}_d); \hat{v}_i \leftarrow \hat{v}_i / ||\hat{v}_i|$ for all $i$
    \FOR{$t = 1: T$}
        \PARFOR{$i = 1: k$}
            \STATE $\hat{y}_j = \frac{\hat{v}_j}{\sqrt{\langle \hat{v}_j, B\hat{v}_j \rangle }}$
            \STATE $\texttt{rewards} \leftarrow (\hat{v}_i^\top B \hat{v}_i) A \hat{v}_i - (\hat{v}_i^\top A \hat{v}_i) B \hat{v}_i$
            \STATE $\texttt{penalties} \leftarrow \sum_{j < i} (\hat{v}_i^\top A \hat{y}_j) \big[ \langle \hat{v}_i, B \hat{v}_i \rangle B \hat{y}_j - \langle \hat{v}_i, B \hat{y}_j \rangle B \hat{v}_i \big]$
            \STATE $\tilde{\nabla}_{i} \leftarrow \texttt{rewards} - \texttt{penalties}$
        \STATE $\hat{v}_i' \leftarrow \hat{v}_i + \eta_t \tilde{\nabla}_{i}$
        \STATE $\hat{v}_i \leftarrow \frac{\hat{v}_i'}{|| \hat{v}_i' ||}$
        \ENDPARFOR
    \ENDFOR
    \STATE return all $\hat{v}_i$
\end{algorithmic}
\end{algorithm}

\begin{theorem}[Deterministic / Full-batch Global Convergence]\label{thm:global_conv}
Given a symmetric matrix $A$ and symmetric positive definite matrix $B$ where the top-$k$ eigengaps of $B^{-1}A$ are positive along with a square-summable, not summable step size sequence $\eta_t$ (e.g., $1/t$), \Algref{alg:det_geg} converges to the top-$k$ generalized eigenvectors asymptotically ($\lim_{T \rightarrow \infty}$) with probability $1$.
\end{theorem}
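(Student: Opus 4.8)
I would argue by induction on the player index $i$, exploiting the triangular structure of the updates: $\tilde\nabla_i$ depends only on $\hat v_i$ and the parents $\hat v_{j<i}$, so once the parents have settled, player $i$'s iteration is an (asymptotically) autonomous ascent. Two quick reductions set the stage. First, $\tilde\nabla_i$ is already tangent to the sphere at $\hat v_i$: one checks $\langle \hat v_i,\tilde\nabla_i\rangle=0$ for the reward term and for each penalty term separately, so the normalization step $\hat v_i'\mapsto \hat v_i'/\|\hat v_i'\|$ agrees with the tangent-space step up to an $O(\eta_t^2)$ error. Second, by the Proposition immediately preceding the theorem, whenever the parents equal $v_{j<i}$ the vector $\tilde\nabla_i$ is a positive multiple of the Riemannian gradient of $u_i(\cdot\mid v_{j<i})$ (the positivity of the multiplier uses $B\succ 0$ and $\|\hat v_i\|=1$). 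Hence \Algref{alg:det_geg}, restricted to coordinate $i$ with exact parents, is Riemannian gradient ascent on $u_i(\cdot\mid v_{j<i})$ over $\mathcal{S}^{d-1}$ with step sizes $\eta_t$ (square-summable, not summable) plus an $\ell^2$-summable retraction error.

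\textbf{Base case ($i=1$).} Player $1$ has no parents, so it performs gradient ascent of the generalized Rayleigh quotient $u_1$ on the sphere. Since $A,B$ are fixed, $u_1$ is smooth (indeed real-analytic) on the compact manifold $\mathcal{S}^{d-1}$, so a standard descent-lemma estimate $u_1(\hat v_1^{t+1})\ge u_1(\hat v_1^{t})+c\,\eta_t\|\tilde\nabla_1^{t}\|^2-C\eta_t^2$ combined with $\sum_t\eta_t^2<\infty$ and $\sum_t\eta_t=\infty$ forces $\tilde\nabla_1^t\to 0$, so the iterates approach the critical set of $u_1$; since the critical points are isolated (the $2d$ unit generalized eigenvectors, by Lemma~\ref{b_orth} and Lemma~\ref{well_posed_utils}) and consecutive iterates differ by $O(\eta_t)\to 0$, the iterates converge to exactly one of them. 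By the Utility Shape proposition (Proposition~\ref{util_shape}) the only local $=$ global maximum is attained at $\pm v_1$; positivity and distinctness of the top eigenvalues make every other generalized eigenvector a nondegenerate strict saddle (or the global minimum) of $u_1$, so their stable manifolds for the ascent dynamics are lower-dimensional, hence Lebesgue-null. A (center-)stable-manifold argument then shows the set of initializations whose discrete trajectory converges to a non-maximal critical point is null, and since $\hat v_1$ is drawn from a density on $\mathcal{S}^{d-1}$, with probability $1$ the iterates converge to $v_1$ (up to sign).

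\textbf{Inductive step.} Assume $\hat v_j^t\to v_j$ for all $j<i$. Then along the realized trajectory $\tilde\nabla_i^t$ equals the Riemannian gradient of $u_i(\cdot\mid v_{j<i})$ plus a drift error that vanishes as $t\to\infty$ (continuity of $\tilde\nabla_i$ in the parents, plus the inductive hypothesis), on top of the $O(\eta_t^2)$ retraction error. Thus the $\hat v_i$-iteration is an asymptotically vanishing perturbation of gradient ascent on the fixed function $u_i(\cdot\mid v_{j<i})$; by robustness of the ODE method under such perturbations, $\hat v_i^t$ still converges to the critical set of $u_i(\cdot\mid v_{j<i})$. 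Lemma~\ref{well_posed_utils} and Proposition~\ref{util_shape} again identify $\pm v_i$ as the unique maximizer with all other critical points strict saddles (or the minimum), so the same saddle-avoidance argument yields $\hat v_i^t\to v_i$ up to sign, almost surely over the initialization. Taking $T\to\infty$, and then the union over $i=1,\dots,k$ of the finitely many null exceptional sets, completes the induction and hence the theorem.

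\textbf{Main obstacle.} The subtle point is the inductive step: the parents are only \emph{asymptotically} exact, so player $i$'s flow is genuinely nonautonomous with a drift error whose decay rate is tied to the parents' convergence, and one must guarantee that this error does not conspire with the step-size schedule to steer $\hat v_i$ onto the (null) stable manifold of a saddle. Establishing that the basin of attraction of $\pm v_i$ is open and becomes effectively absorbing once the parents are close enough and $\eta_t$ small enough, and that the perturbed stable manifolds of the saddles stay uniformly close to genuine submanifolds (so the exceptional set of initializations remains null), is where the real work lies; everything else is routine stochastic-approximation and Riemannian-optimization bookkeeping.
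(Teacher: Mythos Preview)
Your proposal is correct and follows the same high-level architecture as the paper's proof: exploit the triangular hierarchy, establish that each player performs (perturbed) Riemannian gradient ascent on a utility whose only stable critical point is the correct eigenvector, and argue that inexact parents contribute only a vanishing drift. The execution differs in two places. First, rather than your direct descent-lemma-plus-ODE-robustness argument, the paper invokes a black-box Riemannian stochastic-approximation theorem (Theorem~7 of Shah, 2019) after verifying smoothness of the update field (Lemma~\ref{lemma:smooth}) and identifying the unique stable fixed points (Lemma~\ref{lemma:right_fp}). Second, and more substantively for the obstacle you correctly flag, the paper handles the nonautonomous inductive step via an explicit quantitative error-propagation bound (Theorem~\ref{thm:error_prop}): an $\mathcal{O}(\epsilon)$ angular error in a parent propagates to at most an $\mathcal{O}(\epsilon^{1/2})$ angular error in the child's fixed point, which makes concrete the ``basin becomes absorbing once parents are close enough'' step that you leave qualitative. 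Conversely, you are more careful than the paper about saddle avoidance via the center-stable-manifold argument; the paper's proof is terse here, relying implicitly on the ODE-tracking theorem together with a measure-zero initialization assumption. Both routes are sound; yours is more self-contained, the paper's leans on external machinery but supplies sharper intermediate estimates.
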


\begin{algorithm}[!t]
\caption{Stochastic \sgeg{}}
\label{alg:gen_eg}
\begin{algorithmic}[1]
    \STATE Given: paired data streams $X_t \in \mathbb{R}^{b \times d_x}$ and $Y_t \in \mathbb{R}^{b \times d_y}$, number of parallel machines $M$ per player (minibatch size per machine $b'=\frac{b}{M}$), step size sequences $\eta_t$ and $\gamma_t$, scalar $\rho$ lower bounding $\sigma_{min}(B)$, and number of iterations $T$.
    \STATE $\hat{v}_i \sim \mathcal{S}^{d-1}$, i.e., $\hat{v}_i \sim \mathcal{N}(\mathbf{0}_d, \mathbf{I}_d); \hat{v}_i \leftarrow \hat{v}_i / ||\hat{v}_i|$ for all $i$
    \STATE $[B\hat{v}]_i \leftarrow \hat{v}_i^0$ for all $i$
    \FOR{$t = 1: T$}
        \PARFOR{$i = 1: k$}
            \PARFOR{$m = 1: M$}
            \STATE Construct $A_{tm}$ and $B_{tm}$ (*unbiased estimates using independent data batches)
            \STATE $\hat{y}_j = \frac{\hat{v}_j}{\sqrt{\textcolor{red}{\max(} \langle \hat{v}_j, \textcolor{blue}{[B\hat{v}]_j} \rangle \textcolor{red}{, \rho)}}}$ \label{eq:clipping}
            \STATE $\textcolor{blue}{[B\hat{y}]_j} = \frac{\textcolor{blue}{[B\hat{v}]_j}}{\sqrt{\textcolor{red}{\max(} \langle \hat{v}_j, \textcolor{blue}{[B\hat{v}]_j} \rangle \textcolor{red}{, \rho)}}}$  \label{eq:Byj}
            \STATE $\texttt{rewards} \leftarrow (\hat{v}_i^\top B_{tm} \hat{v}_i) A_{tm} \hat{v}_i - (\hat{v}_i^\top A_{tm} \hat{v}_i) B_{tm} \hat{v}_i$
            \STATE $\texttt{penalties} \leftarrow \sum_{j < i} (\hat{v}_i^\top A_{tm} \hat{y}_j) \big[ \langle \hat{v}_i, B_{tm} \hat{v}_i \rangle \textcolor{blue}{[B \hat{y}]_j} - \langle \hat{v}_i, \textcolor{blue}{[B \hat{y}]_j} \rangle B_{tm} \hat{v}_i \big]$
            \STATE $\tilde{\nabla}_{im} \leftarrow \texttt{rewards} - \texttt{penalties}$ \label{eq:gevp_update}
            \STATE \textcolor{blue}{$\nabla^{Bv}_{im} = (B_{tm}\hat{v}_i - [B\hat{v}]_i)$}
            \ENDPARFOR
        \STATE $\tilde{\nabla}_{i} \leftarrow \frac{1}{M} \sum_{m} [ \tilde{\nabla}_{im} ]$
        \STATE $\hat{v}_i' \leftarrow \hat{v}_i + \eta_t \tilde{\nabla}_{i}$
        \STATE $\hat{v}_i \leftarrow \frac{\hat{v}_i'}{|| \hat{v}_i' ||}$
        \STATE \textcolor{blue}{$\nabla^{Bv}_{i} \leftarrow \frac{1}{M} \sum_m [\nabla^{Bv}_{im}]$}
        \STATE \textcolor{blue}{$[B\hat{v}]_i \leftarrow [B\hat{v}]_i + \gamma_t \nabla^{Bv}_{i}$}
        \ENDPARFOR
    \ENDFOR
    \STATE return all $\hat{v}_i$
\end{algorithmic}
\end{algorithm}

In the big data setting, $A$ and $B$ are statistical estimates, i.e., expectations of quantities over large datasets. Precomputing exact estimates is computationally expensive, so we assume a data model that allows drawing small \emph{minibatches} of data at a time. Under such a model, stochastic approximation theory typically guarantees that as long as the update directions are \emph{unbiased}, i.e., equal in expectation to the updates with exact estimates, then an appropriate algorithm will converge to the true solution.

In order to construct an unbiased update direction given access to minibatches of data, we need to draw multiple minibatches independently at random. We can construct an unbiased estimate of products of expectations, e.g., $(\hat{v}_i^\top B \hat{v}_i) A \hat{v}_i$, by drawing an independent batch for each, e.g., one for $B$ and one for $A$. However, the $B$ that appears in the denominator of $\hat{y}_j$ is problematic; we cannot construct an unbiased estimate of the inverse of a random variable.

These problematic $\hat{y}_j$ terms only appear in the penalties, which are a function of the parents' eigenvector approximations. The first eigenvector has no parents, and so we can easily construct an unbiased estimate for it using multiple minibatches. We can then construct an unbiased estimate for each subsequent player by inductive reasoning. Intuitively, once the parents have been learned, $\hat{v}_j$ should be stable and so it should be possible to estimate $B\hat{v}_j$ from a running average, and in turn, $\hat{y}_j$. This suggests introducing an auxiliary variable, denoted $\textcolor{blue}{[B\hat{v}]_j}$ to track the running averages of $B\hat{v}_j$ (a similar approach is employed in~\citep{pfau2018spectral}). This effectively replaces $B\hat{y}_j$ with a non-random variable, avoiding the bias dilemma, at the expense of doubling the number of variables. Note that introducing this auxiliary variable implies the inner product $\langle \hat{v}_j, [B\hat{v}]_j \rangle$ may not be positive definite, therefore, we manually clip the result to be greater than or equal to $\rho$, the minimum singular value of $B$.

Precise pseudocode is given in \Algref{alg:gen_eg}. Differences to \Algref{alg:det_geg} are highlighted in color (auxiliary differences in \textcolor{blue}{blue}, clipping in \textcolor{red}{red}). We point out that introducing an auxiliary variable for player $i$ to track $[B\hat{v}]_i$ is not feasible because unlike player $i$'s parents' variables, $\hat{v}_i$ cannot be assumed non-stationary. This is why removing $\langle \hat{v}_i, B \hat{v}_i \rangle^2$ earlier from the denominator of~\eqref{eq:util_grad} was critical. Lastly, note that these modifications to the update are derived using an understanding of the intended computation and theoretical considerations; put shortly, \emph{autograd} libraries will not uncover this solution.
See~\Appxref{app:conv_sto} for more discussion and analysis of~\Algref{alg:gen_eg}.

\begin{figure}[ht!]
    \centering
    \includegraphics[width=0.95\textwidth]{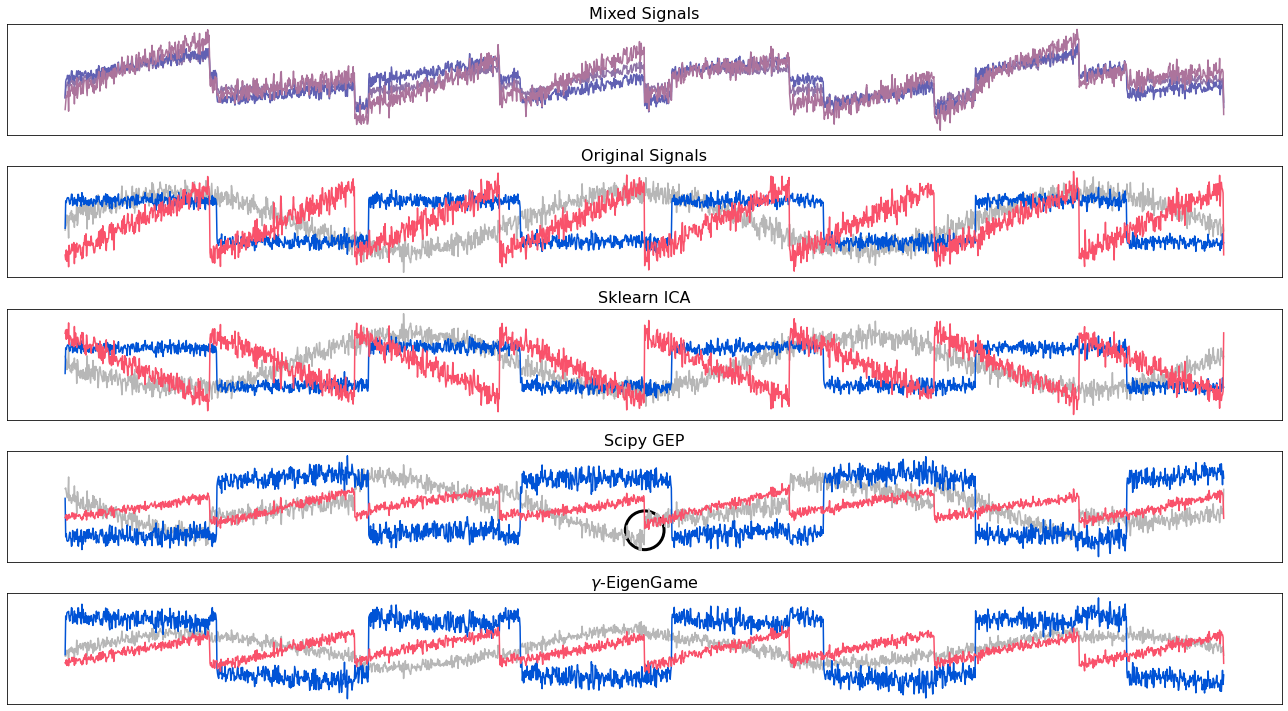}
    \caption[Blind Source Separation. \Algref{alg:gen_eg} (\sgeg{}) run for $1000$ epochs with minibatches of size $\frac{n}{4}$ subsampled i.i.d. from the dataset recovers the three original signals from the linearly mixed signals. Scikit-learn's FastICA also recovers the signals by maximizing an alternative measure of non-Gaussianity. Directly solving the \gep{} using \texttt{scipy.linalg.eigh(A, B)} fails to cleanly recover the gray sinusoid (black circle highlights a discontinuity) due to overfitting to the sample dataset. We confirm this by training \sgeg{} for many more iterations and show it exhibits similar artifacts in~\Appxref{app:more_experiments}.]
    {Blind Source Separation. \Algref{alg:gen_eg} (\sgeg{}) run for $1000$ epochs with minibatches of size $\frac{n}{4}$ subsampled i.i.d. from the dataset recovers the three original signals from the linearly mixed signals. Scikit-learn's FastICA\footnotemark also recovers the signals by maximizing an alternative measure of non-Gaussianity. Directly solving the \gep{} using \texttt{scipy.linalg.eigh(A, B)} fails to cleanly recover the gray sinusoid (black circle highlights a discontinuity) due to overfitting to the sample dataset. We confirm this by training \sgeg{} for many more iterations and show it exhibits similar artifacts  in~\Appxref{app:more_experiments}.}
    \label{fig:ica}
    \vspace{-0.8cm}
\end{figure}

\textbf{Computational Complexity and Parallelization.} The naive, per-iteration runtime and work costs of this update are $\mathcal{O}(bdk^2)$ with batch size $b$, but \textcolor{highlight}{due to the simplicty of the update (i.e., purely matrix-vector products)}, there are several opportunities for both model and data parallelism to reduce runtime cost to $\mathcal{O}(dk)$ (see \Appxref{app:complexity} for steps). \textcolor{highlight}{Note that low aggregate batch sizes can induce high gradient variance, slowing convergence~\citep{durmus2021riemannian}, making data parallelism desireable.}

To give a concrete example, if each player (model) parallelizes over $M=b$ machines (data) as indicated by the two \textbf{parfor}-loops, the complexity reduces to $\mathcal{O}(dk)$. This is easy to implement with modern libraries, e.g., \texttt{pmap} using Jax, and as in prior work~\citep{gemp2021eigengame}, the communication of parents $v_{j<i}$ between machines is efficient in systems with fast interconnects (e.g., TPUs) although this presents a bottleneck we hope to alleviate in future work. Alternative parallel implementations are discussed in~\Appxref{app:parallel}. Lastly, the update consists purely of inexpensive elementwise operations and matrix-vector products that can be computed quickly on deep learning hardware (e.g., GPUs and TPUs); unlike previous state-of-the-art in~\citep{meng2021online}, no calls to CPU-bound linear algebra subroutines are necessary.
\section{Related Work}\label{sec:related_work}

\footnotetext{Run with \texttt{logcosh} approximation to negentropy (see~\citep{hyvarinen2000independent} for explanation).}

The \gep{} is a fundamental problem in numerical linear algebra with numerous applications in machine learning and statistics. 
There is a long history in numerical computing of solving large \gep{} problems~\textcolor{highlight}{\citep{sorensen2002numerical,knyazev1994preconditioned,golub2002inverse,aliaga2012solving,klinvex2013parallel}}. Many methods iterate with what can be viewed as ``gradient-like'' updates~\citep{d1982group,d1992iterative}, however, they are not immediately applicable in the stochastic, streaming data setting.
To our knowledge, efficient approaches for the \gep{} or specific sub-problems (e.g., CCA) scale at best $\mathcal{O}(d^2 k)$ in the streaming data setting.

\citet{ge2016efficient} give an algorithm for top-$k$ \gep{} that makes repeated use of a linear system solver to approximate the subspace of the true generalized eigenvectors, but may return an arbitrary rotation of the solution. While their method is theoretically efficient, it requires precomputing $A$ and $B$ which prohibits its use in a streaming data setting.
The sequential least squares CCA algorithm proposed by~\citet{wang2016efficient} similarly requires access to the full dataset up front, however, in their case, it is to ensure the generalized eigenvectors are exactly unit-norm relative to the matrix $B$.
\citet{allen2017doubly} develop a \gep{} algorithm that is theoretically linear in the size of the input ($nd$) and $k$, however, they assume access to the entire dataset (non-streaming).
\textcolor{highlight}{LOBPCG~\citep{knyazev2017recent} is a non-streaming, Rayleigh maximizing technique with line-search and a preconditioned gradient.}

\citet{arora2017stochastic} propose a convex relaxation of the CCA problem along with a streaming algorithm with convergence guarantees. However, instead of learning $V_x \in \mathbb{R}^{d_x}$ and $V_y \in \mathbb{R}^{d_y}$ directly, it learns $M = V_x V_y^\top \in \mathbb{R}^{d_x \times d_y}$ which is prohibitively expensive to store in memory for high-dimensional problems. Moreover, the complexity of this algorithm is $\mathcal{O}(d^3)$ due to an expensive projection step requiring an SVD of $M$. They propose an alternative version \emph{without} guarantees that reduces the cost per iteration to $\mathcal{O}(dk^2)$.

\begin{figure}[ht!]
    \begin{subfigure}[b]{\textwidth}
    \centering
    \includegraphics[width=0.325\textwidth]{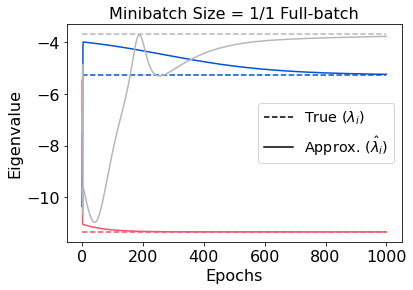}
    \includegraphics[width=0.325\textwidth]{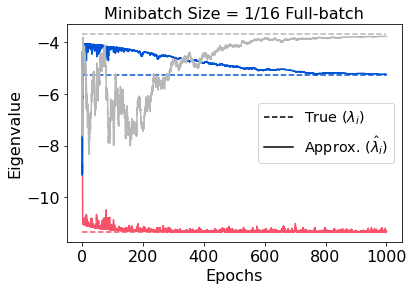}
    \includegraphics[width=0.325\textwidth]{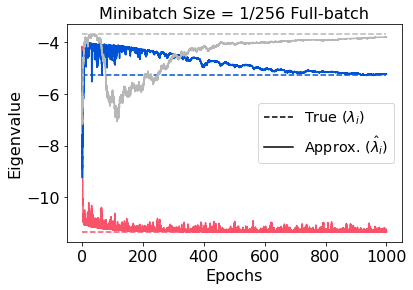}
    \end{subfigure}
    \begin{subfigure}[b]{\textwidth}
    \includegraphics[width=0.325\textwidth]{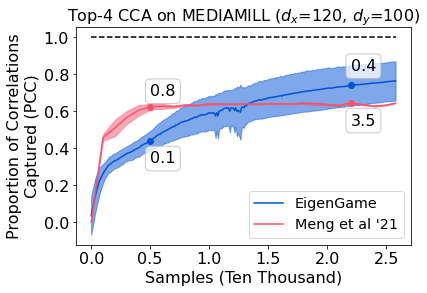}
    \includegraphics[width=0.325\textwidth]{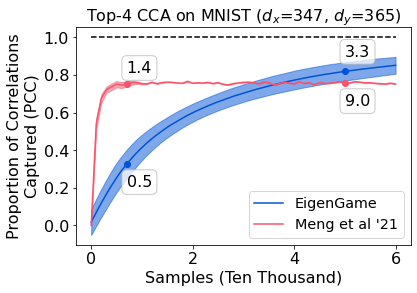}
    \includegraphics[width=0.325\textwidth]{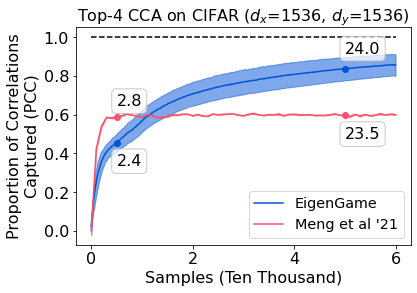}
    \end{subfigure}
    \caption{\textbf{Top:} \sgeg{} converges to the true \gep{} solution regardless of minibatch size in support of the unbiased nature of the derived update scheme (\Algref{alg:gen_eg}). \textcolor{highlight}{See~\Appxref{app:more_experiments:seq_vs_par} for additional experiments showing learning in sequence performs worse than its parallel counterpart above.} \textbf{Bottom:}
\sgeg{} compared to~\citep{meng2021online} on \emph{proportion of correlations captured}\textemdash $\sum_i^k \hat{\lambda}_i / \sum_i^k \lambda_i$. Shading indicates $\pm 1$ stdev. Markers indicate runtime in seconds.}
    \label{fig:unbiased_update}
    \vspace{-0.3cm}
\end{figure}

\citet{gao2019stochastic} also consider the streaming setting, but instead focuse on top-$1$ CCA and like~\citep{wang2016efficient} and~\citep{allen2017doubly}, use shift-invert preconditioning to accelerate convergence.
\citet{bhatia2018gen} solves top-$1$ \gep{} in the streaming setting.%

Most recently, \citet{meng2021online} proposed a method to estimate top-$k$ CCA in a streaming setting. Their algorithm requires several expensive Riemmanian optimization subroutines, giving a per iteration complexity of $\mathcal{O}(d^2k)$. Their convergence guarantee is in terms of subspace error, so as mentioned above, the projection matrices $V_x$ and $V_y$ may be rotations of their ordered (by correlation) counterparts. Their approach is the current state-of-the-art when considering CCA in the streaming setting for large datasets.

\section{Experiments}\label{sec:experiments}

\nocite{krizhevsky2009learning,deng2012mnist,snoek2006challenge}

We demonstrate our proposed stochastic approach, \Algref{alg:gen_eg}, on solving ICA and CCA via their \gep{} formulations, and provide empirical support for its veracity. A Jax implementation is available at {\small \url{github.com/deepmind/eigengame}}. Scipy's \texttt{linalg.eigh(A, B)}\citep{scipy} is treated as ground truth when the data size permits. Hyperparameters are listed in~\Appxref{app:hyps}.

\textbf{Independent Components Analysis.}
ICA can be used to disentangle mixed signals such as in the cocktail party problem. Here, we use the \gep{} formulation to unmix three linearly mixed signals. Note that because the \gep{} learns a linear unmixing of the data, the magnitude (and sign) of the original signals cannot be learned. Any change in the magnitude of a signal extracted by the \gep{} can be offset by adjusting the magnitude and sign of a mixing weight.

We replicate a synthetic experiment from \texttt{scikit-learn}\citep{scikit-learn} and compare \Algref{alg:gen_eg} to several approaches. Figure~\ref{fig:ica} shows our stochastic approach (\sgeg{}) is able to recover the shapes of the original signals (length $n=2000$ time series).

\textbf{Implicit Regularization via Fixed Step Size Updates.} Note that if we run \Algref{alg:gen_eg} for $100\times$ more iterations with $1/10$th the step size, we converge to the exact \gep{} solution (as found by \texttt{scipy}) and see similar artifacts in the extracted signals due to overfitting. Recently, \citet{durmus2021riemannian} proved that fixed step size Riemannian approximation schemes converge to a stationary distribution around their solutions, which suggests \sgeg{} enjoys a natural regularization property and explains its high performance on this the unmixing task. In~\Appxref{app:more_experiments}, we show that it is difficult to achieve similar results with \texttt{scipy} by regularizing $A$ or $B$ directly (e.g., $A + \epsilon I$) prior to calling \texttt{scipy.linalg.eigh}.

\begin{figure}[t]
    \centering
    \includegraphics[width=0.325\textwidth]{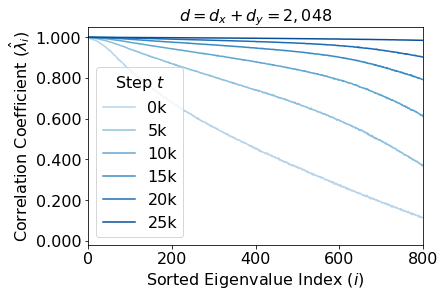} %
    \includegraphics[width=0.325\textwidth]{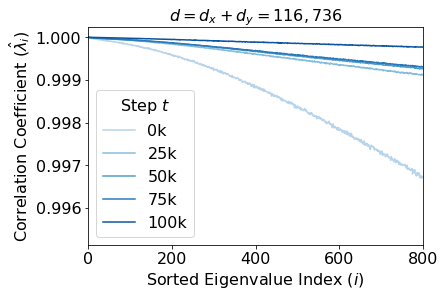} %
    \includegraphics[width=0.325\textwidth]{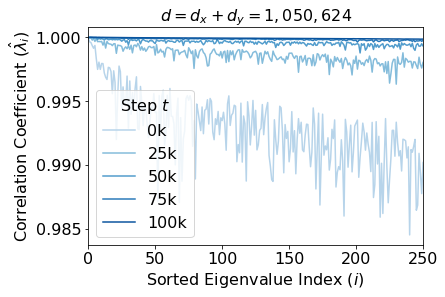} %
    \caption{\sgeg{} compares the representation (activations) of a deep network trained on CIFAR-10 for $t$ steps to that of its final learned representation ($25k$ or $100k$ steps). Curves with higher correlation coefficients indicate more similar representations.}
    \label{fig:cifar10_analysis}
    \vspace{-0.4cm}
\end{figure}

\textbf{Unbiased Updates.}
We empirically support our claim that the fixed point of \Algref{alg:gen_eg} is unbiased regardless of minibatch size.\footnote{The gray line converges last because we chose to minimize rather than maximize kurtosis.} Not only does \sgeg{} recover the same generalized eigenvalues, but the top row of figure~\ref{fig:unbiased_update} also suggests that the algorithm takes a similar trajectory for each minibatch size.

\paragraph{Canonical Correlations Analysis.}\label{sec:experiments:cca:comparison}
Here, we use \sgeg{} to linearly project multimodal datasets into lower-dimensional spaces such that they are maximally correlated.
As discussed in related work, several approaches have been developed to extend CCA to streaming, high-dimensional datasets. Recall that our approach has per-iteration complexity $\mathcal{O}(bdk)$ with the previous state-of-the-art in the streaming setting having $\mathcal{O}(d^2k)$~\citep{meng2021online}. We replicate the experiments of ~\citep{meng2021online} and compare against their approach on three datasets.

The bottom row of figure~\ref{fig:unbiased_update} shows our approach is competitive with~\citep{meng2021online}.
We also point out that while the previous approach by~\citet{meng2021online} enjoys theoretical convergence guarantees with rates, it appears to slow in progress near a biased solution.
These datasets are low dimensional ($d \le 3072$), so we are able to obtain ground truth eigenvectors efficiently using \texttt{scipy}. Our next set of experiments considers much higher dimensional where a naive call to \texttt{scipy} fails.

\nocite{matlab2018,numpy}

\textbf{Large-scale Neural Network Analysis.}\label{sec:neural_cca}
Recently, CCA has been used to aid in interpreting the representations of deep neural networks~\citep{raghu2017svcca,morcos2018insights,kornblith2019similarity}. These approaches are restricted to layer-wise comparisons of representations, reduced-dimensionality views of representations (via PCA), or small dataset sizes to accomodate current limits of CCA approaches. We replicate one of their analyses (specifically Fig. 1a of~\citep{morcos2018insights}) on the activations of an entire network (not just a layer), unblocking this type of analysis for larger deep learning models.

The largest dimensions handled in~\citep{raghu2017svcca} are $\mathcal{O}(10^3)$. Figure~\ref{fig:cifar10_analysis} demonstrates our approach (parallelized over $8$ TPU chips) on $\mathcal{O}(10^3)$ dimensions (left), $\mathcal{O}(10^5)$ dimensions (middle), and $\mathcal{O}(10^6)$ dimensions (right). Note that in these experiments, we are loading minibatches of CIFAR-10 images, running them through a deep convolutional network, harvesting the activations, and then passing them to our distributed \sgeg{} solver. As mentioned in Section~\ref{sec:game}, our understanding of the geometry of the utilities suggests replacing the standard gradient ascent on $\hat{v}_i$ with Adam~\citep{kingma2014adam}; Adam exhibits behavior that implicitly improves stability around equilibria~\citep{gemp2019unreasonable}. For the smaller $\mathcal{O}(10^3)$ setting, where we can exactly compute ground truth using \texttt{scipy}, we confirm that our approach converges to the top-$1024$ (out of $2048$ possible) eigenvectors with a subspace error of $0.002$ (see~\Appxref{app:prelims}).

\section{Conclusion}\label{sec:conc}

We presented \geg{}, a game-theoretic formulation of the generalized eigenvalue problem (\gep{}). Our formulation enabled the development of a novel algorithm that scales to massive streaming datasets. The \gep{} underlies many classical data processing tools across the sciences, and we believe our proposed approach unblocks its use on the ever-growing size of modern datasets in the streaming setting. In particular, it achieves this by parallelizing computation using modern AI-centric distributed compute infrastructure such as GPUs and TPUs.

\paragraph{Acknowledgements.}
We thank Claire Vernade for reviewing the paper and proofs and early discussions. Thore Graepel for early discussions. Zhe Wang for guidance on engineering. Sukhdeep Singh for project management.

\bibliography{bib}
\bibliographystyle{iclr2023_conference}

\newpage
\onecolumn
\begin{appendices}

\section{Preliminaries: Generalized Eigenvalue Problem}\label{app:prelims}

The following known properties of the \gep{} are useful for our analysis and broadening the scope of \gep{} applications.

\begin{lemma}[$B$-orthogonality]
\label{b_orth}
$v_i^\top B v_j = v_j^\top B v_i = 0$ for any distinct pair of generalized eigenvectors of $Av = \lambda Bv$ where $A$ is symmetric and $B$ is symmetric positive definite.
\end{lemma}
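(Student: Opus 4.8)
The plan is to use the classical symmetry argument for generalized eigenproblems, which avoids ever forming $B^{-1}$. Fix two generalized eigenpairs $(\lambda_i, v_i)$ and $(\lambda_j, v_j)$ with $\lambda_i \neq \lambda_j$; this is the case of interest, since if two eigenvectors share an eigenvalue then $B$-orthogonality is a matter of choosing a basis for the eigenspace rather than something automatic, so ``distinct pair'' should be read as having distinct associated eigenvalues (which holds under the distinctness hypotheses used throughout the paper).

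First I would left-multiply $A v_i = \lambda_i B v_i$ by $v_j^\top$ to get $v_j^\top A v_i = \lambda_i\, v_j^\top B v_i$, and symmetrically $v_i^\top A v_j = \lambda_j\, v_i^\top B v_j$. Since both sides are scalars and $A^\top = A$, $B^\top = B$, transposing gives $v_j^\top A v_i = v_i^\top A v_j$ and $v_j^\top B v_i = v_i^\top B v_j$. Combining these identities yields $\lambda_i\, v_i^\top B v_j = \lambda_j\, v_i^\top B v_j$, i.e.\ $(\lambda_i - \lambda_j)\, v_i^\top B v_j = 0$. Since $\lambda_i \neq \lambda_j$, we conclude $v_i^\top B v_j = 0$, and one more use of $B^\top = B$ gives $v_j^\top B v_i = 0$.

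One point worth spelling out is that the generalized eigenvalues are real, so all the manipulations stay within $\mathbb{R}$: this follows from $B \succ 0$, because $Av = \lambda Bv$ is equivalent (via the substitution $w = B^{1/2} v$) to $B^{-1/2} A B^{-1/2} w = \lambda w$, and $B^{-1/2} A B^{-1/2}$ is symmetric, hence has real spectrum. I do not expect any genuine obstacle here — the core of the argument is elementary — the only care required is the interpretation of ``distinct'' noted above and, if one wants full rigor, the one-line reduction establishing reality of the spectrum.
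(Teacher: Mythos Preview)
Your argument is correct and is in fact the textbook symmetry manipulation for the symmetric-definite pencil. The paper, however, proves the lemma by a different route: it passes to the equivalent symmetric eigenproblem $B^{-1/2} A B^{-1/2} w = \lambda w$ via $w = B^{1/2} v$, invokes the spectral theorem to choose the $w$'s orthonormal, and then reads off $v_i^\top B v_j = w_i^\top w_j = \delta_{ij}$. So the paper establishes $B$-orthogonality as a corollary of ordinary orthogonality after a change of variables, whereas you work directly with the pencil and the identity $(\lambda_i - \lambda_j)\, v_i^\top B v_j = 0$.

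Your approach is more elementary in that it never requires forming $B^{1/2}$ or $B^{-1/2}$ and makes the role of the distinct-eigenvalue hypothesis explicit. The paper's approach, on the other hand, simultaneously yields the normalization $v_i^\top B v_i = 1$ and sets up the similarity transformation that the paper reuses in Proposition~\ref{sim_mat} and in the subspace-error computation; so while your proof is cleaner for the lemma in isolation, the paper's detour through $w = B^{1/2} v$ is doing double duty for later results. Interestingly, you do invoke the same reduction, but only to justify reality of the spectrum; the paper leans on it for the orthogonality itself.
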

\begin{proof}
Consider the eigenvalue problem $B^{-\frac{1}{2}} A B^{-\frac{1}{2}} w = \lambda w$. Let $v$ be a generalized eigenvector of the generalized eigenvalue problem $Av = \lambda' Bv$. Then the former eigenvalue problem is solved by $w = B^{\frac{1}{2}} v$. By inspection, $B^{-\frac{1}{2}} A B^{-\frac{1}{2}} w = B^{-\frac{1}{2}} A B^{-\frac{1}{2}} B^{\frac{1}{2}} v = B^{-\frac{1}{2}} A v = \lambda' B^{-\frac{1}{2}} B v = \lambda' B^{\frac{1}{2}} v = \lambda' w$. Direct computation of the Rayleigh quotients for both problems reveals $\lambda = \lambda'$. Note that $B$ is positive definite, i.e., full-rank, establishing a bijection between $v$ and $w$: $v = B^{-\frac{1}{2}} w$.  Also, note that $B^{-\frac{1}{2}} A B^{-\frac{1}{2}}$ is symmetric because $A$ and $B$ are symmetric, therefore, $w$ may be chosen such that $W^\top W = I$ which implies $w_i^\top w_j = \delta_{ij} = v_i^\top B^{\frac{1}{2}} B^{\frac{1}{2}} v_j = v_i^\top B v_j$, i.e., the generalized eignvectors are $B$-orthogonal.
\end{proof}

\begin{proposition}[Similar Matrices]
\label{sim_mat}
Given symmetric matrices $A$ and $B \succ 0$, consider the generalized eigenvalue problem $Av = \lambda' Bv$ with $\lambda'$ and $v$ its corresponding generalized eigenvalues and eigenvectors. Then the eigenvectors and eigenvalues of the related eigenvalue problem $B^{-\frac{1}{2}} A B^{-\frac{1}{2}} w = \lambda w$ are $w=B^{\frac{1}{2}}v$ and $\lambda=\lambda'$.
\end{proposition}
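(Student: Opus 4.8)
The plan is to exhibit an explicit invertible change of variables relating the two problems --- essentially the same substitution already used in the proof of Lemma~\ref{b_orth}. Since $B \succ 0$, its symmetric positive-definite square root $B^{\frac{1}{2}}$ exists and is invertible, with inverse $B^{-\frac{1}{2}}$ likewise symmetric positive definite. Set $M := B^{-\frac{1}{2}} A B^{-\frac{1}{2}}$, which is symmetric since $A$ is.

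First I would take a generalized eigenpair $(v, \lambda')$ satisfying $Av = \lambda' Bv$ and define $w = B^{\frac{1}{2}} v$, so that $v = B^{-\frac{1}{2}} w$. Substituting gives $A B^{-\frac{1}{2}} w = \lambda' B B^{-\frac{1}{2}} w = \lambda' B^{\frac{1}{2}} w$, and left-multiplying by $B^{-\frac{1}{2}}$ yields $B^{-\frac{1}{2}} A B^{-\frac{1}{2}} w = \lambda' w$, i.e. $M w = \lambda' w$. Thus $(w,\lambda') = (B^{\frac{1}{2}} v, \lambda')$ is an eigenpair of $M$.

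Conversely, I would run the same chain in reverse: given $M w = \lambda w$, put $v = B^{-\frac{1}{2}} w$ and left-multiply by $B^{\frac{1}{2}}$ to recover $A B^{-\frac{1}{2}} w = \lambda B^{\frac{1}{2}} w$, that is $Av = \lambda Bv$. Because $B^{\frac{1}{2}}$ is a linear bijection of $\mathbb{R}^d$, the map $v \mapsto B^{\frac{1}{2}} v$ is a one-to-one correspondence between eigenvectors of the GEP and eigenvectors of $M$ that leaves the associated scalar unchanged; hence the eigenvalues coincide, $\lambda = \lambda'$, as claimed. (Equivalently, one may simply note $B^{-1} A = B^{-\frac{1}{2}} M B^{\frac{1}{2}}$, a genuine matrix similarity, which also explains the terminology.)

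I do not expect a real obstacle: the core is a two-line computation together with the invertibility of $B^{\frac{1}{2}}$. The only point deserving mild care is phrasing the correspondence as a bijection, so that the full spectrum (with multiplicities, and with $B$-orthogonality inherited from orthogonality of the $w_i$ as in Lemma~\ref{b_orth}) transfers, rather than establishing only one inclusion; this is exactly what $B \succ 0$ buys us.
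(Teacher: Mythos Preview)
Your proposal is correct and uses essentially the same substitution $w = B^{\frac{1}{2}} v$ that the paper (via Lemma~\ref{b_orth}) employs. The only cosmetic difference is that the paper verifies $\lambda = \lambda'$ by separately computing the two Rayleigh quotients, whereas you read it off directly from the transformed eigenvalue equation and additionally spell out the converse direction; both are immediate from the invertibility of $B^{\frac{1}{2}}$.
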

\begin{proof}
    The relationship between the eigenvectors of the two problems is proven in Lemma~\ref{b_orth}. The relationship between the eigenvalues can be proven by inspection after calculating the Rayleigh quotients for both problems:
    \begin{align}
        \lambda' &= \frac{v^\top A v}{v^\top B v}
        \\ \lambda &= \frac{w^\top B^{-\frac{1}{2}} A B^{-\frac{1}{2}} w}{w^\top w}
        \\ &= \frac{v^\top B^{\frac{1}{2}} B^{-\frac{1}{2}} A B^{-\frac{1}{2}} B^{\frac{1}{2}} v}{v^\top B v}
        \\ &= \frac{v^\top A v}{v^\top B v}
        \\ &= \lambda'.
\end{align}
\end{proof}

\subsection{Computing Subspace Error for \gep{}}

Lemma~\ref{b_orth} states that the generalized eigenvectors are $B$-orthogonal rather than orthogonal under the standard Euclidean basis. Therefore, we cannot compute subspace error in the same way as is typically done for e.g., singular value decomposition. However, we can exploit Lemma~\ref{sim_mat} to compute subspace error for the related eigenvalue problem $B^{-\frac{1}{2}} A B^{-\frac{1}{2}} w = \lambda w$ which \emph{does} have orthogonal eigenvectors due to its symmetry.

Formally, let $v$ be a solution to the \gep{}, $Av = \lambda' Bv$. Then by Lemma~\ref{sim_mat}, $w = B^{1/2} v$ is a solution to $B^{-1/2} A B^{-1/2} w = \lambda w$, with eigenvalue $\lambda = \lambda'$. Leveraging this equivalence, we can measure subspace error of the \gep{} solution by first mapping it to the normalized case and computing subspace error there where $W$ contains the top-$k$ eigenvectors of $B^{-1/2} A B^{-1/2}$. Also let $\hat{W} = B^{1/2} \hat{V}$ where $\hat{V}$ contains our top-$k$ approximations. Given the top-$k$ ground truth eigenvectors $W$ and approximations $W$, normalized subspace error can then be computed as $1 - \frac{1}{k} \trace(U^* P) \in [0, 1]$ where $U^* = W W^\dagger$ and $P = \hat{W} \hat{W}^\dagger$~\citep{gemp2021eigengame,tang2019exponentially}.

\subsection{Courant-Fischer Min-Max Principle}\label{cf_minmax}

The Courant-Fischer Min-Max principle states that the $i$th largest generalized eigenvalue is given by the \emph{minimum} possible Rayleigh quotient within the $i$-dimensional subspace $S$ that captures \emph{maximal} trace~\citep{avron2008generalized,parlett1998symmetric}:
\begin{align}
    v_i &= \argmin_{v_i \in S} \max_{\text{dim}(S)=i} \frac{v_i^\top A v_i}{v_i^\top B v_i}.
\end{align}

Note this defines each eigenvalue of the \gep{} as the value (at Nash equilibrium) of a min-max (two-player, zero-sum) game rather than the entire set of top-$k$ eigenvectors/eigenvalues as the Nash equilibrium / utility-at-Nash of a $k$-player, general-sum game.
\section{\geg{} is Well-Posed}\label{app:well_posed}

First, we prove \geg{} suitably captures the top-$k$ \gep{}.

\begin{replemma}{well_posed_utils}[Well-posed Utilities]
Given exact parents and assuming the top-$k$ eigenvalues of $B^{-1}A$ are distinct and positive, the maximizer of player $i$'s utility is the unique generalized eigenvector $v_i$ (up to sign, i.e., $-v_i$ is also valid).
\end{replemma}
\begin{proof}
\textcolor{highlight}{\emph{Approach}: We will represent each $\hat{v}_i$ as a linear combination of the true eigenvectors, $v_p$ for $p \in \{1, d\}$. We will then show that maximizing the utility for each player with exact parents is equivalent to solving a linear program. This resulting problem has a unique solution, which is the true eigenvector $v_i$.}

Assume the parents have been learned exactly and let $\hat{v}_i = \sum_p w_p v_p$ with $||\hat{v}_i|| = 1$ and where $w_p$ are the weights of the linear combination. Expand and simplify the following expressions that appear in the utility definition with the knowledge that the generalized eigenvectors are guaranteed to be $B$-orthogonal, i.e., $v_i^\top B v_j = 0$ for all $i \ne j$ (see Lemma~\ref{b_orth} in appendix):

\begin{align}
    \langle \hat{v}_i , B \hat{v}_i \rangle &= (\sum_p w_p v_p)^\top B (\sum_l w_l v_l) = \sum_p \sum_l w_p w_l v_p^\top B v_l = \sum_p w_p^2 \langle v_p, B v_p \rangle
    \\ \langle \hat{v}_i , A \hat{v}_i \rangle &= (\sum_p w_p v_p)^\top A (\sum_l w_l v_l) = \sum_p \sum_l \lambda_l w_p w_l v_p^\top B v_l = \sum_p \lambda_p w_p^2 \langle v_p, B v_p \rangle
    \\ \langle \hat{v}_i , B v_j \rangle &= (\sum_p w_p v_p)^\top B v_j = w_j \langle v_j, B v_j \rangle
    \\ \langle \hat{v}_i , A v_j \rangle &= (\sum_p w_p v_p)^\top A v_j = \lambda_j  w_j \langle v_j, B v_j \rangle.
\end{align}

Plugging these in to the utility function, we find
\begin{align}
    u_i(\hat{v}_i \vert v_{j<i}) &= \frac{\langle \hat{v}_i, A \hat{v}_i \rangle}{\langle \hat{v}_i, B \hat{v}_i\rangle} - \sum_{j < i} \frac{\text{\textcolor{blue}{$\langle v_j, A v_j \rangle$}} \langle \hat{v}_i, B v_j \rangle^2}{\langle v_j, B v_j \rangle^2 \langle \hat{v}_i, B \hat{v}_i \rangle}
    \\ &= \frac{\langle \hat{v}_i, A \hat{v}_i \rangle}{\langle \hat{v}_i, B \hat{v}_i\rangle} - \sum_{j < i} \frac{\text{\textcolor{green}{$\lambda_j \langle \hat{v}_i, B v_j \rangle$}}^2}{\langle v_j, B v_j \rangle \langle \hat{v}_i, B \hat{v}_i \rangle} \hspace{1.0cm} \text{\textcolor{blue}{$\langle v_j , A v_j \rangle \rightarrow \langle v_j , \lambda_j B v_j \rangle$}}
    \\ &= \frac{\langle \hat{v}_i, A \hat{v}_i \rangle}{\langle \hat{v}_i, B \hat{v}_i\rangle} - \sum_{j < i} \frac{\langle \hat{v}_i, A v_j \rangle \langle \hat{v}_i, B v_j \rangle}{\langle v_j, B v_j \rangle \langle \hat{v}_i, B \hat{v}_i \rangle} \hspace{1.0cm} \text{\textcolor{green}{$\langle \hat{v}_i , \lambda_j B v_j \rangle \rightarrow \langle \hat{v}_i , A v_j \rangle$}}
    \\ &= \frac{1}{\sum_p w_p^2 \langle v_p, B v_p \rangle} \Big[ \sum_l \lambda_l w_l^2 \langle v_l, B v_l \rangle - \sum_{j < i} \frac{(\lambda_j  w_j \langle v_j, B v_j \rangle)(w_j \cancel{\langle v_j, B v_j \rangle})}{\cancel{\langle v_j, B v_j \rangle}} \Big]
    \\ &= \sum_l \lambda_l z_l - \sum_{j < i} \lambda_j  z_j = \sum_{j \ge i} \lambda_j  z_j.
\end{align}

where
\begin{align}
    z_j &= \frac{w_j^2 \langle v_j, B v_j \rangle}{\sum_p w_p^2 \langle v_p, B v_p \rangle} = \frac{w_j^2 b_j^2}{\sum_p w_p^2 b_p^2} = \frac{q_j^2}{\sum_p q_p^2} \label{zq}
\end{align}
and $z \in \Delta^{d-1}$.

This is a linear optimization problem over the simplex. Given that the eigenvalues are distinct and positive, we have that the unique solution is $z = e_i$, the onehot vector with a $1$ at index $i$.

In order to prove uniqueness of $w$ (up to sign), we apply Lemma~\ref{simplex_sphere_bijection}, which proves a bijection (up to sign) between $z$ and $w$, completing the proof.
\end{proof}

\begin{lemma}
\label{simplex_sphere_bijection}
Let $z \in \Delta^{d-1}$ such that $z_j = \frac{w_j^2 \langle v_j, B v_j \rangle}{\sum_p w_p^2 \langle v_p, B v_p \rangle}$ where $w$ parameterizes the approximation $\hat{v}_i = \sum_p w_p v_p \in \mathcal{S}^{d-1}$. There exists a unique bijection (up to sign of $w_j$) between $z_j$ and $w_j$, i.e., $w_j = \pm g(z)_j$.
\end{lemma}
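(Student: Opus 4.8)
The plan is to exhibit the map $g$ explicitly and verify it is a two-sided inverse of the squaring-and-normalizing map $w \mapsto z$ restricted to the unit sphere. First I would fix notation: write $b_p^2 = \langle v_p, B v_p \rangle > 0$ (positive since $B \succ 0$), so that $z_j = w_j^2 b_j^2 / \sum_p w_p^2 b_p^2$. The constraint $\hat v_i = \sum_p w_p v_p \in \mathcal{S}^{d-1}$ translates, using $B$-orthogonality of the $v_p$ (Lemma~\ref{b_orth}) only if we had $B=I$ — but in general $\|\hat v_i\|_2 = 1$ is an awkward Euclidean constraint on $w$. The cleaner route is to observe that the quantity actually pinned down is $\sum_p q_p^2$ where $q_p = w_p b_p$: the normalization in $z$ is invariant to the overall scale of $w$, so $z$ alone cannot determine $\|w\|$; it is the sphere constraint that fixes the scale. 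Therefore I would split the argument into (a) recovering the \emph{ratios} $w_j^2/w_1^2$ (say) from $z$, which is immediate since $w_j^2 b_j^2 / (w_1^2 b_1^2) = z_j / z_1$, hence $w_j^2 = (b_1^2/b_j^2)(z_j/z_1) w_1^2$, and (b) fixing $w_1^2$ by imposing $\sum_p w_p^2 \langle v_p, v_p\rangle \cdots$ — wait, here is the subtlety.

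The main obstacle is exactly that the sphere constraint $\|\sum_p w_p v_p\|_2 = 1$ is \emph{not} diagonal in the $v_p$ basis unless $B = I$, so $\|w\|_2 \ne 1$ in general and the scale of $w$ is not simply $1$. To handle this I would write $\hat v_i = V w$ where $V$ is the matrix whose columns are the generalized eigenvectors; then $\|\hat v_i\|_2^2 = w^\top (V^\top V) w = 1$ is a genuine quadratic form in $w$, positive definite since $V$ is invertible (the $v_p$ are linearly independent, being $B$-orthogonal with $B \succ 0$). Once the direction of $w$ is determined up to sign \emph{per coordinate} from step (a) — i.e. $w_j = \sigma_j\, t\, c_j(z)$ with $c_j(z) = \sqrt{z_j}/b_j$ (taking $c_1$ as reference absorbed into $t$), signs $\sigma_j \in \{\pm 1\}$, and scale $t>0$ — substituting into $w^\top(V^\top V)w = 1$ gives a single equation $t^2 \, \phi(z) = 1$ for the unknown $t$, where $\phi(z) = c(z)^\top (V^\top V) c(z) > 0$. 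This determines $t = 1/\sqrt{\phi(z)}$ uniquely and positively. Thus $g(z)_j := c_j(z)/\sqrt{\phi(z)}$, and $w_j = \pm g(z)_j$.

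It remains to check the two compositions. For $z \mapsto w \mapsto z$: plugging $w_j = \sigma_j g(z)_j$ back into the formula for $z$, the common factor $\phi(z)^{-1}$ and the sign-squares cancel, leaving $z_j = g(z)_j^2 b_j^2 / \sum_p g(z)_p^2 b_p^2 = c_j(z)^2 b_j^2 / \sum_p c_p(z)^2 b_p^2 = z_j / \sum_p z_p = z_j$ since $z \in \Delta^{d-1}$. For $w \mapsto z \mapsto w$ (up to per-coordinate sign): starting from any $w$ with $Vw \in \mathcal{S}^{d-1}$, form $z$, then $c_j(z) = \sqrt{z_j}/b_j = |w_j| / \sqrt{\sum_p w_p^2 b_p^2}$, and $\phi(z) = (\sum_p w_p^2 b_p^2)^{-1} |w|^\top(V^\top V)|w|$ — here one must be slightly careful that $V^\top V$ is applied to $|w|$, not $w$; but since we only claim recovery \emph{up to sign of each $w_j$}, this is acceptable: $g(z)_j = |w_j| / \sqrt{|w|^\top (V^\top V)|w|}$, and if we additionally use $w^\top(V^\top V)w = 1$ on the correct-sign representative the normalization is consistent. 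I expect the only genuinely fiddly point to be bookkeeping the signs in $\phi$; everything else is the elementary observation that squaring-and-normalizing is invertible on a ray once the ray's scale is pinned by a positive-definite quadratic constraint, with the sign ambiguity $2^d$-fold as stated.
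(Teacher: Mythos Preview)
Your approach is essentially the paper's: recover the direction of $(w_j^2)_j$ from $z$, then fix the overall scale via the positive-definite quadratic constraint $w^\top (V^\top V) w = 1$ (the paper rewrites this as an ellipsoid constraint $q^\top D^{-1} q = 1$ on the change of variables $q_j := w_j b_j$). You are more explicit in constructing $g$ and in checking both compositions, and you correctly flag the one delicate point---that $\phi(z)$ can depend on the sign pattern through the off-diagonals of $V^\top V$---which the paper's own argument also glosses over.
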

\begin{proof}
Let $b_j = \langle v_j, B v_j \rangle$ and $q_j = w_j b_j$ so that $w_j = q_j / b_j$. Then $\hat{v}_i = \sum_p \frac{q_p}{b_p} v_p$. Also, $q_j^2 = c z_j$ where $c = \sum_p q_p^2$ so that $q_j^2$ is uniquely defined up to a scalar multiple, i.e., its direction is immediately unique by this formula but not its magnitude. Recall $\langle v_i, B v_j \rangle = 0$ for all $i \ne j$ which implies $V^\top B V$ is diagonal. Therefore, the constraint $||\hat{v}_i|| = ||w||^2_{V^\top V} = 1$ translates to $||\frac{q}{b}||^2_{V^\top V} = q^\top (V^\top B V)^{-1/2} (V^\top V) (V^\top B V)^{-1/2} q = q^\top D^{-1} q = 1$. In other words, whereas an approximate eigenvector for the standard eigenvalue problem can be modeled as choosing a vector on the unit-sphere, an approximate eigenvector for the generalized eigenvalue problem is modeled as choosing a vector on an ellipsoid ($D$ is positive definite because $V^\top V$ is symmetric positive definite assuming distinct eigenvalues, and we are given $B$ is symmetric positive definite). This result uniquely defines a magnitude for $q$, therefore, combining it with the previous result uniquely defines $w_j^2$ from $q_j^2$ completing the bijection. The only degree of freedom that remains is the sign of $w_j$ which is expected as both $v_i$ and $-v_i$ are valid eigenvectors.
\end{proof}

Although player $i$'s utility $u_i$ appears abstruse, it actually has a simple explanation and structure.

\begin{repproposition}{util_shape}[Utility Shape]
Each player's utility is periodic in the angular deviation ($\theta$) along the sphere. Its shape is sinusoidal, but with its angular axis ($\theta$) smoothly deformed as a function of $B$. Most importantly, every local maximum is a global maximum (see Figure~\ref{fig:warped_sinusoid} for an example).
\end{repproposition}
\begin{proof}
Lemma~\ref{well_posed_utils} proves each utility function can be represented as a linear function over a simplex $z \in \Delta^{d-1}$. Lemma~\ref{simplex_sphere_bijection} then proves this simplex can be parameterized by a variable $q$ constrained to an ellipsoid with curvature $D = \texttt{diag}(\ldots, \langle v_i, B v_i \rangle, \ldots)$. This matches the analysis of EigenGame exactly, except that $D=I$ in that previous work. The implication is that each utility function as defined in~\eqref{eqn:appx:util_new} is also a cosine, but with its angular axis deformed according to $D$.
\end{proof}

As an example consider setting
\begin{align}
    A &= \begin{bmatrix} 0.77759061 & 0.26842584
    \\ 0.26842584 & 0.87788983
    \end{bmatrix}
    & B &= \begin{bmatrix} 0.2325605 & 0.06042127
    \\ 0.06042127 & 0.03241424
    \end{bmatrix}
\end{align}
and observe the utilities in Figure~\ref{fig:warped_sinusoid}.

Our proposed utilities tie nicely back to previous work~(\cite{gemp2022eigengame}, Appx. J.2) via their gradients and our derived update directions.

\begin{lemma}[\geg{} Gradient]\label{lemma:grad_derivation}
The gradient of player $i$'s utility with respect to $\hat{v}_i$ is
\begin{align}
    2 \times \Big[ \frac{(\hat{v}_i^\top B \hat{v}_i) A \hat{v}_i - (\hat{v}_i^\top A \hat{v}_i) B \hat{v}_i}{\langle \hat{v}_i, B \hat{v}_i \rangle^2} - \sum_{j < i} \frac{\hat{\lambda}_j}{\langle \hat{v}_j, B \hat{v}_j \rangle} (\hat{v}_i^\top B \hat{v}_j) \frac{\big[ \langle \hat{v}_i, B \hat{v}_i \rangle B \hat{v}_j - \langle \hat{v}_i, B \hat{v}_j \rangle B \hat{v}_i \big]}{\langle \hat{v}_i, B \hat{v}_i \rangle^2} \Big].
\end{align}
\end{lemma}
\begin{proof}
Recall player $i$'s utility function:
\begin{align}
    u_i(\hat{v}_i \vert \hat{v}_{j<i}) &= 
    \underbrace{\hat{\lambda}_i}_{\text{\textcolor{blue}{reward}}} - \sum_{j < i} \underbrace{\hat{\lambda}_j \langle \textcolor{green}{\hat{y}_i}, B \textcolor{green}{\hat{y}_j} \rangle^2}_{\text{\textcolor{red}{penalty}}} \quad \text{ where $\textcolor{green}{\hat{y}_i} = \frac{\hat{v}_i}{||\hat{v}_i||_B}$,}
\end{align}
$\hat{\lambda}_i = \frac{\langle \hat{v}_i, A \hat{v}_i \rangle}{\langle \hat{v}_i, B \hat{v}_i \rangle}$, and $||z||_B = \sqrt{\langle z, B z \rangle}$.

We will address the gradient of each term in the chain rule in sequence. First consider $\hat{\lambda}_i$:
\begin{align}
    \nabla_{\hat{v}_i} \hat{\lambda}_i &= \nabla_{\hat{v}_i} \Big\{ \frac{\langle \hat{v}_i, A \hat{v}_i \rangle}{\langle \hat{v}_i, B \hat{v}_i \rangle} \Big\} = \nabla_{\hat{v}_i} \Big\{ \langle \hat{v}_i, A \hat{v}_i \rangle \langle \hat{v}_i, B \hat{v}_i \rangle^{-1} \Big\}
    \\ &= \frac{2 (\hat{v}_i^\top B \hat{v}_i) A \hat{v}_i - 2 (\hat{v}_i^\top A \hat{v}_i) B \hat{v}_i}{\langle \hat{v}_i, B \hat{v}_i \rangle^2}.
\end{align}

The next term that depends on $\hat{v}_i$ is $\langle \hat{y}_i, B \hat{y}_j \rangle^2$ through $\hat{y}_i$:
\begin{align}
    \nabla_{\hat{v}_i} \langle \hat{y}_i, B \hat{y}_j \rangle^2 &= \nabla_{\hat{v}_i} \Big\{ \langle \hat{v}_i, B \hat{y}_j \rangle^2 \langle \hat{v}_i, B \hat{v}_i \rangle^{-1} \Big\}
    \\ &= 2 \langle \hat{v}_i, B \hat{y}_j \rangle B \hat{y}_j \langle \hat{v}_i, B \hat{v}_i \rangle^{-1} - \langle \hat{v}_i, B \hat{y}_j \rangle^2 \langle \hat{v}_i, B \hat{v}_i \rangle^{-2} (2 B \hat{v}_i)
    \\ &= \frac{2 \langle \hat{v}_i, B \hat{y}_j \rangle \big( \langle \hat{v}_i, B \hat{v}_i \rangle B \hat{y}_j - \langle \hat{v}_i, B \hat{y}_j \rangle B \hat{v}_i \big)}{\langle \hat{v}_i, B \hat{v}_i \rangle^{2}}
    \\ &= \frac{2 \langle \hat{v}_i, B \hat{v}_j \rangle \big( \langle \hat{v}_i, B \hat{v}_i \rangle B \hat{v}_j - \langle \hat{v}_i, B \hat{v}_j \rangle B \hat{v}_i \big)}{\langle \hat{v}_i, B \hat{v}_j \rangle \langle \hat{v}_i, B \hat{v}_i \rangle^{2}}
\end{align}
where we have replaced all $\hat{y}_j$ terms with $\frac{\hat{v}_j}{\langle \hat{v}_j, B \hat{v}_j \rangle^{1/2}}$ terms and consolidated the denominators.

Combining these intermediate results, we find
\begin{align}
    \nabla_{\hat{v}_i} u_i &= 2 \Big[ \frac{(\hat{v}_i^\top B \hat{v}_i) A \hat{v}_i - (\hat{v}_i^\top A \hat{v}_i) B \hat{v}_i}{\langle \hat{v}_i, B \hat{v}_i \rangle^2} - \sum_{j < i} \frac{\hat{\lambda}_j}{\langle \hat{v}_j, B \hat{v}_j \rangle} (\hat{v}_i^\top B \hat{v}_j) \frac{\big[ \langle \hat{v}_i, B \hat{v}_i \rangle B \hat{v}_j - \langle \hat{v}_i, B \hat{v}_j \rangle B \hat{v}_i \big]}{\langle \hat{v}_i, B \hat{v}_i \rangle^2} \Big].
\end{align}
\end{proof}

\begin{proposition}[Equivalence to EigenGame Unloaded]
The generalized EigenGame pseudogradient in~\eqref{eq:gevp_update} is equivalent to the Riemannian gradient in~\citep{gemp2021eigengame} when $B=I$.
\end{proposition}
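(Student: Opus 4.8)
The plan is to collapse the $B$-dependent quantities in $\tilde{\nabla}_i$ under $B = I$ and to recognize the result as a tangent-space-projected Euclidean ascent direction, which is exactly the form of the Riemannian gradient in~\citep{gemp2021eigengame}. First I would substitute $B = I$ into~\eqref{eq:gevp_update} (equivalently the deterministic direction~\eqref{eq:update}): in expectation $B_{tm} = I$, the running average satisfies $[B\hat{v}]_j = \hat{v}_j$, and since strategies lie on the unit sphere, $\langle \hat{v}_i, B\hat{v}_i\rangle = \|\hat{v}_i\|^2 = 1$, $B\hat{v}_i = \hat{v}_i$, and $\hat{y}_j = \hat{v}_j/\|\hat{v}_j\|_B = \hat{v}_j$. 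Plugging these into the \texttt{reward} and \texttt{penalty} terms yields
\begin{align}
    \tilde{\nabla}_i = A\hat{v}_i - (\hat{v}_i^\top A\hat{v}_i)\hat{v}_i - \sum_{j<i}(\hat{v}_i^\top A\hat{v}_j)\big[\hat{v}_j - (\hat{v}_i^\top\hat{v}_j)\hat{v}_i\big].
\end{align}

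Next I would regroup the terms proportional to $\hat{v}_i$ to expose the projector $I - \hat{v}_i\hat{v}_i^\top$:
\begin{align}
    \tilde{\nabla}_i = \big(I - \hat{v}_i\hat{v}_i^\top\big)\Big[A\hat{v}_i - \sum_{j<i}(\hat{v}_i^\top A\hat{v}_j)\,\hat{v}_j\Big],
\end{align}
which is verified by expanding the right-hand side and matching it term-by-term with the previous display. The bracketed vector is precisely the EigenGame Euclidean ascent direction with the data (covariance) matrix $M$ taken equal to $A$ --- a \texttt{reward} $M\hat{v}_i$ minus a \texttt{penalty} $\sum_{j<i}\langle\hat{v}_i,M\hat{v}_j\rangle\hat{v}_j$ --- and left-multiplication by $I - \hat{v}_i\hat{v}_i^\top$ is exactly the Riemannian projection onto $T_{\hat{v}_i}\mathcal{S}^{d-1}$ applied in~\citep{gemp2021eigengame}. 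Hence the two update directions coincide, establishing the claim.

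I expect the only real subtlety to be matching conventions across the two papers, not any computation. The direction~\eqref{eq:update} already incorporated relation~(\ref{rel_1}), which presumes exact parents $\hat{v}_{j<i} = v_{j<i}$, so the comparison should be made against the EigenGame update at the same level of simplification --- the variant whose penalty uses $\hat{v}_j$ and unit-norm parents rather than $M\hat{v}_j/\langle\hat{v}_j,M\hat{v}_j\rangle$. To bridge the two written forms I would note that at exact parents $A\hat{v}_j = \lambda_j\hat{v}_j$ and $\langle\hat{v}_j,A\hat{v}_j\rangle = \lambda_j$, so $\frac{\langle\hat{v}_i,A\hat{v}_j\rangle}{\langle\hat{v}_j,A\hat{v}_j\rangle}A\hat{v}_j = \langle\hat{v}_i,A\hat{v}_j\rangle\hat{v}_j$, collapsing the apparent discrepancy. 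Finally I would remark that the auxiliary update for $[B\hat{v}]_i$ in~\Algref{alg:gen_eg} is inert when $B = I$ (its fixed point is $[B\hat{v}]_i = \hat{v}_i$), so it contributes nothing extra to the reduction.
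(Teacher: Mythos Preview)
Your proposal is correct and follows essentially the same route as the paper: substitute $B=I$, use $\|\hat{v}_i\|=1$ and $\hat{y}_j=\hat{v}_j$ to collapse the update to $A\hat{v}_i-(\hat{v}_i^\top A\hat{v}_i)\hat{v}_i-\sum_{j<i}(\hat{v}_i^\top A\hat{v}_j)[\hat{v}_j-(\hat{v}_i^\top\hat{v}_j)\hat{v}_i]$, then factor out the projector $(I-\hat{v}_i\hat{v}_i^\top)$ to recognize the Riemannian form of the $\mu$-EigenGame direction. Your extra paragraph on reconciling the two papers' penalty conventions and on the auxiliary variable being inert is additional care the paper's proof omits, but the core argument is identical.
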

\begin{proof}
In order to compute the Riemannian update direction, we project player $i$'s direction onto the tangent space of the unit-sphere by left-multiplying with $(I - \hat{v}_i \hat{v}_i^\top)$. Starting with~\eqref{eq:gevp_update}, we find
\begin{align}
    \tilde{\nabla}_i &= \overbrace{(\hat{v}_i^\top B \hat{v}_i) A \hat{v}_i - (\hat{v}_i^\top A \hat{v}_i) B \hat{v}_i}^{\text{reward}} - \sum_{j < i} \overbrace{(\hat{v}_i^\top A \hat{y}_j) \big[ \langle \hat{v}_i, B \hat{v}_i \rangle B \hat{y}_j - \langle \hat{v}_i, B \hat{y}_j \rangle B \hat{v}_i}^{\text{penalty}} \big]
    \\ &= A \hat{v}_i - (\hat{v}_i^\top A \hat{v}_i) \hat{v}_i - \sum_{j < i} (\hat{v}_i^\top A \hat{v}_j) \big[ \hat{v}_j - \langle \hat{v}_i, \hat{v}_j \rangle \hat{v}_i \big]
    \\ &= (I - \hat{v}_i \hat{v}_i^\top) [A\hat{v}_i - \sum_{j < i} (\hat{v}_i^\top A \hat{v}_j) \hat{v}_j] = (I - \hat{v}_i \hat{v}_i^\top) \tilde{\nabla}_i^{\mu-EG}.
\end{align}
\end{proof}

\begin{proposition}[\geg{} Utilities as Deflated Rayleigh Quotients]\label{prop:deflation}
\textcolor{highlight}{The generalized EigenGame utilities defined in~\eqref{eqn:appx:util_new} can also be derived from the perspective of maximizing the Rayleigh quotients of a \emph{deflated} matrix assuming exact parents.}
\end{proposition}
\begin{proof}
\emph{Deflating} a matrix means to modify the matrix such that the spectrum corresponding to a certain subspace of the matrix is zero. For example, in the case of the SGEP, the matrix $A$ can be deflated to produce a matrix $\tilde{A} = (I - B \frac{v_j v_j^\top}{||v_j||^2_B}) A$ such that any vector in the span of eigenvector $v_j$ achieves zero eigenvalue:
\begin{align}
    \tilde{A} (w_j v_j) &= w_j (I - B \frac{v_j v_j^\top}{||v_j||^2_B}) A v_j
    \\ &= w_j A v_j - w_j B v_j \frac{v_j^\top A v_j}{||v_j||^2_B}
    \\ &= w_j \lambda_j B v_j - w_j \lambda_j B v_j \frac{v_j^\top B v_j}{||v_j||^2_B} \text{ apply rule $Av_j = \lambda_j Bv_j$}
    \\ &= w_j \lambda_j B v_j (1 - \frac{v_j^\top B v_j}{||v_j||^2_B}) \label{eqn:deflation_step}
    \\ &= 0
\end{align}
where $||v_j||^2_B = v_j^\top B v_j$ and $w_j$ is an arbitrary scalar.

This is useful because it allows us to construct a top-$k$ solver by induction: repeatedly deflate a matrix to ignore the top-$(j<i)$ eigenvectors and then deploy a top-$1$ solver on the deflated matrix to find the $i$th eigenvector. To that end, we can construct the following deflation matrix:
\begin{align}
    \tilde{A}_i &= (I - \sum_{j < i} B \frac{v_j v_j^\top}{||v_j||^2_B}) A.
\end{align}
Note that this definition assumes the parents eigenvectors are exact. If they are approximate, this may not act as a deflation in the precise sense. For example, consider defining $\tilde{A}$ with approximate $\hat{v}_j$ instead of exact $v_j$. Now let $\hat{v}_j = v_1$ for all $j$. If one repeats the analysis above for a vector in the span of $v_1$, they would find that~\eqref{eqn:deflation_step} becomes $w_1 \lambda_1 B v_1 (1 - (i - 1) \frac{v_1^\top B v_1}{||v_1||^2_B}) = w_1 \lambda_1 B v_1 (2 - i)$, i.e., it results in an eigenvalue of $(2 - i) \lambda_1$. This is why the effect of this matrix is more accurately described via penalties. We will clarif this connection next.

With the above preliminaries taken care of, we will now show how to derive our utilities via a deflation perspective. Initially, we will assume exact parents, $\hat{v}_{j<i} = v_{j<i}$.
\begin{align}
    u_i(\hat{v}_i \vert v_{j<i}) &= \frac{\langle \hat{v}_i, \tilde{A}_i \hat{v}_i \rangle}{\langle \hat{v}_i, B \hat{v}_i\rangle}
    \\ &= \frac{\langle \hat{v}_i, (I - \sum_{j < i} B \frac{v_j v_j^\top}{||v_j||^2_B} ) A \hat{v}_i \rangle}{\langle \hat{v}_i, B \hat{v}_i\rangle}
    \\ &= \frac{\langle \hat{v}_i, A \hat{v}_i \rangle}{\langle \hat{v}_i, B \hat{v}_i\rangle} - \sum_{j < i} \frac{\langle \hat{v}_i, B v_j v_j^\top A \hat{v}_i \rangle}{||v_j||^2_B \langle \hat{v}_i, B \hat{v}_i\rangle} \text{ expand sum}
    \\ &= \frac{\langle \hat{v}_i, A \hat{v}_i \rangle}{\langle \hat{v}_i, B \hat{v}_i\rangle} - \sum_{j < i} \frac{\langle \hat{v}_i, B v_j \rangle \langle \hat{v_i}, A v_j \rangle}{\langle v_j, B v_j \rangle \langle \hat{v}_i, B \hat{v}_i\rangle} \text{ split \& transpose inner product}
    \\ &= \frac{\langle \hat{v}_i, A \hat{v}_i \rangle}{\langle \hat{v}_i, B \hat{v}_i\rangle} - \sum_{j < i} \frac{\lambda_j \langle \hat{v}_i, B v_j \rangle^2}{\langle v_j, B v_j \rangle \langle \hat{v}_i, B \hat{v}_i\rangle} \text{ apply rule $Av_j = \lambda_j Bv_j$}
    \\ &= 
    \underbrace{\hat{\lambda}_i}_{\text{\textcolor{blue}{reward}}} - \sum_{j < i} \underbrace{\lambda_j \langle \textcolor{green}{\hat{y}_i}, B \textcolor{green}{y_j} \rangle^2}_{\text{\textcolor{red}{penalty}}} \quad \text{ where $\textcolor{green}{\hat{y}_i} = \frac{\hat{v}_i}{||\hat{v}_i||_B}$,} \label{eqn:appx:util_new}
\end{align}
$\hat{\lambda}_i = \frac{\langle \hat{v}_i, A \hat{v}_i \rangle}{\langle \hat{v}_i, B \hat{v}_i \rangle}$, and $||z||_B = \sqrt{\langle z, B z \rangle}$.

If we then relax our assumption and allow $v_j$ to be approximate ($v_j \rightarrow \hat{v}_j$) we recover our utilities in~\eqref{eqn:appx:util_new}.
\end{proof}
\section{Smooth and Unbiased}\label{app:smooth_unbiased}

In order to prove asymptotic convergence of \sgeg{} in the deterministic setting, we establish the following lemmas.

\begin{lemma}\label{lemma:smooth}
The update $\tilde{\nabla}_i$ in~\eqref{eq:update} is smooth.
\end{lemma}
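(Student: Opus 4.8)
The plan is to show that, regarded as a vector field on the compact sphere $\mathcal{S}^{d-1}$ (with the parents $\hat v_{j<i}$ held fixed, each on its own sphere), the map $\hat v_i \mapsto \tilde\nabla_i$ defined in~\eqref{eq:update} is $C^\infty$, and therefore Lipschitz continuous with a Lipschitz-continuous gradient, with constants expressible in terms of $\|A\|$, $\|B\|$, $\rho = \sigma_{\min}(B)$, and $k$. This is exactly the ``smoothness'' that the later stochastic-approximation / ODE-method arguments require of the update direction.

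First I would note that every ingredient appearing in $\tilde\nabla_i$ is one of: a matrix–vector product ($A\hat v_i$, $B\hat v_i$, $B\hat y_j$), a quadratic form ($\hat v_i^\top A\hat v_i$, $\hat v_i^\top B\hat v_i$), or a bilinear form ($\hat v_i^\top A\hat y_j$, $\langle\hat v_i, B\hat y_j\rangle$). Each of these is a polynomial in the entries of $\hat v_i$ (and of the $\hat v_j$), and the sums and products that assemble $\tilde\nabla_i$ from them preserve this. Hence $\tilde\nabla_i$ would be a polynomial map — manifestly $C^\infty$ — were it not for the normalization $\hat y_j = \hat v_j / \|\hat v_j\|_B$ from~\eqref{util_new} that enters the penalty terms.

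So the only point that needs genuine argument is that this normalization is smooth, and here the hypothesis $B\succ0$ is used essentially. For any $\hat v_j\in\mathcal S^{d-1}$ we have $\|\hat v_j\|_B^2 = \hat v_j^\top B\hat v_j \ge \rho\,\|\hat v_j\|^2 = \rho > 0$, so $\hat v_j\mapsto \|\hat v_j\|_B^2$ is a positive polynomial bounded away from $0$ on the sphere; composing it with the $C^\infty$ map $t\mapsto t^{-1/2}$ on $(\rho/2,\infty)$ shows $\hat v_j\mapsto 1/\|\hat v_j\|_B$ is $C^\infty$, and hence so are $\hat y_j$ and $B\hat y_j$. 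Therefore $\tilde\nabla_i$ is a finite sum of products of $C^\infty$ functions on $\mathcal S^{d-1}$ (times the fixed parent spheres), hence $C^\infty$.

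Finally, since $\mathcal S^{d-1}$ and the product of parent spheres are compact, a $C^\infty$ function has all of its derivatives bounded there; in particular $\tilde\nabla_i$ and $\nabla_{\hat v_i}\tilde\nabla_i$ are bounded, so $\tilde\nabla_i$ is Lipschitz continuous with Lipschitz gradient, which is the claimed smoothness. The main (indeed only) obstacle worth flagging is precisely the denominator in $\hat y_j$: smoothness would fail if $B$ were merely positive semidefinite, since then $\|\hat v_j\|_B$ could vanish; everything else is routine bookkeeping, and if desired one can extract explicit Lipschitz constants by bounding each factor ($\|A\hat v_i\|\le\|A\|$, $|\hat v_i^\top B\hat v_i|\le\|B\|$, $\|\hat y_j\|\le 1/\sqrt\rho$, etc.) and using the product rule.
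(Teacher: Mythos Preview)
Your proposal is correct and follows essentially the same approach as the paper: both identify that the reward and penalty numerators are polynomial (hence smooth), isolate the denominator $\langle \hat v_j, B\hat v_j\rangle$ in the $\hat y_j$ normalization as the only potential obstruction, and dispatch it using $B\succ 0$ to bound it below by $\sigma_{\min}(B)>0$ on the sphere. Your write-up is more thorough than the paper's (which is a brief paragraph), in particular by explicitly invoking compactness of the product of spheres to upgrade $C^\infty$ to global Lipschitz continuity of $\tilde\nabla_i$ and its derivative.
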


\begin{proof}
The reward terms are polynomial in $\hat{v}_i$ and therefore smooth (analytic). The numerators of the penalty terms are also polynomial in $\hat{v}_i$ and $\hat{v}_j$, however, the denominator includes a scalar $\langle \hat{v}_j, B \hat{v}_j \rangle$. Given $B \succ 0$, this term is guaranteed to be greater than the minimum eigenvalue of $B$ (which is positive), thereby ensuring the penalty terms are non-singular. So these terms are also smooth.
\end{proof}

Instead of proving the following lemmas directly for \Algref{alg:det_geg} (the deterministic variant), we prove them for \Algref{alg:gen_eg}, which subsumes \Algref{alg:det_geg} ($\rho=0, \gamma_t=1$, $b=n$, $M=1$).

The following two lemmas are proven in the single proof below. Note Lemma~\ref{lemma:right_fp} is essentially a restatement of Lemma~\ref{steepest_ascent} from the main body.
\begin{lemma}\label{lemma:right_fp}
The unique stable fixed point (up to sign of $\hat{v}_j$) of \Algref{alg:gen_eg} run with exact expectations (e.g., $n'=n$ where $n$ is the full dataset size) is $\hat{v}_j = v_j$ for all $j \in \{1, \ldots, k\}$, i.e., the top-$k$ generalized eigenvectors.
\end{lemma}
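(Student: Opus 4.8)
The plan is to analyze the fixed-point equation $\tilde\nabla_i^{\perp} = 0$, where $\tilde\nabla_i^{\perp} = (I - \hat v_i \hat v_i^\top)\tilde\nabla_i$ is the Riemannian gradient on $\mathcal S^{d-1}$ that the normalization step in \Algref{alg:gen_eg} effectively implements, and the auxiliary variables have equilibrated to $[B\hat v]_j = B\hat v_j$ so that $\hat y_j = v_j/\|v_j\|_B$ once parents are exact. I will argue by induction on the player index $i$. For the base case $i=1$ there are no penalty terms, so the update reduces (after removing the positive scalar $\langle \hat v_1, B\hat v_1\rangle^2$) to the Riemannian gradient of the generalized Rayleigh quotient $\hat v_1^\top A\hat v_1 / \hat v_1^\top B\hat v_1$; its critical points on the sphere are exactly the generalized eigenvectors (this is classical, and also follows from Lemma~\ref{well_posed_utils} with no parents), and stability analysis of the Rayleigh-quotient flow picks out the top eigenvector $v_1$ as the unique stable one, because the Jacobian at $v_\ell$ has eigenvalues controlled by the eigengaps $\lambda_\ell - \lambda_m$, which are all negative only at $\ell=1$ when the top-$k$ gaps are positive.

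For the inductive step, assume $\hat v_j = v_j$ for $j<i$ at the fixed point. Substituting the exact parents, relation~(\ref{rel_1}) ($\hat\lambda_j\langle\hat v_i,B\hat v_j\rangle = \langle\hat v_i,A\hat v_j\rangle$) becomes exact, so the simplified update $\tilde\nabla_i$ in~\eqref{eq:update} genuinely coincides (up to the stripped positive scalar) with the Riemannian gradient of $u_i(\hat v_i\mid v_{j<i})$. By Lemma~\ref{well_posed_utils}, $u_i$ is maximized uniquely (up to sign) at $v_i$; by Proposition~\ref{util_shape}, $u_i$ written in the warped-angular coordinates is sinusoidal, so its only critical points are the images of the generalized eigenvectors $v_1,\dots,v_d$, and the only \emph{local maxima} (hence the only stable fixed points of gradient ascent, since non-maximal critical points have an ascent direction and are unstable) are the global maxima $\pm v_i$. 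One should also confirm that the fixed points of the \emph{joint} update — including the auxiliary-variable update $\nabla^{Bv}_i = B\hat v_i - [B\hat v]_i$ — are precisely the pairs $(\hat v_i = v_i, [B\hat v]_i = Bv_i)$: the auxiliary update forces $[B\hat v]_i = B\hat v_i$ at any fixed point, and then the $\hat v_i$-update is exactly the one just analyzed, so no spurious fixed points are introduced and stability in the $\hat v_i$-block is unaffected (the $[B\hat v]_i$-block is a contraction toward $B\hat v_i$ for $\gamma_t$ small, decoupled to leading order). Finally, noting that setting $\rho = 0$, $\gamma_t = 1$, $b = n$, $M = 1$ recovers \Algref{alg:det_geg}, the statement transfers to the deterministic algorithm.

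The main obstacle is the stability classification: showing that among all critical points only $\pm v_i$ are \emph{stable}, not merely that the fixed-point \emph{set} equals the generalized eigenvectors. This requires linearizing the Riemannian update at a candidate fixed point $v_\ell$ with $\ell\neq i$ and exhibiting a tangent direction (namely toward $v_i$ if $\lambda_i>\lambda_\ell$, or mixing in the reward gradient's unstable mode otherwise) along which the Jacobian has a nonnegative eigenvalue. The cleanest route is to push everything through the change of variables of Lemma~\ref{simplex_sphere_bijection} into the simplex coordinate $z\in\Delta^{d-1}$, where $u_i = \sum_{j\ge i}\lambda_j z_j$ is linear; gradient ascent on a linear objective over a warped sphere has its stable points only at the vertex maximizing the coefficient, i.e. $z = e_i$, which back-transforms to $\hat v_i = \pm v_i$. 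I would also need the companion observation (Lemma~\ref{lemma:smooth}, already granted) that the vector field is smooth, so that the stable-manifold / Hartman–Grobman reasoning and the later stochastic-approximation argument are valid; this is immediate since $\langle\hat v_j,B\hat v_j\rangle\ge\sigma_{\min}(B)>0$ keeps every denominator bounded away from zero.
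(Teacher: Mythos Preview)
Your proposal is correct and follows essentially the same inductive-on-the-hierarchy argument as the paper: start with player~$1$ (no penalties, Riemannian ascent on the generalized Rayleigh quotient, invoke Proposition~\ref{util_shape} to conclude the unique stable fixed point is $v_1$), then absorb the auxiliary variable $[B\hat v]_1$ as a running average with fixed point $Bv_1$, and repeat for each subsequent player using the now-exact parents. Your treatment is in fact more explicit than the paper's about the stability classification at non-optimal critical points (the paper simply asserts, via Proposition~\ref{util_shape}, that the utility has no saddle points and every local maximum is global), and your suggestion to pass to the simplex coordinate $z$ of Lemma~\ref{simplex_sphere_bijection} is exactly the mechanism underlying that proposition.
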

\begin{lemma}
\Algref{alg:gen_eg}'s updates are asymptotically unbiased.
\end{lemma}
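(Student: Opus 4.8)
The plan is to prove both lemmas by a single induction on the player index, peeling players off the top of the hierarchy. I will use two elementary facts about the update direction of Algorithm~\ref{alg:gen_eg}. First, $\tilde{\nabla}_i$ is \emph{orthogonal to $\hat{v}_i$}: expanding the reward and each bracketed penalty term and contracting with $\hat{v}_i$ gives $\hat{v}_i^\top \tilde{\nabla}_i = 0$ identically, so a fixed point of the re-normalized iteration on $\hat{v}_i$ is exactly a point where $\tilde{\nabla}_i = 0$, not merely where its tangential part vanishes. Second, the auxiliary recursion $[B\hat{v}]_i \leftarrow [B\hat{v}]_i + \gamma_t(B_{tm}\hat{v}_i - [B\hat{v}]_i)$ satisfies $\mathbb{E}[\nabla^{Bv}_{im}] = B\hat{v}_i - [B\hat{v}]_i$, hence is unbiased, and for fixed $\hat{v}_i$ under exact expectations its unique stable fixed point is $[B\hat{v}]_i = B\hat{v}_i$.

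\emph{Base case ($i=1$).} Player $1$ has no parents, so $\tilde{\nabla}_1 = (\hat{v}_1^\top B\hat{v}_1) A\hat{v}_1 - (\hat{v}_1^\top A\hat{v}_1) B\hat{v}_1$ and contains no auxiliary quantity; $\tilde{\nabla}_1 = 0$ is precisely $A\hat{v}_1 = \hat{\lambda}_1 B\hat{v}_1$, i.e.\ $\hat{v}_1$ is a generalized eigenvector. By Proposition~\ref{util_shape} every local maximum of $u_1$ is global, so the only \emph{stable} fixed point of (rescaled) gradient ascent is the global maximizer, which by Lemma~\ref{well_posed_utils} is $\pm v_1$; all other generalized eigenvectors are unstable saddles. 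For the stochastic version, $A_{tm}$ and $B_{tm}$ come from independent batches, so within each multiplicative term of the reward the expectation factorizes and $\mathbb{E}[\tilde{\nabla}_{1m}] = \tilde{\nabla}_1$; player $1$'s update is thus unbiased from the outset, and $[B\hat{v}]_1 \to Bv_1$.

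\emph{Inductive step.} Suppose $\hat{v}_j = v_j$ and $[B\hat{v}]_j = Bv_j$ for all $j < i$. Since $\rho \le \sigma_{min}(B)$ and $||v_j|| = 1$ give $\langle v_j, [B\hat{v}]_j\rangle = \langle v_j, Bv_j\rangle \ge \sigma_{min}(B) \ge \rho$, the clips in~\eqref{eq:clipping} and~\eqref{eq:Byj} are inactive, so $\hat{y}_j = v_j/||v_j||_B$ and $[B\hat{y}]_j = Bv_j/||v_j||_B = B\hat{y}_j$, exactly the quantities appearing in~\eqref{eq:update}. Relation~(\ref{rel_1}), valid because the parents are exact, then makes $\tilde{\nabla}_i$ a positive multiple of $\nabla_{\hat{v}_i} u_i(\hat{v}_i \mid v_{j<i})$ (the steepest-ascent proposition following~\eqref{eq:update}), so the fixed-point and stability argument of the base case applies verbatim and yields $\hat{v}_i = \pm v_i$ and $[B\hat{v}]_i = Bv_i$. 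For unbiasedness: once $[B\hat{v}]_j$ has settled to the deterministic $Bv_j$, the factors $\hat{y}_j$ and $[B\hat{y}]_j$ are non-random, every remaining random factor in $\tilde{\nabla}_{im}$ is a single $A_{tm}$ or a single $B_{tm}$ from mutually independent batches within its term, and averaging over the $M$ machines preserves the factorization; hence $\mathbb{E}[\tilde{\nabla}_{im}] = \tilde{\nabla}_i$ and the residual bias vanishes as the parents converge. Chaining the induction proves Lemma~\ref{lemma:right_fp} (the unique stable joint fixed point, up to per-player sign, is $(v_1, \ldots, v_k)$ with $[B\hat{v}]_j = Bv_j$) and the asymptotic-unbiasedness lemma simultaneously.

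I expect the main delicacy to be the qualifier ``asymptotically'': upgrading ``$\mathbb{E}[\tilde{\nabla}_{im}] = \tilde{\nabla}_i$ at the parents' fixed point'' into a genuine convergence statement requires a two-timescale stochastic-approximation argument ($\hat{v}_j$ evolving on the $\eta_t$ timescale, $[B\hat{v}]_j$ on the faster $\gamma_t$ timescale) together with a uniform-integrability bound during the transient regime, where the clip to $\rho \le \sigma_{min}(B)$ is what keeps the maps $[B\hat{v}]_j \mapsto \hat{y}_j, [B\hat{y}]_j$ Lipschitz before $[B\hat{v}]_j$ is near $Bv_j$. A secondary, more routine obstacle is the bookkeeping of the independent minibatch draws so that no random matrix is reused multiplicatively within a single term of the reward or penalty.
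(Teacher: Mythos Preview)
Your proposal is correct and follows essentially the same inductive scheme as the paper's proof: peel off player $1$ (reward-only, unbiased from independent batches, unique stable fixed point $v_1$ via Proposition~\ref{util_shape} and Lemma~\ref{well_posed_utils}), let $[B\hat{v}]_1$ settle to $Bv_1$, then repeat down the hierarchy. You add useful detail the paper leaves implicit---the orthogonality $\hat{v}_i^\top\tilde{\nabla}_i=0$, the inactivity of the clip at the fixed point, and the honest caveat that ``asymptotically'' really requires a two-timescale argument---but the underlying strategy is the same.
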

\begin{proof}
The proof is constructed sequentially by proving each update process has a unique stable fixed point conditioned on the previous updates' fixed points defined by the hierarchy imposed on the players. We explain how we are able to address constructing unbiased estimates of each update as well, thereby supporting a stochastic, asymptotic convergence proof.

The proof begins by considering the updates of the first player on $\hat{v}_1$. Player $1$ is unique in that it pays no penalties for aligning with other players. Its update consists of the reward terms only, which comprises an unbiased estimate assuming independent, unbiased estimates for $A$ and $B$ (i.e., these are constructed with independent minibatches). Player $1$'s update simply performs Riemannian gradient ascent on its utility function. Proposition~\ref{util_shape} proves that every maximum of this function is a global maximum (in addition, it contains no saddle points). Therefore, the only stable fixed point for $\hat{v}_1$ is $v_1$.

Next, we consider player $1$'s update to $[B\hat{v}]_1$. Given we just showed $\hat{v}_1$'s stable fixed point is $v_1$, and this update is simply a running average, its unique stable fixed point is $Bv_1$.

We now consider player $2$'s update, which includes penalty terms. Plugging $[B\hat{v}]_1$'s unique stable fixed point into these penalty terms, and again assuming independent, unbiased estimates for $A$ and $B$, allows us to construct an unbiased estimate of the penalty terms. Similarly to player $1$'s analysis, player $2$'s update performs Riemannian gradient ascent on a utility function with a unique stable fixed point, $\hat{v}_2 = v_2$.

The proof then proceeds repeating the same arguments, alternating between proving the unique stable fixed point of each $[B\hat{v}]_j = Bv_j$ and $\hat{v}_j = v_j$.
\end{proof}

\section{Error Propagation}\label{app:err_prop}

An error propagation analysis is necessary to rule out the scenario where an arbitrary (unbounded) level of precision is required by the parents to ensure any progress towards the true solution can be made by the children. In other words, we show that as the parents near their true solutions, the children may near theirs as well.

As before in \Appxref{app:smooth_unbiased}, we will perform this analysis for the stochastic version of the algorithm (\Algref{alg:gen_eg}), but note that this subsumes the analysis for the deterministic version (where $[B\hat{v}]_j$ is replaced by the exact $Bv_j$ with zero error).

Player $1$'s update of $\hat{v}_1$ is unbiased without any assumptions on the state of any of the other player vectors $\hat{v}_{j>1}$ or auxiliary variables $[B\hat{v}]_{j\ge1}$. We would like to understand how transient error in $\hat{v}_1$ propagates through to these other variables. The updates of player $1$'s children depend on $[B\hat{v}]_{1}$, so we analyze the effect on it first. Note that the error propagation analysis naturally repeats as we progress down the hierarchy of players, so we analyze how error in $\hat{v}_{i}$ propagates through to $[B\hat{v}]_{i}$ and then onto $\hat{v}_{j>i}$. Interestingly, step~\ref{errprop:step2} of the following proof suggests the error in the $\hat{v}_i$ must fall below $\frac{1}{\kappa}$ before any increase in accuracy of the parents helps to improve accuracy in the children. This result mirrors that of~\citep{gemp2021eigengame} (see their Appendix F).

\begin{theorem}\label{thm:error_prop}
An $\mathcal{O}(\epsilon)$ angular error in the parent propagates to an $\mathcal{O}(\epsilon^{\frac{1}{2}})$ upper bound on the angular error of the child's solution.
\end{theorem}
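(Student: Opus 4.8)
The plan is to reduce the claim to a perturbation analysis that mirrors the one for EigenGame~\cite{gemp2021eigengame}, accounting separately for the coordinate distortion that $B$ introduces. First I would pass to $M = B^{-1/2} A B^{-1/2}$ via Proposition~\ref{sim_mat}. Under the change of variables $\omega = B^{1/2} v$ (renormalized to the sphere) the generalized Rayleigh quotient becomes the ordinary Rayleigh quotient of $M$, $B$-orthogonality becomes Euclidean orthogonality, and, by the \emph{Equivalence to EigenGame Unloaded} proposition, $u_i$ becomes exactly the EigenGame Unloaded utility of $M$. This map is bi-Lipschitz on the sphere with constant $\sqrt{\kappa}$, where $\kappa = \sigma_{\max}(B)/\sigma_{\min}(B) \le \sigma_{\max}(B)/\rho$, so an angular error $\epsilon$ in a parent $\hat v_j$ corresponds to an angular error $\mathcal{O}(\sqrt{\kappa}\,\epsilon)$ in the symmetric coordinates, and conversely. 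I would also record that at any fixed point Lemma~\ref{lemma:right_fp} gives $[B\hat v]_j = B\hat v_j$ and the clip in~\eqref{eq:clipping} is inactive, so $\hat y_j$ is the exact $B$-normalization of $\hat v_j$; hence an $\epsilon$ error in $\hat v_j$ propagates only to an $\mathcal{O}(\kappa\epsilon)$ error in $\hat y_j$, which is the origin of the $1/\kappa$ threshold.

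\emph{The child's fixed-point equation.} Fix the parents $\hat v_{j<i}$ with angular error at most $\epsilon$ and let $\hat v_i$ be a stable fixed point of the $\tilde\nabla_i$ dynamics of~\eqref{eq:update} with those parents held fixed. Write $\tilde\nabla_i = \nabla_i^{\star} + r_i$, where $\nabla_i^{\star}$ is the update computed with the \emph{true} parents $v_{j<i}$ --- a steepest-ascent direction for $u_i(\hat v_i \vert v_{j<i})$ by the proposition following~\eqref{eq:update} --- and $r_i$ is the residual. Lemma~\ref{lemma:smooth} together with the bi-Lipschitz bounds above (the penalty denominators stay at least $\rho$) gives $\sup \|r_i\| \le C\epsilon$, with $C$ polynomial in $\kappa$ and in the top-$k$ eigenvalues and eigengaps of $M$. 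Since $r_i$ is small and $\hat v_i$ is stable, $\hat v_i$ must lie near a local maximizer of $u_i(\cdot \vert v_{j<i})$, and by Proposition~\ref{util_shape} the only one is $v_i$ (up to sign); it remains to quantify ``near.''

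\emph{The quantitative bound.} From Proposition~\ref{util_shape}, along any great circle through $v_i$ the utility is a shifted cosine in a $B$-deformed angle, so its Hessian at $v_i$ is strictly negative definite with smallest curvature $c_0 > 0$ (a function of the eigengaps and of $\kappa$); writing $\theta$ for the angular deviation of $\hat v_i$ from $v_i$, this gives $u_i(v_i \vert v_{j<i}) - u_i(\hat v_i \vert v_{j<i}) \ge c_0 \theta^2$. Conversely, the overlaps $\langle \hat v_i, B\hat v_j \rangle$ that the penalty actually suppresses are pinned, through the fixed-point balance of reward against penalty, only to first order in $\epsilon$, which yields $u_i(v_i \vert v_{j<i}) - u_i(\hat v_i \vert v_{j<i}) \le C'\epsilon$. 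Combining, $c_0 \theta^2 \le C'\epsilon$, i.e.\ $\theta = \mathcal{O}(\epsilon^{1/2})$ in the symmetric coordinates; transporting back (one further factor of $\sqrt{\kappa}$) gives the claimed $\mathcal{O}(\epsilon^{1/2})$ angular error in the original coordinates. The same constants leave the bound vacuous unless $\epsilon$ is below a $1/\mathrm{poly}(\kappa)$ threshold, matching the stated need for the parent error to fall below roughly $1/\kappa$; this is the step singled out in the discussion above.

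\emph{Main obstacle.} The delicate point is the upper bound just used: $\hat v_i$ is a fixed point of a perturbed \emph{flow}, not the argmax of a perturbed \emph{function} (the drift $r_i$ need not be a gradient), so ``$u_i(\cdot \vert v_{j<i})$ decreases by at most $\mathcal{O}(\epsilon)$'' must be argued directly --- for instance by a Lyapunov estimate showing the trajectory is almost-monotone in $u_i$, using $\langle \nabla_i^{\star}, \tilde\nabla_i \rangle = \|\nabla_i^{\star}\|^2 + \langle \nabla_i^{\star}, r_i \rangle \ge \|\nabla_i^{\star}\|^2 - C\epsilon\|\nabla_i^{\star}\|$ (nonnegative once $\|\nabla_i^{\star}\| \ge C\epsilon$), together with a bound on the length of the terminal segment of the flow. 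A sharper analysis of the stationarity equation could in principle improve the exponent, but the conservative square root is all that is needed to rule out an unbounded precision requirement cascading down the player hierarchy, and it is what matches the corresponding EigenGame result. The remaining ingredients --- the Lipschitz constants and the dependence of $c_0$ and $C$ on the eigengaps and on $\kappa$ --- are routine once the reduction and the fixed-point set-up are in place.
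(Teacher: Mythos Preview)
Your route is viable but genuinely different from the paper's. The paper stays in the original coordinates throughout: it (i) converts parent angular error to Euclidean error; (ii) bounds the perturbation $\|\tilde\nabla_j - \bar\nabla_j\|$ of the child's update by $\mathcal{O}(\epsilon)$ via a direct expansion of the penalty terms (the $1/\kappa$ threshold enters here, through a bound on $c^2-1$ where $c$ is the ratio of perturbed to true $B$-norm); (iii) cites the $B{=}I$ EigenGame analysis of~\cite{gemp2021eigengame} to convert an $\mathcal{O}(\epsilon)$ vector-field perturbation into an $\mathcal{O}(\epsilon)$ fixed-point shift in the ``unwarped'' angular coordinate; and only then (iv) applies a separate geometric lemma (Lemma~\ref{angle_blowup}) bounding how a small angle can inflate under a positive-definite metric. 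That last lemma is the paper's \emph{only} source of the square root.

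You instead push the change of variables $\omega = B^{1/2} v$ to the front, reducing to the $B{=}I$ problem for $M$; your bi-Lipschitz claim for the renormalized map is correct and in fact tighter than Lemma~\ref{angle_blowup} (angle distortion under $B^{1/2}$ is a multiplicative $\mathcal{O}(\sqrt{\kappa})$, not an $\epsilon \mapsto \epsilon^{1/2}$ loss). Your square root therefore comes from a different place: the curvature-versus-residual inequality $c_0\theta^2 \le C'\epsilon$. The obstacle you flag is genuine, since the stable fixed point of the perturbed \emph{flow} need not maximize any perturbed \emph{function}, so the $C'\epsilon$ upper bound on the utility gap is not free. But you can sidestep it by arguing on the gradient instead of the value: stationarity gives $\|\nabla_i^\star(\hat v_i)\| = \|r_i(\hat v_i)\| \le C\epsilon$, and nondegeneracy of the Riemannian Hessian at $v_i$ (which you already invoke) gives $\|\nabla_i^\star\| \ge c_0'\,\theta$ in a neighborhood, hence $\theta = \mathcal{O}(\epsilon)$ once stability has localized $\hat v_i$ near $v_i$. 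This is sharper than both your stated bound and the paper's; the square root in each case is an artifact of a conservative estimate (Lemma~\ref{angle_blowup} for the paper, the value-gap argument for you) rather than of the problem itself.
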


\begin{proof}
The proof proceeds in three steps:
\begin{enumerate}
    \item $\mathcal{O}(\epsilon)$ angular error of parent $v_i$ $\implies$ $\mathcal{O}(\epsilon)$ Euclidean error of parent $v_i$ \label{errprop:step1}
    \item $\mathcal{O}(\epsilon)$ Euclidean error of parent $v_i$ $\implies$ $\mathcal{O}(\epsilon)$ Euclidean error of norm of child $v_{j>i}$'s gradient \label{errprop:step2}
    \item $\mathcal{O}(\epsilon)$ Euclidean error of norm child $v_{j>i}$'s gradient + instability of minima at $v_{k \ne j}$ $\implies$ $\mathcal{O}(\epsilon)$ angular error of child $v_{j>i}$'s solution assuming $B=I$. \label{errprop:step3}
    \item $\mathcal{O}(\epsilon)$ angular error of child $v_{j>i}$'s solution assuming $B=I$ $\implies$ $\mathcal{O}(\epsilon^{\frac{1}{2}})$ angular error of child $v_{j>i}$'s solution for a general $B \succ 0$. \label{errprop:step4}
\end{enumerate}

\textbf{\ref{errprop:step1}}. As in $\mu$-EigenGame, an \emph{angular error} of $\mathcal{\epsilon}$ in the parent translates to $\mathcal{\epsilon}$ \emph{Euclidean error}. The proof is exactly the same as in~\citep{gemp2021eigengame}, repeated here for convenience. Angular error in the parent can be converted to Euclidean error by considering the chord length between the mis-specified parent and the true parent direction. The two vectors plus the chord form an isoceles triangle with the relation that chord length $l = 2 \sin(2\epsilon)$ is $\mathcal{O}(\epsilon)$ for $\epsilon \ll 1$.

\textbf{\ref{errprop:step2}}. Next, write the mis-specified parents as $\hat{v}_i = v_i + w_i$ where $||w_i||$ is $\mathcal{O}(\epsilon_i)$ as we just explained.

Now consider the fixed point of the auxiliary variable's update: $[Bv]_i = B (v_i + w_i) = B v_i + B w_i$. Hence any mis-specification in the parent $\hat{v}_i$ appears as a mis-specification of the auxiliary variable's fixed point by $B w_i$, which is $\mathcal{O}(\lambda_{max} \epsilon)$ where $\lambda_{max}$ is the maximum eigenvalue (spectral radius) of $B$. Assume the auxiliary variable is mis-specified by an additional error ($q_i$ where $||q_i||$ is $\mathcal{O}(\epsilon'_i)$) representing failure to precisely reach the perturbed fixed point $B(v_i + w_i)$, i.e., $[B\hat{v}]_i = B(v_i + w_i) + q_i$.

The auxiliary variable impacts the update of $\hat{v}_{j>i}$ through $\hat{y}_i$ and similarly $[B\hat{y}]_i$:
\begin{align}
    \hat{y}_i &= \frac{\hat{v}_i}{\sqrt{\lfloor \langle \hat{v}_i, [B\hat{v}]_i\rangle \rfloor_{\rho}}}
    \\ &= \frac{v_i + w_i}{\sqrt{\lfloor \langle \hat{v}_i, Bv_i\rangle + \langle \hat{v}_i, B w_i \rangle + \langle \hat{v}_i, q_i \rangle \rfloor_{\rho}}}
    \\ &= \frac{v_i + w_i}{\sqrt{\lfloor \langle v_i, Bv_i\rangle + 2 \langle v_i, B w_i \rangle + \langle w_i, Bw_i\rangle + \langle v_i, q_i\rangle + \langle w_i, q_i\rangle \rfloor_{\rho}}}
    \\ &= cy_i + \frac{w_i}{\sqrt{\lfloor \langle v_i, Bv_i\rangle + 2 \langle v_i, B w_i \rangle + \langle w_i, Bw_i\rangle + \langle v_i, q_i\rangle + \langle w_i, q_i\rangle \rfloor_{\rho}}}
    \\ &= c y_i + e_i
\end{align}

In order to bound this term, we make a few mild assumptions.
\begin{itemize}
    \item Assume $\rho$ is less than $\lambda_{min}$ as stated earlier and, in particular, less than the lower bound.
    \item Assume $\epsilon_i'$ is $\mathcal{O}(\epsilon_i)$ to ease the exposition.
    \item Also, w.l.o.g., assume $\lambda_{max} > 1$; if not, we can simply scale the problem such that it is true.
\end{itemize} 

Let $\kappa = \frac{\lambda_{max}}{\lambda_{min}}$ be the condition number of $B$, and note that the error term in the denominator is bounded by the spectrum of $B$:
\begin{align}
    \lfloor \langle \hat{v}_i, [B\hat{v}]_i\rangle \rfloor_{\rho} &\le \langle v_i, B v_i \rangle + 2 \lambda_{max} \epsilon_i + \lambda_{max} \epsilon_i^2 + \epsilon_i' + \epsilon_i \epsilon_i'
    \\ &\overbrace{\le}^{1 \le \lambda_{max} \le \kappa \langle v_i, B v_i \rangle} \langle v_i, B v_i \rangle (1 + 2 \kappa  \epsilon_i + \kappa \epsilon_i^2 + \kappa \epsilon_i' + \kappa \epsilon_i \epsilon_i')
    \\ &\overbrace{\le}^{\epsilon' \text{ is } \mathcal{O}(\epsilon)} \langle v_i, B v_i \rangle(1 + 3 \kappa  \epsilon_i + 2 \kappa \epsilon_i^2)
\end{align}
and vice versa for the lower bound, which implies
\begin{align}
    \\ \lfloor \langle \hat{v}_i, [B\hat{v}]_i\rangle \rfloor_{\rho} &\in \langle v_i, [Bv]_i\rangle \Big[ \big( 1 - 3 \kappa \epsilon_i - 2 \kappa \epsilon_i^2 \big) , \big( 1 + 3 \kappa \epsilon_i + 2 \kappa \epsilon_i^2 \big) \Big].
\end{align}
Then
\begin{align}
    c &\in \Big[ \frac{1}{\sqrt{1 + 3 \kappa \epsilon_i + 2 \kappa \epsilon_i^2}}, \frac{1}{\sqrt{1 - 3 \kappa \epsilon_i - 2 \kappa \epsilon_i^2}} \Big]. \label{c_range}
\end{align}
Note that if $\epsilon_i \ll \frac{1}{\kappa}$, then $c$ is $\mathcal{O}(1)$. Note this condition also implies $e_i$ is $\mathcal{O}(\epsilon_i')$. We will use these facts later.

Now we are prepared to consider the norm of the difference, $d_j$, between the Riemannian\footnote{The Riemannian update projects the vanilla update onto the tangent space of the sphere.} update to $\hat{v}_j$ with exact parents and auxiliary variables versus the actual inexact Riemannian update. Let $\Delta^R_j$ be defined as in line~\ref{eq:gevp_update} of Algorithm~\ref{alg:gen_eg}. Define $\bar{\Delta}_j^R$ to be the same except with $\hat{y}_l$ and $[B\hat{y}]_l$ terms replaced by their true solution counterparts $y_l$ and $[By]_l$. Note that $\Delta^R_j$ and $\hat{\Delta}^R_j$ already live in the tangent space of the unit-sphere at $\hat{v}_j$. Then the norm of the difference between the two update directions is upper bounded as
\begin{align}
    ||d_j|| &= ||\Delta^R_j - \bar{\Delta}^R_j||
    \\ &\le ||\sum_{l < i} \Big[ (\hat{v}_j^\top A \hat{y}_l) \big[ \langle \hat{v}_j, B \hat{v}_j \rangle [B \hat{y}]_l - \langle \hat{v}_j, [B \hat{y}]_l \rangle B \hat{v}_j \big] - \ldots \Big]||
    \\ &\le \sum_{l < i} ||\Big[ (\hat{v}_j^\top A \hat{y}_l) \big[ \langle \hat{v}_j, B \hat{v}_j \rangle [B \hat{y}]_l - \langle \hat{v}_j, [B \hat{y}]_l \rangle B \hat{v}_j \big] - \ldots \Big]||. \label{summand}
\end{align}

Recall $q_i$ is the error associated with suboptimality of $[B\hat{v}]_i$ and propoagates to $[B\hat{y}]_i$ as defined on line~\ref{eq:Byj} of Algorithm~\ref{alg:gen_eg}. Let
\begin{align}
    p_i &= \frac{q_i}{\sqrt{\lfloor \langle v_i, Bv_i\rangle + 2 \langle v_i, B w_i \rangle + \langle w_i, Bw_i\rangle + \langle v_i, q_i\rangle + \langle w_i, q_i\rangle \rfloor_{\rho}}}.
\end{align}
By similar arguments used to bound $e_i$, $||p_i||$ is $\mathcal{O}(\epsilon'_i)$ if $\epsilon_i \ll \frac{1}{\kappa}$.

Bounding the summand in~\eqref{summand}, we find
\begin{align}
    &|| (\hat{v}_j^\top A \hat{y}_l) \big[ \langle \hat{v}_j, B \hat{v}_j \rangle [B \hat{y}]_l - \langle \hat{v}_j, [B \hat{y}]_l \rangle B \hat{v}_j \big]
    -
    (\hat{v}_j^\top A y_l) \big[ \langle \hat{v}_j, B \hat{v}_j \rangle [B y]_l - \langle \hat{v}_j, [B y]_l \rangle B \hat{v}_j \big] ||
    \\ &= || (c \hat{v}_j^\top A y_l + \hat{v}_j^\top A e_l) \big[ \langle \hat{v}_j, B \hat{v}_j \rangle (c By_l + Be_l + p_l) - \langle \hat{v}_j, (c By_l + Be_l + p_l) \rangle B \hat{v}_j \big]
    -
    \ldots ||
    \\ &= || (c^2 - 1) (\hat{v}_j^\top A y_l) \big[ \langle \hat{v}_j, B \hat{v}_j \rangle By_l - \langle \hat{v}_j, By_l \rangle B \hat{v}_j \big]
    + \mathcal{O}(\epsilon) ||.
\end{align}

Recall~\eqref{c_range} and note that
\begin{align}
    c^2 - 1 &\in \{ \frac{-3 \kappa \epsilon_i - 2 \kappa \epsilon_i^2}{1 + 3 \kappa \epsilon_i + 2 \kappa \epsilon_i^2}, \frac{3 \kappa \epsilon_i + 2 \kappa \epsilon_i^2}{1 - 3 \kappa \epsilon_i - 2 \kappa \epsilon_i^2} \}
\end{align}
which has norm $\vert c^2 - 1 \vert = \mathcal{O}(\kappa \epsilon_i)$. Therefore, taking into account the impact of the other $A$ and $B$ terms, $||d_j||$ is upper bounded by $\mathcal{O}(i\kappa \sigma(A) \lambda_{\max}^2 \epsilon_i)$ where $\sigma(A)$ is the spectral radius of $A$.

\begin{figure}[!ht]
    \centering
    \begin{subfigure}[t]{.25\columnwidth}
    \centering
    \includegraphics[width=\textwidth]{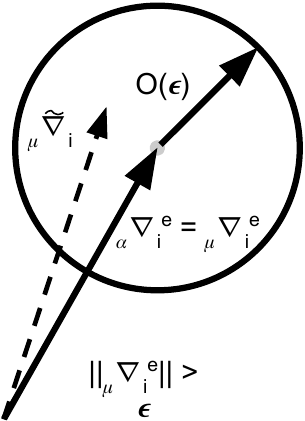}
    \caption{\label{l2tocos}}
    \end{subfigure}
    \hspace{0.2cm}
    \begin{subfigure}[t]{.72\columnwidth}
    \centering
    \includegraphics[width=\textwidth]{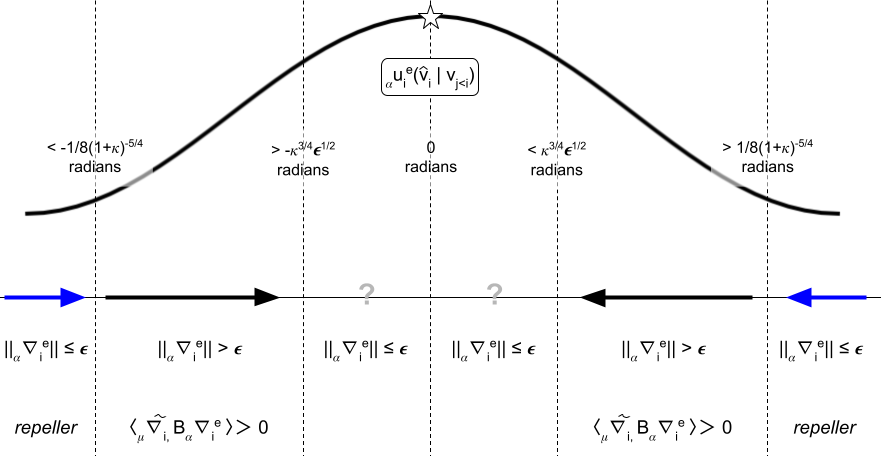}
    \caption{\label{costrap}}
    \end{subfigure}
    \caption{(\subref{l2tocos}) Close in Euclidean distance can imply close in angular distance if the vectors are long enough (reprinted with permission from~\citep{gemp2021eigengame}). (\subref{costrap}) The stable region may consist of an $\mathcal{O}(\kappa^{\frac{3}{4}}\epsilon^{\frac{1}{2}})$ ball around the true optimum as $\epsilon \rightarrow 0$.}
    \label{fig:errprop}
\end{figure}

\textbf{\ref{errprop:step3}}. We can reuse the analysis of~\citep{gemp2021eigengame} to understand how a change in the norm of the vector field relates to a change in the location of the fixed point. This is because the Riemmanian update direction of our proposed method with exact parents and $B=I$ is equivalent to the Riemannian update direction in~\citep{gemp2021eigengame} (simply left-multiply their equation (4) by ($I- \hat{v}_i \hat{v}_i^\top$) to compute their Riemannian update). Therefore, as in this prior work, an error in the gradient norm translates to the same order of error in angular distance to the true fixed point (inflated by a finite scaling dependent on the spectrum of $B$\textemdash accounted for in Step 4 next).

Also, the region around any generalized eigenvector $v_{l \ne j}$ is unstable and this is because the Riemmanian Hessian at that point is positive (this implies instability because we are maximimizing). We can reason that the Riemmanian Hessian is positive by appealing to the fact that the Riemmanian Hessian of our generalized EVP utilities is related to the Hessian of prior work by a warping defined by the positive definite matrix $B$ (for a visual, see Figure~\ref{fig:warped_sinusoid}; for math, see Lemmas~\ref{well_posed_utils} and~\ref{simplex_sphere_bijection}).

In contrast to this prior work, the generalized eigenvectors are more generally, $B$-orthogonal (see Lemma~\ref{b_orth}). By Lemma~\ref{orth_angle_change}, the angular distance between generalized eigenvectors is finite and depends on the condition number of $B$. Therefore, there exists a small enough $\epsilon$ such that an $\epsilon$-ball around any \emph{unstable} region and an $\epsilon$-ball around the \emph{stable} region no longer overlap.

\textbf{\ref{errprop:step4}}. Lastly, Lemma~\ref{angle_blowup} proves that an $\epsilon_i < 1$ angular error assuming $B=I$ can be increased to at most $\kappa^{\frac{3}{4}} \epsilon^{\frac{1}{2}}_i$ if $B$ is relaxed to be any symmetric positive definite matrix with condition number $\kappa$.
\end{proof}

\begin{lemma}\label{orth_angle_change}
The angle between a pair of orthonormal vectors when instead measured under a general positive definite matrix $C$ is lower bounded by $\frac{1}{8} (1+\kappa)^{-\frac{5}{4}}$ where $\kappa$ is the condition number of $C$, i.e., if $\langle v_i, v_j \rangle = 0$, then $\vert \theta \vert = \arccos \Big( \frac{\vert \langle C^{\frac{1}{2}} v_i, C^{\frac{1}{2}} v_j \rangle \vert}{||C^{\frac{1}{2}} v_i|| ||C^{\frac{1}{2}} v_j||} \Big) > \frac{1}{8} (1+\kappa)^{-\frac{5}{4}}$ radians.
\end{lemma}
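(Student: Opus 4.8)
The plan is to reduce the $d$-dimensional claim to a two-dimensional one and then bound a rational function of the eigenvalues of a $2\times2$ positive definite matrix. First I would set $U = [\,v_i \mid v_j\,] \in \mathbb{R}^{d\times 2}$; since $\langle v_i,v_j\rangle = 0$ and $\|v_i\| = \|v_j\| = 1$, $U$ has orthonormal columns, so $\|Uz\| = \|z\|$ for every $z\in\mathbb{R}^2$. Put $P = U^\top C U$. Then $\cos\theta = |P_{12}|/\sqrt{P_{11}P_{22}}$, because $\langle C^{1/2}v_i, C^{1/2}v_j\rangle = v_i^\top C v_j = P_{12}$ and $\|C^{1/2}v_i\|^2 = P_{11}$, $\|C^{1/2}v_j\|^2 = P_{22}$. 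Every Rayleigh quotient of $P$ equals $(Uz)^\top C (Uz)/\|Uz\|^2$, a Rayleigh quotient of $C$, so by the variational characterization of eigenvalues the eigenvalues $\lambda_+ \ge \lambda_- > 0$ of $P$ satisfy $\lambda_-,\lambda_+ \in [\lambda_{\min}(C),\lambda_{\max}(C)]$; in particular $P \succ 0$ and $\lambda_+/\lambda_- \le \kappa$.

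Next I would diagonalize $P = Q\,\mathrm{diag}(\lambda_+,\lambda_-)\,Q^\top$ with $Q$ orthogonal, so that a direct computation gives $P_{12} = \tfrac12(\lambda_+-\lambda_-)\sin 2\phi$ for the corresponding rotation angle $\phi$, hence $|P_{12}| \le \tfrac12(\lambda_+-\lambda_-)$. Writing $P_{11}P_{22} = \det P + P_{12}^2 = \lambda_+\lambda_- + P_{12}^2$ and using that $x\mapsto x/(\lambda_+\lambda_- + x)$ is increasing on $[0,\infty)$,
\begin{align}
\cos^2\theta \;=\; \frac{P_{12}^2}{\lambda_+\lambda_- + P_{12}^2} \;\le\; \frac{\tfrac14(\lambda_+-\lambda_-)^2}{\lambda_+\lambda_- + \tfrac14(\lambda_+-\lambda_-)^2} \;=\; \Big(\frac{\lambda_+-\lambda_-}{\lambda_++\lambda_-}\Big)^2 .
\end{align}
Since $t\mapsto (t-1)/(t+1)$ is increasing on $[1,\infty)$ and $\lambda_+/\lambda_-\le\kappa$, the right-hand side is at most $\big((\kappa-1)/(\kappa+1)\big)^2$, whence $\sin^2\theta \ge 1-\big((\kappa-1)/(\kappa+1)\big)^2 = 4\kappa/(\kappa+1)^2$.

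Finally, the argument of $\arccos$ lies in $[0,1]$, so $\theta\in[0,\tfrac{\pi}{2}]$ and $\theta \ge \sin\theta \ge 2\sqrt{\kappa}/(\kappa+1)$. Because $\kappa\ge 1$ we have $16\sqrt{\kappa}\,(1+\kappa)^{1/4} \ge 16 > 1$, which after dividing by $(1+\kappa)$ is precisely $2\sqrt{\kappa}/(\kappa+1) \ge \tfrac18(1+\kappa)^{-5/4}$, completing the proof. I expect the only delicate point to be the reduction step — checking that the $2\times2$ compression $P = U^\top C U$ inherits the condition-number bound from $C$; everything after that is a short one-variable estimate, and the closing numerical inequality is deliberately loose (the argument in fact yields the sharper $\theta \ge 2\sqrt\kappa/(\kappa+1)$, which one may keep if the weaker form is not needed downstream).
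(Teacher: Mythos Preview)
Your proof is correct and in fact yields the sharper bound $\theta \ge 2\sqrt{\kappa}/(\kappa+1)$, which dominates the stated $\tfrac18(1+\kappa)^{-5/4}$ for all $\kappa\ge 1$.

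The route, however, differs substantially from the paper's. Both arguments begin by reducing to the two-dimensional span of $v_i,v_j$, but thereafter they diverge. The paper parameterizes the square root $\hat C^{1/2}=\begin{pmatrix}a&c\\c&b\end{pmatrix}$ directly, writes the cosine as $c(a+b)/\sqrt{(a^2+c^2)(c^2+b^2)}$, and then pushes through a chain of monotonicity estimates in the ratio $b/a$, the determinant lower bound $\gamma$, and the trace, eventually arriving at $\cos^2\theta<1-(\gamma/\mathrm{tr}^2)^2/(1+\tau)^3$ with $\tau=\sqrt\kappa$, followed by the spectral bound $\gamma/\mathrm{tr}^2>1/(4\tau^2)$. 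Your approach instead works with the $2\times2$ compression $P=U^\top C U$ (rather than its square root), diagonalizes it, and exploits the exact identities $|P_{12}|\le\tfrac12(\lambda_+-\lambda_-)$ and $P_{11}P_{22}=\lambda_+\lambda_-+P_{12}^2$ to obtain the closed form $\cos^2\theta\le\big((\kappa-1)/(\kappa+1)\big)^2$. This is both shorter and tighter: the paper loses roughly a factor of $\kappa^{3/4}$ in the exponent by tracking entries of $C^{1/2}$ rather than eigenvalues of $P$. The paper's argument is self-contained but heavier; yours is more structural and makes the dependence on $\kappa$ transparent.
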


\begin{proof}

The angle between two vectors is a function of their relation to each other in the two-dimensional plane defined by their pair. Therefore, without loss of generality, consider two vectors $u = \begin{bmatrix} 1 & 0 \end{bmatrix}$ and $v = \begin{bmatrix} 0 & 1 \end{bmatrix}$ and consider the effect of an arbitrary positive definite matrix $\hat{C}$ on their angle. For ease of exposition, denote $\tau=\kappa^{\frac{1}{2}}$ the condition number of $C^{\frac{1}{2}}$.

Let $\hat{C}^{\frac{1}{2}} = \begin{bmatrix} a & c \\ c & b \end{bmatrix}$ be the unique positive definite square root of $\hat{C}$ where $a$ and $b$ are positive and the determinant $ab - c^2 > \gamma > 0$. We aim to show that the magnitude of the angle between $u$ and $v$ under the generalized inner product $\langle \cdot, \cdot \rangle_C$ is lower bounded by a finite, positive quantity dependent on the properties of $C$.

Consider
\begin{align}
    \hat{C}^{\frac{1}{2}} u &= \begin{bmatrix} a \\ c \end{bmatrix} ,& \hat{C}^{\frac{1}{2}} v &= \begin{bmatrix} c \\ b \end{bmatrix},
    \\ ||\hat{C}^{\frac{1}{2}} u|| &= \sqrt{a^2 + c^2} ,& ||\hat{C}^{\frac{1}{2}} v|| &= \sqrt{c^2 + b^2},
    \\ \langle \hat{C}^{\frac{1}{2}} u, \hat{C}^{\frac{1}{2}} v \rangle &= c (a+b). &&
\end{align}

Then
\begin{align}
    \frac{\langle \hat{C}^{\frac{1}{2}} u, \hat{C}^{\frac{1}{2}} v \rangle}{||C^{\frac{1}{2}} u|| ||C^{\frac{1}{2}} v||} &= \frac{c(a+b)}{\sqrt{(a^2 + c^2) (c^2 + b^2)}}
    \\ &\overbrace{<}^{\text{ratio is inc. in $c$ for $ab > c^2$}} \frac{\sqrt{ab - \gamma}(a+b)}{\sqrt{(a^2 + ab - \gamma) (ab + b^2 - \gamma)}} %
    \\ &\overbrace{=}^{\text{div. num. \& den. by $\frac{1}{a^2}$}} \frac{\sqrt{\frac{b}{a} - \frac{\gamma}{a^2}}(1+\frac{b}{a})}{\sqrt{(1 + \frac{b}{a} - \frac{\gamma}{a^2}) (\frac{b}{a} + (\frac{b}{a})^2 - \frac{\gamma}{a^2})}}
    \\ &\overbrace{\le}^{\text{ratio is inc. in $\frac{b}{a}$}} \frac{\sqrt{\tau - \frac{\gamma}{a^2}}(1+\tau)}{\sqrt{(1 + \tau - \frac{\gamma}{a^2}) (\tau + \tau^2 - \frac{\gamma}{a^2})}}  %
    \\ &\overbrace{<}^{\text{ratio is inc. in $\tau-\frac{\gamma}{a^2}$}} \frac{\sqrt{\tau - \frac{\gamma}{\trace^2}}(1+\tau)}{\sqrt{(1 + \tau - \frac{\gamma}{\trace^2}) (\tau + \tau^2 - \frac{\gamma}{\trace^2})}}. %
\end{align}

Note that $\frac{b}{a}$ is a lower bound on the condition number of $\hat{C}$; $\frac{b}{a}$ is equal to the condition number of $\hat{C}^{\frac{1}{2}}$ when $c=0$ (assuming $b>a$, otherwise, $\frac{b}{a} < \frac{a}{b}$ clearly), and the condition number can only increase as $c$ deviates from $0$. Lastly, note that the condition number of $\hat{C}^{\frac{1}{2}}$ is upper bounded by $\tau$, the condition number of $C^{\frac{1}{2}}$. Recall, by replacing $\frac{b}{a}$ with a larger number $\tau$ and $\frac{\gamma}{a^2}$ with a strictly smaller number $\frac{\gamma}{\trace^2}$, $\frac{b}{a} - \frac{\gamma}{a^2}$ implies that $\tau > \frac{\gamma}{\trace^2}$.

Now consider
\begin{align}
    \Big( \frac{\langle C^{\frac{1}{2}} u, C^{\frac{1}{2}} v \rangle}{||C^{\frac{1}{2}} u|| ||C^{\frac{1}{2}} v||} \Big)^2 &< \frac{(\tau - \frac{\gamma}{\trace^2})(1+\tau)^2}{(1 + \tau - \frac{\gamma}{\trace^2}) (\tau + \tau^2 - \frac{\gamma}{\trace^2})}
    \\ &= \frac{(\tau - \frac{\gamma}{\trace^2})(1+\tau)^2}{(\tau - \frac{\gamma}{\trace^2})(1+\tau)^2 + (\frac{\gamma}{\trace^2})^2}
    \\ &= 1 - \frac{(\frac{\gamma}{\trace^2})^2}{(\tau - \frac{\gamma}{\trace^2})(1+\tau)^2 + (\frac{\gamma}{\trace^2})^2}
    \\ &\le 1 - \frac{(\frac{\gamma}{\trace^2})^2}{(\tau)(1+\tau)^2 + (1 + \tau)^2}
    \\ &= 1 - \frac{(\frac{\gamma}{\trace^2})^2}{(1+\tau)^3}.
\end{align}

We can also simplify the fraction $\frac{\gamma}{\trace^2}$. $\gamma$ is a lower bound on the determinant, which is equal to the product of eigenvalues of $\hat{C}^{\frac{1}{2}}$. This is lower bounded by $\lambda_{min}^2$ for any two-dimensional subspace, therefore, $\lambda_{min}^2 < \gamma$. Furthermore, the trace of the matrix is equal to the sum of its eigenvalues. This is upper bounded by $2\lambda_{max}$ for any two-dimensional subspace, therefore $\frac{\gamma}{\trace^2} > \frac{\lambda_{min}^2}{4\lambda_{max}^2} = \frac{1}{4\tau^2} > \frac{1}{4(\tau + 1)^2}$.

Note that $\vert \theta \vert \ge \vert \sin(\theta) \vert$. Therefore,
\begin{align}
    \vert \theta \vert &\ge \vert \sin(\theta) \vert = \sqrt{1 - \cos^2(\theta)}
    \\ &> \sqrt{\frac{(\frac{\gamma}{\trace^2})^2}{(1+\tau)^3}}
    \\ &> \frac{1}{2}\sqrt{(1+\tau)^{-5}}
    \\ &= \frac{1}{2}(1+\tau)^{-\frac{5}{2}}.
\end{align}

Clearly this bound is loose as it implies $\theta$ is only greater than $2^{-\frac{7}{2}}$ for $C = I$ ($\tau=1$) whereas we know in that case that $\theta = \frac{\pi}{2}$. However, this bound serves its purpose of establishing a finite impact of $C$ on the orthogonality of the original two vectors, i.e., if $u$ and $v$ are orthogonal, their angle as measured under a generalized inner product by $C \succ 0$ cannot be $0$.

Replacing $\tau$ with $\kappa^{\frac{1}{2}}$, we can further simplify the bound to
\begin{align}
    \vert \theta \vert &> \frac{1}{2}(1+\tau)^{-\frac{5}{2}}
    \\ &\ge \frac{1}{8}(1+\kappa)^{-\frac{5}{4}}.
\end{align}
\end{proof}

\begin{lemma}\label{angle_blowup}
The angle between a pair of nearly parallel vectors ($\vert \theta \vert$ is $\mathcal{O}(\epsilon)$ with $\epsilon \ll 1$) when instead measured under a general positive definite matrix $C$ is upper bounded by $\mathcal{O}(\kappa^{\frac{3}{4}} \epsilon^{\frac{1}{2}})$.
\end{lemma}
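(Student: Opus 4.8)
The plan is to recognise that replacing the Euclidean inner product by $\langle\cdot,\cdot\rangle_C=\langle C^{1/2}\cdot,C^{1/2}\cdot\rangle$ is the same as pre-composing the angle with the linear map $x\mapsto C^{1/2}x$, and then to bound how much this map can distort angles via a uniform Lipschitz estimate for the induced self-map of the unit sphere. Since angles are scale invariant, take $u,v\in\mathcal{S}^{d-1}$ with Euclidean angle $\theta$ ($|\theta|=\mathcal{O}(\epsilon)$), and define $f:\mathcal{S}^{d-1}\to\mathcal{S}^{d-1}$ by $f(x)=C^{1/2}x/\lVert C^{1/2}x\rVert$; this is well defined because $C\succ0$. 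The angle between $u$ and $v$ measured under $C$ equals $\arccos\langle f(u),f(v)\rangle$, i.e. the great-circle distance $d_{\mathcal{S}}(f(u),f(v))$ on $\mathcal{S}^{d-1}$, whereas $|\theta|=d_{\mathcal{S}}(u,v)$; and for $\theta$ small we have $\langle C^{1/2}u,C^{1/2}v\rangle>0$, so $\theta_C<\pi/2$ and no absolute value is needed. It therefore suffices to prove that $f$ is $\sqrt{\kappa}$-Lipschitz for the geodesic metric.

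For this I would differentiate $f$: for $h$ in the tangent space at $u$ (so $\langle u,h\rangle=0$),
\begin{align}
  df_u(h)=\frac{1}{\lVert C^{1/2}u\rVert}\Big(C^{1/2}h-\frac{\langle C^{1/2}u,C^{1/2}h\rangle}{\lVert C^{1/2}u\rVert^{2}}\,C^{1/2}u\Big),
\end{align}
which is $\lVert C^{1/2}u\rVert^{-1}$ times the orthogonal projection of $C^{1/2}h$ onto $(C^{1/2}u)^{\perp}$. Hence $\lVert df_u(h)\rVert\le\lVert C^{1/2}h\rVert/\lVert C^{1/2}u\rVert\le\sqrt{\lambda_{\max}(C)/\lambda_{\min}(C)}\,\lVert h\rVert=\sqrt{\kappa}\,\lVert h\rVert$, uniformly in $u$. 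Since the sphere is a length space, this gives $d_{\mathcal{S}}(f(u),f(v))\le\sqrt{\kappa}\,d_{\mathcal{S}}(u,v)$ (integrate the bound along the minimising geodesic from $u$ to $v$). So the angle under $C$ is at most $\sqrt{\kappa}\,|\theta|=\mathcal{O}(\sqrt{\kappa}\,\epsilon)$; and since $\kappa\ge1$ and $\epsilon\ll1$ give $\epsilon\le1\le\sqrt{\kappa}$, we conclude $\sqrt{\kappa}\,\epsilon\le\kappa^{3/4}\epsilon^{1/2}$, which is the stated bound. (The argument actually yields the sharper factor $\sqrt{\kappa}$, which is best possible, attained as $\theta\to0$ for $C=\mathrm{diag}(\kappa,1)$ with $u,v$ near the second coordinate axis.)

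An alternative, closer to the style of Lemma~\ref{orth_angle_change}, restricts to the plane $W=\mathrm{span}(u,v)$ and uses the Gram-determinant identity $(u^{\top}Cu)(v^{\top}Cv)-(u^{\top}Cv)^{2}=\det(C|_W)\sin^{2}\theta$ for unit $u,v$ at angle $\theta$, giving $\sin^{2}\theta_C=\det(C|_W)\sin^{2}\theta\,/\,\big((u^{\top}Cu)(v^{\top}Cv)\big)$. If $\mu_1\ge\mu_2>0$ are the eigenvalues of the $2\times2$ matrix $C|_W$, then $\det(C|_W)=\mu_1\mu_2$ and $u^{\top}Cu,\,v^{\top}Cv\ge\mu_2$, so $\sin^{2}\theta_C\le(\mu_1/\mu_2)\sin^{2}\theta\le\kappa\sin^{2}\theta$, and then $\theta_C=\mathcal{O}(|\sin\theta_C|)=\mathcal{O}(\sqrt{\kappa}\,\epsilon)$, using that $\theta_C\in[0,\pi/2)$ once $\tan\theta<1/\kappa$, which holds for $\epsilon$ small.

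The ideas here are light; the parts that need care---and the likely source of the slack (the square root and the extra quarter-power of $\kappa$) in this deliberately loose bound---are bookkeeping. In the Lipschitz version one should verify that the ``angle under $C$'' is genuinely a geodesic distance and that integrating $\lVert df\rVert$ along the connecting arc is legitimate on the sphere, which is unproblematic because only the upper estimate through a short geodesic is required. In the two-dimensional version one must bound the denominator $(u^{\top}Cu)(v^{\top}Cv)$ using the smaller eigenvalue $\mu_2$ of $C|_W$ rather than $\lambda_{\min}(C)$ (they differ, and using $\lambda_{\min}(C)$ would lose a factor), and check $\cos\theta_C>0$ before converting $\sin\theta_C$ into an angle.
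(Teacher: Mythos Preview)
Your proposal is correct and in fact proves a strictly stronger result than the paper does. The paper reduces to the two-dimensional span of $u,v$, writes $C^{1/2}$ explicitly as a $2\times2$ symmetric matrix $\bigl[\begin{smallmatrix}a&c\\c&b\end{smallmatrix}\bigr]$, takes $u=[1,0]$ and $v=[\sqrt{1-\epsilon^2},\,\epsilon]$, and then pushes through roughly a page of elementary inequalities on the ratio $\langle C^{1/2}u,C^{1/2}v\rangle/(\|C^{1/2}u\|\,\|C^{1/2}v\|)$ to reach $|\theta_C|\le 4\sqrt{(1+\kappa)\sqrt{\kappa}\,|\epsilon|+(\kappa^2/4+1)\epsilon^2}=\mathcal{O}(\kappa^{3/4}\epsilon^{1/2})$. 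Your Lipschitz argument bypasses all of that coordinate work: recognising $\theta_C=d_{\mathcal{S}}(f(u),f(v))$ for $f(x)=C^{1/2}x/\|C^{1/2}x\|$ and bounding $\|df_u\|_{\mathrm{op}}\le\sqrt{\kappa}$ immediately gives the sharp linear bound $\theta_C\le\sqrt{\kappa}\,\theta$; your Gram-determinant alternative reaches the same conclusion via $\sin^2\theta_C\le(\mu_1/\mu_2)\sin^2\theta$ and is closer in spirit to the paper's companion Lemma~\ref{orth_angle_change}. What the paper's route buys is complete self-containment with nothing beyond explicit algebra, at the price of a loose bound and heavy bookkeeping; what yours buys is the optimal dependence on both $\kappa$ and $\epsilon$, a coordinate-free argument, and a bound valid for all $\theta$ rather than only asymptotically small ones. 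The paper's $\kappa^{3/4}\epsilon^{1/2}$ is recovered by you only after deliberately weakening your own estimate.
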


\begin{proof}
Let $C^{\frac{1}{2}} = \begin{bmatrix} a & c \\ c & b \end{bmatrix}$ be the unique positive definite square root of $C$ where $a$ and $b$ are positive and the determinant $ab - c^2 > 0$. Consider a vector $u = \begin{bmatrix} 1 & 0 \end{bmatrix}$ and another vector $v = \begin{bmatrix} \sqrt{1 - \epsilon^2} & \epsilon \end{bmatrix}$ that is nearly parallel to $u$, i.e., $\epsilon \ll 1$ (implies $\vert \theta \vert$ is $\mathcal{O}(\epsilon)$). We aim to show that the angle between these two vectors under the generalized inner product $\langle \cdot, \cdot \rangle_C$ is upper bounded by a constant multiple of $\epsilon$ given a small enough $\epsilon$.

Consider
\begin{align}
    C^{\frac{1}{2}} u &= \begin{bmatrix} a \\ c \end{bmatrix} ,& C^{\frac{1}{2}} v &= \begin{bmatrix} a \sqrt{1 - \epsilon}^2 + c \epsilon \\ c \sqrt{1 - \epsilon}^2 + b \epsilon \end{bmatrix},
    \\ ||C^{\frac{1}{2}} u|| &= \sqrt{a^2 + c^2} & ||C^{\frac{1}{2}} v|| &= \sqrt{a^2 (1 - \epsilon^2) + c^2 \epsilon^2 + b^2 \epsilon^2 + c^2 (1 - \epsilon^2)}
    \\ &= a \sqrt{1 + (c/a)^2} ,& &= \sqrt{a^2 + c^2 + \epsilon^2(b^2 - a^2)}
    \\ && &= a \sqrt{1 + (c/a)^2 + \epsilon^2((b/a)^2 - 1)},
\end{align}

and
\begin{align}
    \langle C^{\frac{1}{2}} u, C^{\frac{1}{2}} v \rangle &= a^2 \sqrt{1 - \epsilon^2} + ac\epsilon + c^2 \sqrt{1 - \epsilon^2} bc\epsilon
    \\ &= (a^2 + c^2) \sqrt{1 - \epsilon^2} + (a + b) c\epsilon
    \\ &= a^2 (1 + (c/a)^2) \sqrt{1 - \epsilon^2} + a^2 (1 + (b/a)) (c/a) \epsilon
    \\ &\overbrace{\ge}^{\vert \epsilon \vert < 1} a (1 + (c/a)^2) (1 - \epsilon^2) + a^2 (1 + (b/a)) (c/a) \epsilon.
\end{align}

Then
\begin{align}
    \frac{\langle C^{\frac{1}{2}} u, C^{\frac{1}{2}} v \rangle}{||C^{\frac{1}{2}} u|| ||C^{\frac{1}{2}} v||} &\ge \frac{(1 + (c/a)^2) (1 - \epsilon^2) + (1 + (b/a)) (c/a) \epsilon}{(1 + (c/a)^2) \sqrt{1 + \epsilon^2\frac{(b/a)^2 - 1}{(c/a)^2 + 1}}}
    \\ &= \frac{1}{\sqrt{1 + \epsilon^2\frac{(b/a)^2 - 1}{(c/a)^2 + 1}}} + \frac{(1 + (b/a)) (c/a) \epsilon - (1 + (c/a)^2) \epsilon^2}{(1 + (c/a)^2) \sqrt{1 + \epsilon^2\frac{(b/a)^2 - 1}{(c/a)^2 + 1}}}
    \\ &\overbrace{\ge}^{b/a \le \kappa} \frac{1}{\sqrt{1 + \epsilon^2 \kappa^2}} + \frac{(1 + (b/a)) (c/a) \epsilon - (1 + (c/a)^2) \epsilon^2}{(1 + (c/a)^2) \sqrt{1 + \epsilon^2\frac{(b/a)^2 - 1}{(c/a)^2 + 1}}}
    \\ &= 1 - (1 - \frac{1}{\sqrt{1 + \epsilon^2 \kappa^2)}}) + \frac{(1 + (b/a)) (c/a) \epsilon - (1 + (c/a)^2) \epsilon^2}{(1 + (c/a)^2) \sqrt{1 + \epsilon^2\frac{(b/a)^2 - 1}{(c/a)^2 + 1}}}
    \\ &\overbrace{\ge}^{c \ge -\sqrt{ab}}  1 - (1 - \frac{1}{\sqrt{1 + \epsilon^2 \kappa^2)}}) - \frac{(1 + (b/a)) \sqrt{(b/a)} \vert \epsilon \vert - (1 + (c/a)^2) \epsilon^2}{(1 + (c/a)^2) \sqrt{1 + \epsilon^2\frac{(b/a)^2 - 1}{(c/a)^2 + 1}}}
    \\ &= 1 - (1 - \frac{1}{\sqrt{1 + \epsilon^2 \kappa^2)}}) - \frac{(1 + (b/a)) \sqrt{(b/a)} \vert \epsilon \vert}{(1 + (c/a)^2) \sqrt{1 + \epsilon^2\frac{(b/a)^2 - 1}{(c/a)^2 + 1}}} - \frac{\epsilon^2}{\sqrt{1 + \epsilon^2\frac{(b/a)^2 - 1}{(c/a)^2 + 1}}}
    \\ &\overbrace{\ge}^{b/a \le \kappa} 1 - (1 - \frac{1}{\sqrt{1 + \epsilon^2 \kappa^2}}) - \frac{(1 + \kappa) \sqrt{\kappa} \vert \epsilon \vert}{\sqrt{1 - \epsilon^2}} - \frac{\epsilon^2}{\sqrt{1 - \epsilon^2}}
    \\ &\overbrace{\ge}^{1 - \frac{1}{\sqrt{1 + z}} \le \frac{z}{2} , \epsilon^2 < \frac{1}{\kappa^2}} 1 - \frac{\epsilon^2}{2} \kappa^2 - \frac{(1 + \kappa) \sqrt{\kappa} \vert \epsilon \vert}{\sqrt{1 - \epsilon^2}} - \frac{\epsilon^2}{\sqrt{1 - \epsilon^2}}
    \\ &\overbrace{\ge}^{\epsilon < \frac{1}{2}} 1 - 2 (1 + \kappa) \sqrt{\kappa} \vert \epsilon \vert - \frac{\epsilon^2}{2} (\kappa^2 - 1) - 2\epsilon^2.
\end{align}

Note that $\cos(\theta) \le 1 - \frac{1}{8} \theta^2$ for $\vert \theta \vert \le \pi$. Then
\begin{align}
    \vert \theta \vert &\le 2 \sqrt{2} \sqrt{1 - \cos(\theta)} \le 2 \sqrt{2} \sqrt{2 (1 + \kappa) \sqrt{\kappa} \vert \epsilon \vert + (\frac{\kappa^2}{2} + 2) \epsilon^2}
    \\ &\le 4 \sqrt{(1 + \kappa) \sqrt{\kappa} \vert \epsilon \vert + (\frac{\kappa^2}{4} + 1) \epsilon^2}.
\end{align}

Therefore, for $\epsilon < \min(\frac{1}{\kappa}, \frac{1}{2})$, $\arccos(\frac{\vert \langle C^{\frac{1}{2}} u, C^{\frac{1}{2}} v \rangle \vert}{||C^{\frac{1}{2}} u|| ||C^{\frac{1}{2}} v||})$ is upper bounded by $\mathcal{O}(\epsilon^{\frac{1}{2}} \kappa^{\frac{3}{4}})$.

\end{proof}
\section{Asymptotic Convergence}\label{app:conv}

We carryout a proof of convergence of \Algref{alg:det_geg} in the deterministic setting and a partial proof of \Algref{alg:gen_eg} with further discussion.

\subsection{Asymptotic Convergence of Determinstic Update}\label{app:conv_det}

We now give the convergence proof of \Algref{alg:det_geg} using the theoretical results established above.

\begin{reptheorem}{thm:global_conv}[Deterministic / Full-batch Global Convergence] %
Given a symmetric matrix $A$ and symmetric positive definite matrix $B$ where the top-$k$ eigengaps of $B^{-1}A$ are positive along with a square-summable, not summable step size sequence $\eta_t$ (e.g., $1/t$), \Algref{alg:det_geg} converges to the top-$k$ eigenvectors asymptotically ($\lim_{T \rightarrow \infty}$) with probability $1$.
\end{reptheorem}
\begin{proof}
Assume none of the $\hat{v}_i$ are initialized to an angle exactly at the minimum of their utility. This is a set of vectors with Lebesgue measure $0$, therefore, the assumption holds w.p.1.

Denote the ``update field'' $H(\hat{V})$ to match the work of~\citep{shah2019stochastic}. $H(\hat{V})$ is simply the concatenation of all players’ Riemannian update rules, i.e., all players updating in parallel using their Riemannian updates:
\begin{align}
H(\hat{V}) &= [\Delta_1, \ldots, \Delta_k] : \mathbb{R}^{kd} \rightarrow \mathbb{R}^{kd}
\end{align}
where $\Delta_i$ is defined in~\eqref{eq:update} and $\hat{V}$ represents the set of all $\hat{v}_i$.

A Riemannian gradient ascent step (with retractions) is then given by the following update step:
\begin{align}
    \hat{V}(t+1) &\leftarrow \hat{V}(t) + \eta_t H(\hat{V}(t))
    \\ \hat{v}_i(t+1) &\leftarrow \hat{v}_i(t+1) / ||\hat{v}_i(t+1)|| \quad \forall i.
\end{align}

By Lemma~\ref{lemma:right_fp}, $v_1$ is the unique fixed point of $\hat{v}_1$'s update. And by Theorem~\ref{thm:error_prop}, convergence of $v_1$ to within $\mathcal{O}(\epsilon)$ of its fixed point contributes to a mis-specification of children $\hat{v}_{j>1}$'s fixed point by $\mathcal{O}(\sqrt{\epsilon})$. Critically, this mis-specification is shrinking in $\epsilon$ so that as $\hat{v}_1$ nears its fixed point, so may its children. This chain of reasoning applies for all $\hat{v}_i$.

The result is then obtained by applying Theorem 7 of~\citet{shah2019stochastic} with the following information: \textbf{A0}) the unit-sphere is a compact manifold with an injectivity radius of $\pi$ which implies the injectivity radius of the manifold of the game (the product space of $k$ unit-spheres) is also finite, \textbf{A1}) the update field is smooth (analytic) by Lemma~\ref{lemma:smooth}, \textbf{A2}) we assume a square-summable, not summable step size, \textbf{A3}) we assume the full-batch (noiseless) setting so the update ``noise" clearly constitutes a bounded martingale difference sequence, and \textbf{A4}) the iterates remain bounded because they are constrained to the unit-sphere.

Formally, for any $T > 0$,
\begin{align}
    \lim_{s \rightarrow \infty} \sup_{t \in [s, s+T]} d(\hat{V}(t), \hat{V}^s(t)) \rightarrow a.s.,
\end{align}
where $d(\cdot, \cdot)$ is the Riemannian distance on the (product space of) sphere and $\hat{V}^s(t)$ denotes the continuous time trajectory of $H(\hat{V})$ starting from $\hat{V}(s)$.
\end{proof}

\subsection{Asymptotic Convergence of Stochastic Update}\label{app:conv_sto}

There are two primary issues with extending the asymptotic convergence guarantee in Theorem~\ref{thm:global_conv} to \Algref{alg:gen_eg}. The first is that the joint parameter space includes $\hat{v}_i \in \mathcal{S}^{d-1}$ and $[B\hat{v}]_i \in \mathbb{R}^d$. The unit-sphere, $\mathcal{S}^{d-1}$, is a compact Riemmanian manifold. While $\mathbb{R}^d$ is a Riemannian manifold, it is not compact. This violates assumptions \textbf{A0} and \textbf{A4} above. The second issue is that the vector field $H(\hat{V}; [B\hat{V}])$ is not smooth due to the clipped denominator terms of $y_j$ (see line~\ref{eq:clipping} of \Algref{alg:gen_eg}). We can easily fix this issue with a change of variables, defining $\langle \hat{v}_i, B \hat{v}_i \rangle$ in log-space. This results in \Algref{alg:gen_eg_mod}.

Specifically, define $[\langle v, B v \rangle]_j = e^{\log(\rho) + (\log(\nu) - \log(\rho))\texttt{sigmoid}(z_j)}$ where $z_j$ is a newly introduced auxiliary variable. The relevant gradient with respect $z_j$ is $\nabla_{z_j} = -(\langle v_j, B v_j \rangle - [\langle v, B v \rangle]_j) \cdot [\langle v, B v \rangle]_j \cdot (\log \nu - \log \rho) \cdot \texttt{sigmoid}(z_j) \cdot (1 - \texttt{sigmoid}(z_j))$.

Regarding the still unresolved first issue, we could constraint $[B\hat{v}]_i$ to a ball with radius $\lambda_{\max}(B)$ centered at the origin, which is a convex set. Note that while a ball in $\mathbb{R}^d$ is compact, it is not a Riemannian manifold anymore. A few works have developed theory for the setting that mixes convex and Riemannian optimization~\citep{liu2017accelerated,goyal2019sampling}. Intuitively, we do not expect issues arising in our setting from the mixture of feasible sets, however, progress towards theoretic results takes time. We conjecture that \Algref{alg:gen_eg_mod} is provably asymptotically convergent, although \Algref{alg:gen_eg} defined with clipping is a bit more practical.

\begin{algorithm}[!ht]
\begin{algorithmic}[1]
    \STATE Given: paired data streams $X_t \in \mathbb{R}^{b \times d_x}$ and $Y_t \in \mathbb{R}^{b \times d_y}$, number of parallel machines $M$ per player (minibatch size per machine $b'=\frac{b}{M}$), step size sequence $\eta_t$, scalar $\rho$ lower bounding $\lambda_{min}(B)$, scalar, $\nu$ upper bounding $\lambda_{max}(B)$, and number of iterations $T$.
    \STATE $\hat{v}_i \sim \mathcal{S}^{d-1}$, i.e., $\hat{v}_i \sim \mathcal{N}(\mathbf{0}_d, \mathbf{I}_d); \hat{v}_i \leftarrow \hat{v}_i / ||\hat{v}_i|$ for all $i$
    \STATE $[B\hat{v}]_i \in \mathbb{R}^d \leftarrow \hat{v}_i$ for all $i$
    \STATE $z_i^0 \in \mathbb{R} \leftarrow 0$ for all $i$
    \FOR{$t = 1: T$}
        \PARFOR{$i = 1: k$}
            \PARFOR{$m = 1: M$}
            \STATE Construct $A_{tm}$ and $B_{tm}$
            \STATE $[\langle v, B v \rangle]_j = e^{\log(\rho) + (\log(\nu) - \log(\rho)) \sigma(z_j)}$
            \STATE $\hat{y}_j = \frac{\hat{v}_j}{\sqrt{[\langle v, B v \rangle]_j}}$
            \STATE $[B\hat{y}]_j = \frac{[B\hat{v}]_j}{\sqrt{[\langle v, B v \rangle]_j}}$
            \STATE $\texttt{rewards} \leftarrow (\hat{v}_i^\top B_{tm} \hat{v}_i) A_{tm} \hat{v}_i - (\hat{v}_i^\top A_{tm} \hat{v}_i) B_{tm} \hat{v}_i$
            \STATE $\texttt{penalties} \leftarrow \sum_{j < i} (\hat{v}_i^\top A \hat{y}_j) \big[ \langle \hat{v}_i, B_{tm} \hat{v}_i \rangle [B \hat{y}]_j - \langle \hat{v}_i, [B \hat{y}]_j \rangle B_{tm} \hat{v}_i \big]$
            \STATE $\tilde{\nabla}_{im} \leftarrow \texttt{rewards} - \texttt{penalties}$
            \STATE $\nabla^{Bv}_{im} = (B_{tm}\hat{v}_i - [B\hat{v}]_i)$
            \STATE $\nabla^{z}_{im} = (\langle v_i, [Bv]_i \rangle - [\langle v, B v \rangle]_i) \cdot [\langle v, B v \rangle]_i \cdot (\log \nu - \log \rho) \cdot \sigma(z_i) \cdot (1 - \sigma(z_i))$
            \ENDPARFOR
        \STATE $\tilde{\nabla}_{i} \leftarrow \frac{1}{M} \sum_{m} [ \tilde{\nabla}_{im} ]$
        \STATE $\hat{v}_i' \leftarrow \hat{v}_i + \eta_t \tilde{\nabla}_{i}$
        \STATE $\hat{v}_i \leftarrow \frac{\hat{v}_i'}{|| \hat{v}_i' ||}$
        \STATE $\nabla^{Bv}_{i} \leftarrow \frac{1}{M} \sum_m [\nabla^{Bv}_{im}]$
        \STATE $[B\hat{v}]_i \leftarrow [B\hat{v}]_i + \gamma_t \nabla^{Bv}_{i}$
        \STATE $\nabla^{z}_{i} \leftarrow \frac{1}{M} \sum_m [\nabla^{z}_{im}]$
        \STATE $z_i \leftarrow z_i + \gamma_t \nabla^{z}_{i}$
        \ENDPARFOR
    \ENDFOR
    \STATE return all $\hat{v}_i$
\end{algorithmic}
\caption{Smooth Stochastic \sgeg{}}
\label{alg:gen_eg_mod}
\end{algorithm}

\section{Alternative Parallelized Implementation}\label{app:parallel}

As mentioned in Section~\ref{sec:experiments}, we plan to open source our implementation, specifically the implementation used to conduct the neural CCA experiments described in Section~\ref{sec:neural_cca}. The specific parallelization we used was different than that implied by \Algref{alg:gen_eg}. Instead, we parallelized the estimation of the matrix-vector products $A\hat{v}_i$ and $B\hat{v}_i$ for all $i$ and then aggregated this information across machines. Figure~\ref{fig:parallel_diagram} provides a diagram illustrating how data and algorithmic operations are distributed.

\begin{figure}[ht!]
    \centering
    \includegraphics[width=0.8\textwidth]{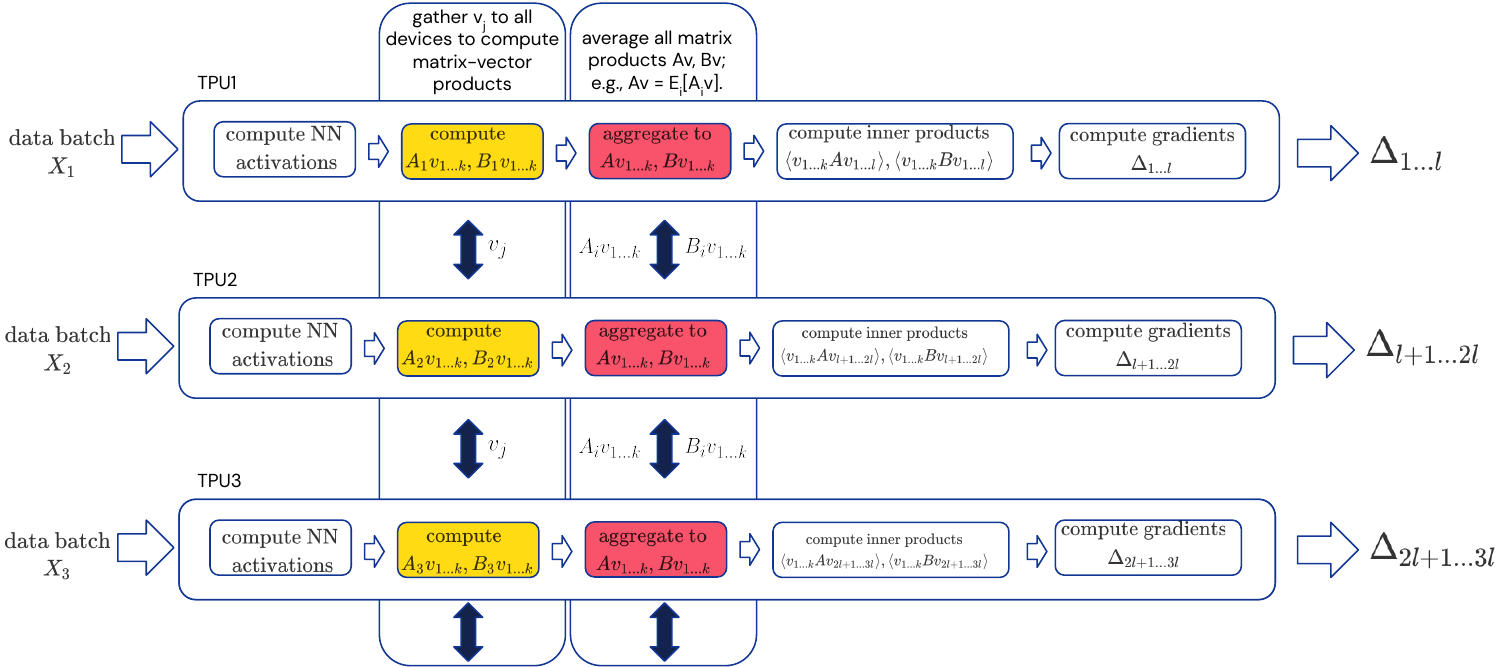}
    \caption{Parallelization of \sgeg{} implementation for neural CCA experiments in Section~\ref{sec:neural_cca}.}
    \label{fig:parallel_diagram}
\end{figure}
\section{Additional Experiments}\label{app:more_experiments}

\subsection{Regularization Effects of Stochastic Approximation}\label{app:more_experiments:sgd_reg}
In pursuit of understanding the regularization benefits of our stochastic approximation approach with a fixed step size, we explore various ways of regularizing the matrices $A$ and $B$ prior to calling \texttt{scipy.linalg.eigh(A, B)} to see if we can achieve a similar solution. Figure~\ref{fig:more_ica} explores various parameter settings, but none adequately recover the original signals.

\begin{figure}[ht!]
    \centering
    \includegraphics[width=0.9\textwidth]{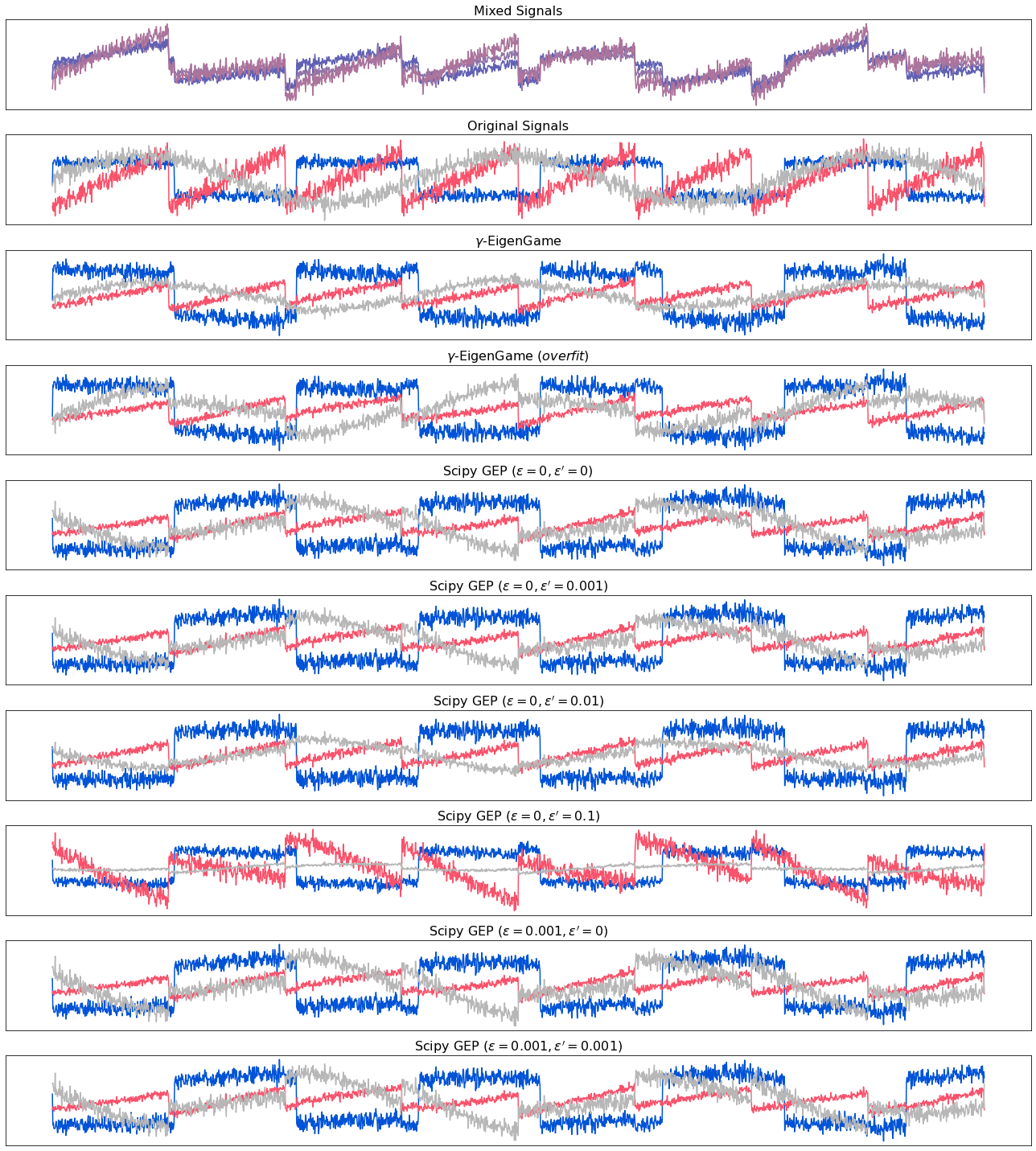}
    \caption{Figure~\ref{fig:ica} repeated with additional regularized versions of \texttt{scipy.linalg.eigh}.}
    \label{fig:more_ica}
\end{figure}

We parameterize our regularizations as follows. Let $C = \mathbb{E}_t[x_t x_t^\top]$. Then
\begin{align}
    A& = \mathbb{E}_t[\langle x_t, x_t \rangle x_t x_t^\top] - tr(C + \epsilon) (C + \epsilon) - 2 (C + \epsilon)^2 & B &= C + \epsilon'.
\end{align}

\subsection{\textcolor{highlight}{Parallel vs Sequential Learning}}\label{app:more_experiments:seq_vs_par}

\begin{figure}[ht!]
    \centering
    \includegraphics[width=0.325\textwidth]{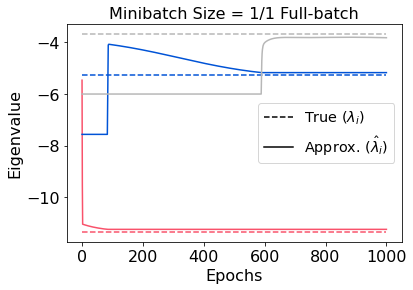}
    \includegraphics[width=0.325\textwidth]{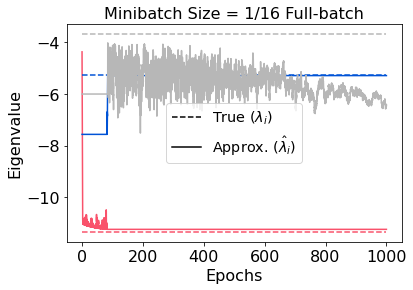}
    \includegraphics[width=0.325\textwidth]{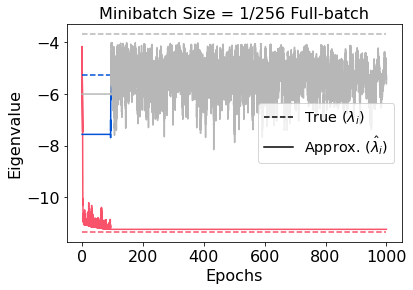}
    \caption{In contrast to the parallel learning approach we propose in~\Algref{alg:gen_eg}, here, we explore a sequential, deflation-inspired approach where each eigenvector completes learning only once its Rayleigh quotient (eigenvalue) is within $0.1$ of the true value (we chose this value of $0.1$ because full batch EigenGame (Figure~\ref{fig:ica} (left)) reaches at least $0.1$ accuracy for all $3$ eigenvalues by the end of training). Once this eigenvector has completed learning, the next eigenvector then begins learning. This sequential approach fails to learn the third eigenvalue to within the $0.1$ threshold in both minibatch settings.}
    \label{fig:par_ica}
\end{figure}

The parallel approach provides a few advantages over the sequential approach. 1) Intuitively, the parallel approach is similar to the sequential approach but with ``warm starting''. Child eigenvectors are allowed to learn while their parents are learning, which puts them in a good position to reach their correct directions once their parents have learned. In contrast, a sequential approach would randomly initialize the child once the parents have converged, leaving the child to traverse a longer geodesic to reach its true destination. 2) How do you know when the parents are done learning? You would need to measure convergence of the eigenvectors and this is difficult in the stochastic setting. You could use a running mean of the Riemannian gradient norm or of the difference in successive Rayleigh quotients, but this is approximate. This is an interesting challenge for future research.

In Figure~\ref{fig:par_ica}, we assume we know the true eigenvalues and use this information to decide when to deflate. In this experiment, the first two eigenvalues are approximated well enough, but this level of accuracy is not high enough to allow learning the third eigenvector accurately. This supports our argument where knowing when to stop learning and deflate is a difficult problem because the accuracy of parents affects the learning of children in ways that depend on the spectrum (which is unknown in any practical setting).
\section{Hyperparameters and Experiment Details}\label{app:hyps}

\subsection{ICA}

Details of the unmixing experiment we run can be found on the scikit-learn website: \href{https://scikit-learn.org/stable/auto_examples/decomposition/plot_ica_blind_source_separation.html}{../auto\_examples/decomposition/plot\_ica\_blind\_source\_separation.html}~\citep{scikit-learn}. We solve for top-$3$ \gep{} formulation of ICA using $n=2000$ samples taken from the time series data. Minimal hyperparameter tuning was performed. Learning rates were searched over orders of magnitude (e.g., $0.01, 0.1, 1.0, \ldots$).

\subsubsection{Comparison}

The parameters used for \Algref{alg:gen_eg} (\sgeg{}) in Figure~\ref{fig:ica} are listed in Table~\ref{tab:ica_comparison_eg}. Those for overfitting \sgeg{} to the data are listed in Table~\ref{tab:ica_comparison_eg_overfit}.

\begin{table}[ht!]
    \centering
    \begin{tabular}{|c|c|}
        \multicolumn{2}{c}{Algorithm Parameters} \\ \hline
        batch size $b$ & $\frac{n}{4}$ \\ \hline
        $M$ & $1$ \\ \hline
        \# of iterations (T) & $10^{3} \cdot \frac{n}{b}$ \\ \hline
        $\eta_t$ & $10^{-2} \cdot \frac{b}{n}$ \\ \hline
        $\beta_t$ & $1 \cdot \frac{b}{n}$ \\ \hline
    \end{tabular}
    \vspace{0.2cm}
    \caption{\Algref{alg:gen_eg} hyperparameters for \sgeg{} in Figure~\ref{fig:ica}.}
    \label{tab:ica_comparison_eg}
\end{table}

\begin{table}[ht!]
    \centering
    \begin{tabular}{|c|c|}
        \multicolumn{2}{c}{Algorithm Parameters} \\ \hline
        batch size $b$ & $n$ \\ \hline
        $M$ & $1$ \\ \hline
        \# of iterations (T) & $10^{5}$ \\ \hline
        $\eta_t$ & $10^{-3}$ \\ \hline
        $\beta_t$ & $1$ \\ \hline
    \end{tabular}
    \vspace{0.2cm}
    \caption{\Algref{alg:gen_eg} hyperparameters for \sgeg{} (\emph{overfit}) in Figure~\ref{fig:ica}.}
    \label{tab:ica_comparison_eg_overfit}
\end{table}

\subsubsection{Unbiased}

Each of the plots in Figure~\ref{fig:unbiased_update} uses a different minibatch size $b$; the hyperparameters used for \Algref{alg:gen_eg} are listed in Table~\ref{tab:ica_unbiased_eg} as a function of $b$.

\begin{table}[ht!]
    \centering
    \begin{tabular}{|c|c|}
        \multicolumn{2}{c}{Algorithm Parameters} \\ \hline
        $M$ & $1$ \\ \hline
        \# of iterations (T) & $10^{3} \cdot \frac{n}{b}$ \\ \hline
        $\eta_t$ & $10^{-2} \cdot \frac{b}{n}$ \\ \hline
        $\beta_t$ & $1 \cdot \frac{b}{n}$ \\ \hline
    \end{tabular}
    \vspace{0.2cm}
    \caption{\Algref{alg:gen_eg} hyperparameters for Figure~\ref{fig:unbiased_update}.}
    \label{tab:ica_unbiased_eg}
\end{table}

\subsection{CCA}

Details of both CCA experiments can be found below.

\subsubsection{Comparison}

Hyperparameters for the algorithm proposed in~\citep{meng2021online} are the same as in their paper. Their code is available on github at \href{https://github.com/zihangm/riemannian-streaming-cca}{.../zihangm/riemannian-streaming-cca}. We ran experiments $10$ times to produce the means and standard deviation shading in Figure~\ref{fig:ica}. We run PCA first on the data to remove the subspaces in $X$ and $Y$ with zero variance. We then solve the top-$4$ \gep{} formulation of CCA. Hyperparameters were tuned manually, searching over orders of magnitude (e.g., $0.01, 0.1, 1.0$ and in some cases $0.05, 0.5, 5.0$). We set $\rho=10^{-10}$.

\begin{table}[ht!]
    \centering
    \begin{tabular}{|c|c|}
        \multicolumn{2}{c}{Shared parameters} \\ \hline
        $b$ & $100$ \\ \hline
        $M$ & $1$ \\ \hline
        $T$ & $\frac{n}{b}$ \\ \hline
        \multicolumn{2}{c}{MNIST} \\ \hline
        $\eta_t$ & $0.1$ \\ \hline
        $\beta_t$ & $1.0$ \\ \hline
        \multicolumn{2}{c}{Mediamill} \\ \hline
        $\eta_t$ & $50$ \\ \hline
        $\beta_t$ & $5$ \\ \hline
        \multicolumn{2}{c}{CIFAR-10} \\ \hline
        $\eta_t$ & $0.1$ \\ \hline
        $\beta_t$ & $0.1$ \\ \hline
    \end{tabular}
    \vspace{0.2cm}
    \caption{\Algref{alg:gen_eg} hyperparameters for Figure~\ref{fig:ica}.}
    \label{tab:cca_comparison}
\end{table}

\subsubsection{Neural Network Analysis}

We trained two CNNs for the $d>10^{3}$ and $d>10^{5}$ CCA (top-$1024$ \gep{}) experiments in Figure~\ref{fig:cifar10_analysis}. Details of both architectures are listed in Table~\ref{tab:cnn_arch}.

\begin{table}[ht!]
    \centering
    \begin{tabular}{|c|c|}
        \multicolumn{2}{c}{$d = 2048 > 10^{3}$ - Figure~\ref{fig:cifar10_analysis} (left)} \\ \hline
        Activations Harvested & Last convolutional layer and dense layer \\ \hline
        Conv Layer Output Channels & [64, 32] \\ \hline
        Conv Strides & [2, 1] \\ \hline
        Dense Layer Sizes & [512] \\ \hline
        Total Activations & $d_x = d_y = 1024$ \\ \hline
        \multicolumn{2}{c}{$d = 116736 >10^{5}$ - Figure~\ref{fig:cifar10_analysis} (right)} \\ \hline
        Activations Harvested & All convolutional layers and dense layer \\ \hline
        Conv Layer Output Channels & [128, 256, 512] \\ \hline
        Conv Strides & [1, 1, 1] \\ \hline
        Dense Layer Sizes & [1024] \\ \hline
        Total Activations & $d_x = d_y = 58368$ \\ \hline
    \end{tabular}
    \vspace{0.2cm}
    \caption{CNN architecture parameters for CIFAR-10 Neural CCA experiment.}
    \label{tab:cnn_arch}
\end{table}

We used the hyperparameters listed in Table~\ref{tab:neural_cca} for running \sgeg{}.

\begin{table}[ht!]
    \centering
    \begin{tabular}{|c|c|}
        \multicolumn{2}{c}{Algorithm Parameters} \\ \hline
        $b$ & $2048$ ($256$ per device)  \\ \hline
        $M$ & $8$ ($2\times2$ TPU = $4$ chips, $2$ devices/chip) \\ \hline
        $T$ & $10^{7}$ \\ \hline
        $\eta_t$ & $t_c=10^{5}$, $\eta_0=10^{-4}$, $\eta_T=10^{-6}$ \\ \hline
        $\beta_t$ & $10^{-3}$ \\ \hline
        $\epsilon$ & $10^{-4}$ \\ \hline
        $\rho$ & $10^{-6}$ \\ \hline
    \end{tabular}
    \vspace{0.2cm}
    \caption{\Algref{alg:gen_eg} (with parallelism modifications from Section~\ref{app:parallel}) hyperparameters for Figure~\ref{fig:cifar10_analysis}.}
    \label{tab:neural_cca}
\end{table}

\texttt{Adam}($b1=0.9, b2=0.999, \epsilon=10^{-8}$) was used for learning $\hat{v}_i$. We pair Adam with a learning rate schedule that consists of separate warmup and harmonic decay phases. We have a warmup period for the eigenvectors while the auxiliary variables and mean estimates are learned. During this period, the learning rate increases linearly until it reaches the base learning rate after the period ends (iteration $t_c$). This is followed by a decaying learning rate ($\propto \frac{1}{t + \Delta t}$) which reaches the final learning rate at iteration $T$.

For the purpose of generating the plots, we estimated Rayleigh quotients with a larger batch size than that used to estimate the eigenvectors themselves; specifically, we used $2048$ for evaluation vs $256$ for training.

Both experiments were $100\%$ input bound, meaning the bottleneck in speed was computing neural network activations and passing them in minibatches to our algorithm. Therefore, our current runtimes are not indicative of the complexity of our proposed update rule. Alternatively, one could precompute all activations and save them to disk, however, this is memory intensive and we chose not to do this. For completeness, the $d>10^{3}$ dimensional experiment ran at $4.7$ ms per step on average, and the $10^{5}$ experiment at $30.2$ ms per step.

We do not have a breakdown of the runtime that separates CIFAR-10 data loading from neural network evaluation (computing activations) from EigenGame update from other processes. However, we can share that overall, the $d>10^{3}$ dimensional experiment ran at $4.7$ ms per step on average, and the $10^{5}$ experiment at $30.2$ ms per step.
\section{Runtime Complexity}\label{app:complexity}

\Algref{alg:gen_eg} states under line 1 "Given" that it expects "number of parallel machines $M$ per player". There are $k$ players, established in Section~\ref{sec:game}. This means \Algref{alg:gen_eg} expects $p = kM$ processors. This can also be inferred by noticing the two parallel for-loops (parfor) on lines 5 and 6 of \Algref{alg:gen_eg}. The paragraph on "Computational Complexity and Parallelization" then goes on to consider the case where "each player (model) parallelizes over $M=b$ machines" where $b$ is the batch size. Again, referring back to \Algref{alg:gen_eg}, it is written "minibatch size per machine $b'=\frac{b}{M}$", therefore, $b'=1$. We can now consider the computational cost of each line of \Algref{alg:gen_eg}, which is dominated by the matrix-vector products on lines 8-11. Recall that $A_{tm}$ and $B_{tm}$ are both formed as outer products, e.g., $B_{tm} = X_{tm}^\top X_{tm}$ in ICA with $X_{tm} \in \mathbb{R}^{b' \times d}$. Given that we are considering the case where $b'=1$, $B_{tm}$ can be rewritten as $x_{tm} x_{tm}^\top$ where $x_{tm} \in \mathbb{R}^d$ to make it clear that the $1 \times d$ matrices are vectors. Therefore, all matrix-vector products (and inner products) on lines 8-11 cost $\mathcal{O}(d)$. There is $1$ on line 8, $1$ on line 9, $4$ on line 10 (rewards), and $4k$ on line 11 (penalties) making for a total cost of $\mathcal{O}(dk)$.

\end{appendices}

\end{document}